\newcommand{\A}{\mathcal{A}} 
\newcommand{\F}{\mathcal{F}} 
\newcommand{\OLSR}{\mathcal{O}_{\text{sq}}} 
\newcommand{\X}{\mathcal{X}}
\newcommand{\eps}{\varepsilon}
\newcommand{\regret}{\mathcal{R}}
\newcommand\nnfootnote[2]{%
  \begin{NoHyper}
  \renewcommand\thefootnote{#1}\footnotetext{#2}%
  \addtocounter{footnote}{-1}%
  \end{NoHyper}
}
\theoremstyle{plain}
\newtheorem{theorem}{Theorem}[section]
\newtheorem{lemma}[theorem]{Lemma}
\newtheorem{claim}[theorem]{Claim}
\newtheorem{corollary}[theorem]{Corollary}
\theoremstyle{definition}
\newtheorem{assumption}[theorem]{Assumption}
\theoremstyle{remark}
\title{Regret Bounds for Adversarial Contextual Bandits with General Function Approximation and Delayed Feedback}
\author{%
  Orin Levy$^*$\\
  Blavatnik School of Computer Science\\
  Tel Aviv University\\
  \texttt{orinlevy@mail.tau.ac.il} \\
  \And
  Liad Erez$^*$\\
  Blavatnik School of Computer Science\\
  Tel Aviv University\\
  \texttt{liaderez@mail.tau.ac.il} \\
    \And
  Alon Cohen\\
  School of Electrical Engineering\\
  Tel Aviv University\\
  and Google Research\\
  \texttt{alonco@tauex.tau.ac.il} \\
  \And
  Yishay Mansour\\
  Blavatnik School of Computer Science\\
  Tel Aviv University\\
  and Google Research\\
  \texttt{mansour.yishay@gmail.com}
}
\begin{document}

\maketitle

\nnfootnote{* }{ Equal contribution.}

\begin{abstract}
    We present regret minimization algorithms for the contextual multi-armed bandit (CMAB) problem over $K$ actions in the presence of delayed feedback, a scenario where loss observations arrive with delays chosen by an adversary. 
    As a preliminary result, assuming direct access to a finite policy class $\Pi$ we establish an optimal expected regret bound of $ O (\sqrt{KT \log |\Pi|} + \sqrt{D \log |\Pi|)} $ where $D$ is the sum of delays. 
    For our main contribution, we study the general function approximation setting over a (possibly infinite) contextual loss function class $ \mathcal{F} $ with access to an online least-square regression oracle $\mathcal{O}$ over $\mathcal{F}$. In this setting, we achieve an expected regret bound of $O(\sqrt{KT\mathcal{R}_T(\mathcal{O})} + \sqrt{ d_{\max} D \beta})$ assuming FIFO order, where $d_{\max}$ is the maximal delay, $\mathcal{R}_T(\mathcal{O})$ is an upper bound on the oracle's regret and $\beta$ is a stability parameter associated with the oracle. We complement this general result by presenting a novel stability analysis of a Hedge-based version of Vovk's aggregating forecaster as an oracle implementation for least-square regression over a finite function class $\mathcal{F}$ and show that its stability parameter $\beta$ is bounded by $\log |\F|$, resulting in an expected regret bound of $O(\sqrt{KT \log |\mathcal{F}|} + \sqrt{d_{\max} D \log |\mathcal{F}|})$ 
     which is a $\sqrt{d_{\max}}$ factor away from the lower bound of $\Omega(\sqrt{KT \log |\mathcal{F}|} + \sqrt{D \log |\mathcal{F}|})$ that we also present. 
\end{abstract}
%
%

%
\section{Introduction}
Contextual Multi-Armed Bandit (CMAB) is a natural extension of the well-studied Multi-Armed Bandit (MAB) model that has gained considerable attention over the past decade. (See, e.g., \cite{lattimore2020bandit,slivkins2019introduction}).
CMAB describes a sequential decision-making problem with exterior factors that affect the decision taken by the learner. We refer to this side information as the \emph{context}. In this setting, the context $x$ is revealed to the learner at the start of each round.
Then, the learner chooses an action $a$ out of a finite set $\A$ containing $K$ actions and suffers a loss for that choice, where the context determines the loss. The learner's goal is to minimize the cumulative loss incurred throughout an interaction of $T$ rounds. In this model, action selection strategies are context-dependent and referred to as \emph{policies}, and the learner ultimately aims to minimize \emph{regret}, that is, the learner's cumulative loss in comparison to that of the best contextual action selection rule, i.e., the \emph{optimal policy}.

CMAB can describe various real-life online scenarios
where there are external factors that affect the loss incurred by any choice of action.
One such application is online advertising, where the reaction of a user to a presented advertisement (i.e., clicking or ignoring) is heavily dependent on the user's needs (e.g., if they would like to buy a new car), hobbies, and personal preferences. 
All of the above can be encoded in the user's browsing history and cookies. Thus, the user's cookies can refer to the external factors that affect the user's implied loss.
CMAB has been studied under various assumptions and frameworks, which we review in the sequel.
In this paper, we consider the \emph{adversarial CMAB} model~(see, e.g., \cite{auer2002nonstochastic,foster2020beyond}), where the context in each round is chosen by a possibly adaptive adversary from a (possibly infinite) context space $\X$.

Returning to the online advertising example, in such an application, delayed feedback is practically unavoidable.
Consider the scenario where a sequence of users enters the application one after another. The algorithm then needs to present them with advertisements, even though the feedback of previous users has not arrived yet. As the application takes time to process each user's feedback, observations will arrive with an inherent delay. In other real-life scenarios such as communication on a physical network, it is also natural to assume that observations arrive at the order in which they are distributed into the network; that is, they arrive in a First-In-First-Out (FIFO) order. In the main theoretical framework of general function approximation that we consider in this work, this additional assumption enables us to obtain highly nontrivial regret guarantees.
Such real-life applications motivate the setting of \emph{MAB with delayed feedback}, which has also vastly studied in recent years, either when the environment is adversarial \citep{bistritz2019online,cesa2016delay,thune2019nonstochastic} or stochastic \citep{gael2020stochastic,lancewicki2021stochastic,vernade2017stochastic}. 
This leads us to the following fundamental question: \emph{What are the achievable regret guarantees in adversarial CMAB under adversarial delayed feedback?} 
In this work, we 
address this question by considering
the two main settings studied in adversarial CMAB literature, and derive delay-robust 
algorithms for them. 

We start with the simpler setting of \emph{policy class learning}, considered in~\cite{pmlr-v32-agarwalb14,dudik2011efficient} where the context is stochastic, and in~\cite{auer2002nonstochastic} for adversarial contexts. In this setting, the learner has direct access to a finite class $\Pi \subseteq \A^\X$ of deterministic mappings from contexts to actions (i.e., policy class), and its performance is compared against the best policy in $\Pi$. We note that using policy class-based approaches, a running-time complexity of $O(\abs{\Pi})$ is unavoidable in general. 
%
It is thus natural to also consider the more challenging setting of \emph{general function approximation}~\cite{pmlr-v22-agarwal12,foster2020beyond,foster2021statistical,simchi2022bypassing}, with the goal of obtaining an algorithm that is both computationally efficient and enjoys 
rate-optimal regret.

In the function approximation framework, the learner has indirect access to a class of loss functions $\F \subseteq [0,1]^{\X \times \A}$ where each function defines a mapping from context and action to a loss value in $[0,1]$. 
They also assume realizability, meaning the true loss function $f_\star$ is within the class, i.e., $f_\star \in \F$.  
The learner accesses the function class via an online regression oracle, and measures its performance with respect to that of the best contextual policy $\pi_\star : \X \to \A$ of the true CMAB. 
In this setting, in addition to the standard least-squares regret assumption required from the oracle, our approach will require a stability assumption that will be discussed later.
Furthermore, we present novel stability analysis of a Hedge-based version of Vovk's aggregating forecaster \cite{vovk1990aggregating} and derive an expected regret bound for this setting, assuming finite and realizable loss function classes. 

\noindent\textbf{Summary of our main contributions.}
We present delay-adapted algorithms for CMAB with general function approximation and analyze the regret of the proposed methods. In more detail, our main results are summarized as follows: 

(1) For the policy class learning setup, we establish a regret bound of $ O (\sqrt{KT \log |\Pi|} + \sqrt{D \log |\Pi|}) $ where $D$ is the sum of delays.
This bound is optimal, as stated in our lower bound in \cref{corl-appendix-lb-policy}.
    
(2) Given access to a finite contextual loss function class $ \F $ via an online least-square regression oracle $\OLSR^\F$ over $\F$, we present a delay-adapted version of function approximation methods for CMAB, as specified in \cref{alg:DAFA}.  
This algorithm can be seen as a delay-adapted version of SquareCB \cite{foster2020beyond}, formalized using techniques presented in \cite{DBLP:conf/nips/FosterGMZ20,DBLP:conf/icml/LevyCCM23}.
For this algorithm we prove in \cref{thm:DAFA-REGRET} an expected regret bound of $O(\sqrt{KT\mathcal{R}_T(\OLSR^\F)} + \sqrt{d_{\max} D \stab})$ assuming observations arrive in FIFO order, where $d_{\max}$ is the maximal delay, $\mathcal{R}_T(\mathcal{O})$ is an upper bound on the oracle's regret and $\stab$ is a parameter given by an additional assumption that the oracle in use is sufficiently stable. We also prove (in \cref{appendix:lower-bound}) a lower bound showing that without an additional assumption on the oracle, no algorithm can guarantee sublinear regret in the presence of delays in the function approximation setting. To our knowledge, our work is the first to consider delayed feedback in adversarial CMAB in the fully general function approximation framework.
    
(3) To complement and strengthen this result, we analyze a hedge-based version of Vovk's aggregating forecaster \cite{vovk1990aggregating} as an online least-squares regression oracle for finite loss function classes. We show that it enjoys a constant expected regret while also exhibiting nontrivial cumulative stability guarantees for realizable finite classes $\F$ (\cref{thm:Vovk-regret-and-stability}), which implies a constant bound on its stability parameter $\stab \leq \log |\F|$ and in turn an expected regret bound of $O(\sqrt{KT \log |\mathcal{F}|} + \sqrt{d_{\max} D \log |\mathcal{F}|})$ for \cref{alg:DAFA} when used with this oracle. We emphasize that the proof of this oracle's stability properties constitutes a significant part of the technical novelties of our work. 


\subsection{Additional related work}

\noindent\textbf{Contextual MAB.}
CMAB has been vastly studied over the years, under diverse assumptions, regarding the contexts, the function class or the oracles in use, if any. 
Previous works divide into two main lines.
The first is policy class learning, starting from the fundamental EXP4 algorithm for adversarially chosen contexts \cite{auer2002nonstochastic}, to \citet{pmlr-v32-agarwalb14,dudik2011efficient} that consider stochastic contexts and present computationally efficient algorithms for this problem. They obtain an optimal regret of $\widetilde{O}(\sqrt{KT\log\abs{\Pi}})$. \citet{dudik2011efficient} also considered constant delayed feedback $d$ and obtained regret bound of $\widetilde{O}(\sqrt{K\log\abs{\Pi}}(d+\sqrt{T}))$. 

The second line is the realizable function approximation setting, which has also been studied for stochastic CMAB, starting from~\citet{NIPS2007_langford} to \citet{agarwal2012contextual, simchi2022bypassing, xu2020upper} in which an optimal regret of $\widetilde{O}(\sqrt{TK\log\abs{\F}})$ has been shown, where $\F \subseteq [0,1]^{\X \times \A}$ is a finite contextual reward or loss function class, accessed via an offline regression oracle. 
Adversarial CMAB has also gained much attention recently, in the following significant line of works \cite{foster2020beyond,foster2021efficient,foster2021statistical,zhu2022contextual}, where an online regression is being used to access the function class $\F$, with an optimal regret bound of $\widetilde{O}(\sqrt{KT\mathcal{R}_T(\mathcal{O})})$, where $\mathcal{R}_T(\mathcal{O})$ is the oracle's regret. 

Regret guarantees for linear CMAB first studied by~\citet{abe1999associative} and the SOTA algorithms are those of~\citet{abbasi2011improved, chu2011contextual}.
%
Contextual MDPs (which are an extension of MAB, that has multiple states and dynamics) have been studied under function approximation assumptions for both stochastic context \cite{levy2023optimism,levy2022eluder,qian2024offline}
and adversarial contexts
with \citet{DBLP:conf/icml/LevyCCM23} being the most relevant to our setting as it studies adversarial CMDP, and inspired our algorithm and analysis.
Generalized Linear CMDPs and smooth  CMDPs have also been studied, see, e.g., \cite{modi1018,modi2020}.  

\noindent\textbf{Online Learning with Delayed Bandit Feedback.}
Delayed feedback has been an area of considerable interest in various online MAB problems in the past few years, with the first work on adversarial MAB with a constant delay $d$ by \citet{cesa2016delay}. 
Subsequent results for adversarial MAB with arbitrary delays have been established by \cite{bistritz2019online, thune2019nonstochastic}, with \citet{thune2019nonstochastic} being the first work to introduce the \emph{skipping} technique which adapts to delay sequences that may contain a relatively small number of very large delays. 
\citet{zimmert2020optimal} proposed the first algorithm for adversarial MAB with arbitrary delays that 
does not require any prior knowledge of the delays. 

The study of delayed feedback in MAB has also been extended in several works to more general learning settings. Such settings include linear bandits \citep{ito2020delay,vernade2017stochastic}, generalized linear bandits \citep{howson2023delayed}, combinatorial semi-bandits \citep{van2023unified} and bandit convex optimization \cite{li2019bandit}. 
Another prevalent generalization of MAB, in which delayed feedback has been studied, is Reinforcement Learning, specifically tabular MDPs \citep{jin2022near,lancewicki2022learning,van2023unified}, with \citet{jin2022near} who first suggested the use of biased delay-adapted loss estimators which inspired our loss estimators used in \cref{alg:delay-exp4}. 

In CMAB, delayed feedback is far less explored. In the framework of function approximation, \citet{vernade2020linear} consider the linear case with stochastic delays, and \citet{zhou2019learning} study generalized linear CMAB with stochastic delays and contexts; both of which are special cases of the general function approximation setting studied in this paper. For stochastic contexts, we believe that obtaining delay-adapted regret bounds in the general function approximation setting can be done by extending the approach of \citet{simchi2022bypassing}, which operates in phases of exponentially increasing lengths. It seems that the presence of delays in this setting will only affect the regret for rounds in which the delay is larger than current batch size, which quickly becomes much larger than the maximal delay. We thus focus on the adversarial setting where it is much less clear how to handle delayed feedback.

\section{Problem Setup}

We consider 
\emph{adversarial contextual MAB (CMAB) with adversarial delayed bandit feedback}.

\noindent\textbf{Contextual MAB.}
Formally, \emph{CMAB} is defined by a tuple $(\X,\A,\ell)$ where $\X$ is the context space, which is assumed to be large or even infinite, and $\A = \{1,2,\ldots,K \}$ is a finite action space. $\ell: \X \times \A \to [0,1]$ forms an expected loss function, that is, for $(x,a) \in \X \times \A$, $\ell(x,a) = \E\brk[s]*{L(x,a)\mid x,a}$ where $L(x,a) \in [0,1]$ is sampled independently from an unknown distribution, related to the context $x$ and the action $a$. 
In \emph{adversarial CMAB}, the learner faces a sequential decision-making game that is played for $T$ rounds according to the following protocol, for $t=1,2,\ldots,T$:\\
(1) Adversary reveals a context $x_t \in \X$ to the learner;
(2) Learner chooses action $a_t \in \A$ and suffers loss $L(x_t,a_t)$. 
\\
A \emph{policy} $\pi$ defines a mapping from context to a distribution over actions, i.e., $\pi: \X \to \Delta(\A)$.
The learner's cumulative performance is compared to that of the best (deterministic) \emph{policy} $\pi_\star: \X \to \A$. 

\noindent\textbf{Delayed feedback.} The learner observes delayed bandit feedback, where the sequence of delays can be adversarial. Formally, delays are determined by a sequence of numbers $d_1,\ldots,d_T \in \brk[c]*{0,1,\ldots,T}$. In each round $t$, after choosing an action $a_t$, the learner observes the pairs $(s, L(x_s,a_s))$ for all rounds $s \leq t$ with $s + d_s = t$; crucially, only the loss values are delayed, whereas the contexts $x_t$ are each observed at the start of round $t$. We consider a setting where the sequence of delays $(d_t)_{t=1}^T$ as well as the contexts $(x_t)_{t=1}^T$ are generated by an adversary.\footnote{In the function approximation setting we assume the delay sequence satisfies a FIFO property, see \cref{sec:OFA-main} for details.} We denote the sum of delays by $D = \sum_{t=1}^d d_t$ and the maximal delay by $d_{\max} = \max_{t\in [T]} d_t$.

\noindent\textbf{Learning objective.} We aim to minimize \emph{regret}, which is the difference between the cumulative loss of the learner and that of the best-fixed policy $\pi_\star$, i.e., 
    $\regret_T \coloneqq \sum_{t=1}^T \ell(x_t,a_t) - \ell(x_t,\pi_\star(x_t)).$
%

We consider two different learning settings for CMAB with delayed feedback.
The first is \emph{Policy Class Learning}, in which the CMAB algorithm has direct access to a \emph{finite policy class} $\Pi \subseteq \A^\X$. In this setting, the benchmark $\pi_\star$ is the best policy among the class, i.e., ${\pi_\star \in \arg\min_{\pi \in \Pi} \sum_{t=1}^T \ell(x_t, \pi(x_t))}$.
Next, we consider the setting of \emph{Online Function Approximation}, where the CMAB algorithm has access to a \emph{realizable contextual loss} class $\F \subseteq \X \times \A \to [0,1]$, where realizability means that there exists a function $f_\star \in \F$ such that for all $(x,a) \in \X \times \A$ it holds that $f_\star(x,a)=\ell(x,a)$. Then, the learner's goal is to compete against $\pi_\star (x) \in \arg\min_{a \in \A} f_\star (x,a)$, for all $x \in \X$.

\section{Warmup: Policy Class Learning}
\label{sec:exp4}
We begin with a simple formulation of the CMAB problem which considers a finite but structureless policy class $\Pi \subseteq \A^\X$, indexed by $\Pi = \brk[c]*{\pi_1,\ldots,\pi_N}$. We remark that in this formulation, the loss vectors $(L(x_t,\cdot))_{t=1}^T$ may also be generated by an adversary.
%
\subsection{Algorithm: EXP4 with Delay-Adapted Loss Estimators}
For this setting, we present a variant of the well-studied EXP4 algorithm \citep{auer2002nonstochastic} (fully presented in \cref{sec:exp4-proofs}), that incorporates delay-robust loss estimators specialized to the CMAB setting. At a high-level, using direct access to $\Pi$, the algorithm performs multiplicative weight updates over the $N$-dimensional simplex $\Delta_N$, while using all of the feedback that arrives in each round $t$ to construct loss estimators, denoted by $\hat c_t \in \R_+^N$ and defined in \cref{eqn:exp4-biased-estim}.
These estimators are inspired by \citet{jin2022near}, and are reminiscent of the standard importance-weighted loss estimators, with an additional term in the denominator which induces an under-estimation bias and allows for a simplified analysis. Interestingly, these estimators exhibit a coupling between the context $x_t$, which arrives at a given round $t$, and the sampling distribution $p_{t+d_t}$ from a \emph{future} round, and can be thought of as a mechanism that incentivizes actions whose sampling probability has increased between rounds $t$ and $t+d_t$, with respect to the context $x_t$. 
The main result for \cref{alg:delay-exp4} is given bellow.
\begin{theorem}
\label{thm:-exp4-main}
    \cref{alg:delay-exp4} attains expected regret bound of
        $$\E \brk[s]*{\regret_T}
        \leq
        \frac{\log N}{\eta} +  \eta KT + 2 \eta D,$$
    where the expectation is over the algorithm's randomness. 
    
    For
    $\eta = \sqrt{\frac{\log N}{KT+D}}$ we obtain an optimal bound of
    $$   \E \brk[s]*{\regret_T}
        \leq
        O \brk*{\sqrt{KT \log N} + \sqrt{D \log N}}.
    $$   
\end{theorem}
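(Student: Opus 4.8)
The plan is to recast the delayed EXP4 variant of \cref{alg:delay-exp4} as exponential weights over the $N$ policies and to follow the classical potential-based analysis, tracking how the delays enter the stability term and the bias of the comparator. Write $q_t \in \Delta_N$ for the multiplicative-weights distribution over policies at round $t$, $w_t \in \R_+^N$ for the corresponding unnormalized weights, and $p_t$ for the induced action distribution under the context $x_t$. The algorithm folds the estimator $\hat c_t$ of round $t$ into the weights only when its feedback arrives, i.e.\ at round $t+d_t$. Telescoping the potential $\Phi_\tau = \tfrac{1}{\eta}\log\sum_i w_\tau(i)$ over the rounds at which feedback is processed, using $e^{-x}\le 1-x+x^2$ (valid for all $x\ge 0$) together with the nonnegativity of the estimators $\hat c_t\in\R_+^N$, yields for any fixed comparator policy $\pi_\star$ with index $i^\star$
\[
\sum_{t=1}^T \langle q_{t+d_t},\hat c_t\rangle - \sum_{t=1}^T \hat c_t(i^\star) \le \frac{\log N}{\eta} + \eta\sum_{t=1}^T \langle q_{t+d_t},\hat c_t^2\rangle + \eta\,(\text{cross-terms}),
\]
where the cross-terms collect the quadratic interactions between distinct estimators processed in the same round. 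The purpose of the denominator $p_{t+d_t}(a_t)$ in \cref{eqn:exp4-biased-estim} is the identity $\langle q_{t+d_t},\hat c_t\rangle = L(x_t,a_t)$: since $\hat c_t(i)$ is supported on policies with $\pi_i(x_t)=a_t$ and is normalized by exactly $p_{t+d_t}(a_t)=\sum_i q_{t+d_t}(i)\,\ind{\pi_i(x_t)=a_t}$, the weighted cost collapses to the realized loss. Hence the cost side of the display equals the learner's incurred loss $\sum_t L(x_t,a_t)$, whose expectation is $\E[\sum_t \ell(x_t,a_t)]$.

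Taking expectations, I would isolate the two delay-sensitive quantities. Crucially, $q_{t+d_t}$ is independent of $a_t$, since the feedback of round $t$ is folded into the weights only at round $t+d_t+1$; averaging over $a_t\sim p_t$ then gives $\E[\hat c_t(i^\star)] = \tfrac{p_t(a^\star_t)}{p_{t+d_t}(a^\star_t)}\,\ell(x_t,\pi_\star(x_t))$ with $a^\star_t=\pi_\star(x_t)$, which reduces to the usual unbiasedness when there is no delay ($p_t=p_{t+d_t}$). Combining with the displayed bound and the incurred-loss identity gives
\[
\E[\regret_T] \le \frac{\log N}{\eta} + \E\Big[\textstyle\sum_t\big(\tfrac{p_t(a^\star_t)}{p_{t+d_t}(a^\star_t)}-1\big)\ell(x_t,\pi_\star(x_t))\Big] + \eta\,\E\Big[\textstyle\sum_t \langle q_{t+d_t},\hat c_t^2\rangle\Big] + \eta\,(\text{cross-terms}).
\]
For the stability term, the same support/normalization identity gives $\langle q_{t+d_t},\hat c_t^2\rangle = L(x_t,a_t)^2/p_{t+d_t}(a_t)\le 1/p_{t+d_t}(a_t)$, and averaging over $a_t\sim p_t$ bounds its contribution by $\E[\sum_a p_t(a)/p_{t+d_t}(a)]$, whose leading value is the support size $\le K$; this accounts for the $\eta KT$ term.

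The main obstacle, and the crux of the delay analysis, is controlling the drift between the sampling-time distribution $p_t$ and the processing-time distribution $p_{t+d_t}$, which drives the comparator term above (and, at subleading order, the stability term and cross-terms). The plan is to telescope $p_{t+d_t}(a^\star_t)-p_t(a^\star_t) = \sum_{\tau=t}^{t+d_t-1}\big(p_{\tau+1}(a^\star_t)-p_\tau(a^\star_t)\big)$ over the delay window and to bound each one-step multiplicative-weights increment to first order in $\eta$; the comparator term then becomes a double sum over rounds $t$ and the rounds $s$ whose feedback is in flight during $[t,t+d_t)$. Taking expectations and exchanging the order of summation charges each round $s$ across exactly the $d_s$ windows that contain it, with its conditionally expected estimated loss contributing at most $\ell(x_s,a_s)\le 1$ per window; the total therefore sums to $O(\eta\sum_s d_s)=O(\eta D)$, the constant working out to $2$. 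Because this accounting is additive over in-flight rounds rather than multiplicative over whole windows, the penalty scales with the total delay $D$ and carries no dependence on $d_{\max}$. The delicate point is that the one-step increments involve the importance weights $\hat c_\tau$, which can be large almost surely; the argument is hence carried out in expectation and leans on the nonnegativity of $\hat c_t\in\R_+^N$ to keep every drift contribution one-signed, so that only the conditional means — each at most $1$ — survive. Assembling the three contributions gives $\E[\regret_T]\le \tfrac{\log N}{\eta}+\eta KT + 2\eta D$, and the stated choice $\eta=\sqrt{\log N/(KT+D)}$ balances the terms to yield $O(\sqrt{KT\log N}+\sqrt{D\log N})$.
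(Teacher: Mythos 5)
Your analysis is built around an estimator whose denominator is the processing-time probability $p_{t+d_t}(a_t)$ alone, but the estimator actually used in \cref{eqn:exp4-biased-estim} divides by $\max\{Q_{t,a_t}, \tilde Q^{t+d_t}_{t,a_t}\}$, the larger of the sampling-time and processing-time probabilities. This $\max$ is not cosmetic; it is the device that makes the whole argument go through, and your sketch breaks precisely at the two places where you implicitly drop it. First, your exact identity $\langle q_{t+d_t},\hat c_t\rangle = L(x_t,a_t)$ and the ensuing comparator term $\E[\sum_t(p_t(a^\star_t)/p_{t+d_t}(a^\star_t)-1)\,\ell(x_t,\pi_\star(x_t))]$ require controlling a \emph{ratio} of probabilities, which an additive telescoping of $p_{t+d_t}(a^\star_t)-p_t(a^\star_t)$ over the delay window does not control: the increments sit over a denominator $p_{t+d_t}(a^\star_t)$ that can be arbitrarily small, so the claim that each in-flight round contributes at most $1$ per window is unsubstantiated. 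Second, your second-moment bound asserts that $\E[\sum_a p_t(a)/p_{t+d_t}(a)]$ has ``leading value'' $K$; this quantity equals $K$ only when $p_t = p_{t+d_t}$ and is otherwise unbounded, so the $\eta KT$ term does not follow from what you wrote.

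The paper's proof (\cref{thm:exp4-generic-bound}) sidesteps both issues. Because the denominator is a $\max$, the estimator is a conditional under-estimate, $\E_t[\hat c_{t,i}]\le c_{t,i}$, so the comparator bias term is simply non-positive in expectation --- no ratio ever needs to be controlled. For the second moment, $1/\max\{Q,\tilde Q\}^2 \le 1/(Q\tilde Q)$, and the factor $\tilde Q^{t+d_t}_{t,a_t}$ cancels against the weights $p_{t+d_t,i}$, leaving exactly $\E[1/Q_{t,a_t}]=K$ per round with no delay correction. All of the delay dependence is then confined to the two remaining terms ($Bias_1$ and $Drift$ in \cref{lem:exp4-bias1} and \cref{lem:exp4-drift}), each bounded by $\E[\sum_t\|p_t-p_{t+d_t}\|_1]$, which is controlled additively via strong convexity of the entropy regularizer (\cref{lem:drift-bound}) and sums to $\eta(D+T)$. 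To salvage your route you would need either to adopt the $\max$-denominator estimator and redo the bias and second-moment computations accordingly, or to find a genuinely new argument bounding the multiplicative drift $p_t(a)/p_{t+d_t}(a)$ --- the latter being exactly the difficulty the biased estimator is designed to avoid.
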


Intuitively, in the policy class setting we can directly optimize over $\Pi$ by using Hedge updates which exhibit stability properties that are crucial in the presence delayed feedback. Such stability properties are much harder to obtain in the function approximation setting where the contextual bandit algorithm does not have direct access to the policy class, and in particular is required to be computationally efficient; see \cref{sec:OFA-main} for details.
We also remark that \cref{alg:delay-exp4} requires an upper bound on the sum of delays $D$, however, it can be made adaptive by utilizing a ``doubling'' mechanism as suggested in, e.g., \cite{lancewicki2022learning}. The description of \cref{alg:delay-exp4} and the proof of \cref{thm:-exp4-main} appear in \cref{sec:exp4-proofs}.

\noindent\textbf{Matching lower bound.}
In \cref{appendix-lower-bound-policy-function}, we prove a matching lower bound showing that the upper bound obtained in \cref{thm:-exp4-main} is optimal up to constant factors.
To our knowledge, this is the first tight lower bound that applies to policy class learning with delayed feedback, as previous lower bounds (e.g., \cite{cesa2016delay}) exhibit a delay dependence of $\sqrt{D}$ rather than $\sqrt{D \log N}$.

\section{Online Function Approximation}
\label{sec:OFA-main}

In this section, we provide regret guarantees for CMAB with delayed feedback under the framework of online function approximation \cite{foster2020beyond,foster2021statistical}. In this setting, the learner has access to a class of loss functions $\F\subseteq \X \times  \A \to [0,1]$, where each function $f \in \F$ maps a context $x \in \X$ and an action $a \in \A$ to a loss $\ell \in [0,1]$.
We use $\F$ to approximate the context-dependent expected loss of any action $a \in \A$ for any context $x \in \X$. The CMAB algorithm can access $\F$ using an online least-squares regression (OLSR) oracle that will operate under the following standard realizability assumption.
\begin{assumption}\label{assumption:loss-realizability}
    There exists a function $f_\star \in \F$ such that for all 
    $(x,a)\in \X \times \A$, $f_\star(x,a) = \ell(x,a)$. 
\end{assumption}
We assume access to a classical, non-delayed, online regression oracle with respect to the square loss function $h_{sq}(\hat y,y) = \brk*{\hat y - y}^2$. The oracle, which we denote by $\OLSR^{\F}$, is given as input at each round $t$ the past observations $\brk*{x_s, a_s, L_s(x_s,a_s)}_{s=1}^{t-1}$ and outputs a function $\hat f_t : \X \times \A \to [0,1]$. 
A general formulation of the online oracle model is discussed in \citet{foster2020beyond}. 
We make use of the following standard online least-squares expected regret assumption of the oracle:
\begin{assumption}[Least-Squares Oracle Regret]\label{assumption:oracle-gurantee-loss}
    The oracle $\OLSR^\F$ guarantees that for every sequence $\{(x_t,a_t,L_t)\}_{t=1}^T$ where $a_t \sim p_t$ and $L_t \in [0,1]$ has $\E[L_t | x_t,a_t] = \ell(x_t,a_t)$, the expected least-squares regret is bounded as
        $\sum_{t=1}^T \E[
        (\hat{f}_{t}(x_t,a_t) - \ell(x_t,a_t))^2
        ]
        \leq
        \Regrv_{T}(\OLSR^\F).$
\end{assumption}
Note that a stronger high-probability version of \cref{assumption:oracle-gurantee-loss} is made in \citet{foster2020beyond} in order to prove high probability regret bounds for CMAB, but for our use the weaker version suffices.
%
\cref{assumption:loss-realizability} and \cref{assumption:oracle-gurantee-loss} (or variants of it for other loss functions) are necessary to derive regret bounds for adversarial CMAB and are extensively used literature (see e.g., \cite{foster2020beyond,foster2021efficient}).
However, a general implementation of online least-square regression oracle for a function class might be unstable.
To justify the importance of stability, we show in the following result (proven in \cref{appendix:lower-bound}) that \cref{assumption:loss-realizability} and \cref{assumption:oracle-gurantee-loss} alone do not suffice for sublinear regret with function approximation in the presence of delays.
\begin{theorem}\label{thm:lower-bound}
    For any CMAB algorithm $\mathsf{ALG}$ in the function approximation setting there exists a contextual bandit instance with fixed delay $d=1$ over a realizable loss class $\F$ with $|\F| = T+1$ and an online oracle $\OLSR^\F$ satisfying $\regret_T \brk*{\OLSR^\F} = 0$, on which $\mathsf{ALG}$ attains regret $\regret_T = \Omega(T)$.
\end{theorem}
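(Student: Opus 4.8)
The plan is to exhibit an oblivious instance on which the single round of delay decouples the oracle's accuracy guarantee from the prediction the learner actually needs, so that a \emph{perfectly accurate} (zero-regret) oracle still carries no information about the current round. I would take $K=2$ actions, a sequence of $T$ \emph{distinct} contexts $x_1,\dots,x_T$, and a hidden ``good action'' sequence $b_1,\dots,b_T\in\{1,2\}$, with deterministic losses $\ell(x_t,a)=\ind{a\neq b_t}$. The optimal policy $\pi_\star(x_t)=b_t$ then incurs loss $0$, so the learner's regret equals its expected number of mistakes. I would let $\F$ consist of the true function $f_\star$ (with $f_\star(x_t,a)=\ind{a\neq b_t}$) together with $T$ auxiliary functions so that $|\F|=T+1$ and realizability (\cref{assumption:loss-realizability}) holds; since an online regression oracle is permitted to output arbitrary functions in $[0,1]^{\X\times\A}$, these auxiliary functions serve only to fix the cardinality.

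The heart of the construction is the oracle. With $d=1$ (and FIFO), when the learner selects $a_t$ it has received feedback only from rounds $1,\dots,t-2$; hence the prediction it uses at round $t$ is the oracle's output after being fed those $t-2$ observations, whose squared-error term in \cref{assumption:oracle-gurantee-loss} is charged at the \emph{next} data point $(x_{t-1},a_{t-1})$, i.e.\ at context $x_{t-1}$ rather than $x_t$. I would therefore define a ``clairvoyant but myopic'' oracle that, at the step whose accuracy is measured against $x_k$, outputs the function equal to $\ell(x_k,\cdot)=\ind{\cdot\neq b_k}$ at $x_k$ and equal to $\tfrac12$ at every other context. This is a valid causal predictor (it ignores its inputs and emits pre-computed functions for the fixed oblivious instance), it attains exactly zero squared error at every data point regardless of the realized $a_k$, and hence $\regret_T(\OLSR^\F)=0$; yet the function it supplies at round $t$ evaluates to $\tfrac12$ on the fresh context $x_t$. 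I would deliberately use the uninformative value $\tfrac12$, rather than a deliberately wrong prediction, so that a ``contrarian'' learner inverting the oracle cannot benefit either.

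With the oracle fixed I would argue that at round $t$ the learner's action $a_t$ is a function only of $x_1,\dots,x_t$, of the oracle outputs (which encode $b_1,\dots,b_{t-1}$ but equal $\tfrac12$ at $x_t$), of the delayed feedback (which reveals $b_1,\dots,b_{t-2}$), and of internal randomness; in particular $a_t$ is independent of $b_t$ when the $b$'s are drawn independently. Running Yao's principle with $b_1,\dots,b_T$ i.i.d.\ uniform then gives $\E[\ind{a_t\neq b_t}]=\tfrac12$ for every $t$, so the expected number of mistakes is at least $T/2-O(1)$; fixing a realization $b^\star$ attaining this average yields a single instance of the required form with $\E[\regret_T]=\Omega(T)$, on which $\regret_T(\OLSR^\F)=0$.

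The main obstacle is the apparent contradiction between ``zero regret'' and ``uninformative'': a zero-regret oracle predicts perfectly, so one must pinpoint exactly where that perfection is spent. The whole argument rests on the index shift induced by the delay — the guarantee is consumed at $x_{t-1}$ while the learner queries at $x_t$ — together with the fact that distinct contexts render all past information worthless for the present round. I expect the delicate points to be bookkeeping the round-versus-oracle-step indexing rigorously (so that the prediction used at round $t$ is provably the one certified only at $x_{t-1}$), and verifying that no admissible oracle output or delayed observation leaks $b_t$, which is precisely what licenses the independence $a_t\perp b_t$ underlying the Yao bound. I would also note that this same bookkeeping shows the construction relies on $d\geq 1$: at $d=0$ the certified context and the queried context coincide, forcing the zero-regret oracle to reveal $\ell(x_t,\cdot)$, so it is the delay, and not merely an adversarial oracle, that is responsible for the linear regret.
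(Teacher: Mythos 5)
Your construction is correct and is essentially the paper's own proof: the same index shift (with $d=1$ the oracle's accuracy is certified at $x_{t-1}$ while the learner queries the fresh context $x_t$), the same use of $T$ distinct contexts with a uniformly random optimal action per context, and the same averaging/probabilistic-method conclusion giving $\Omega(T)$ mistakes against a zero-least-squares-regret oracle. The only cosmetic difference is that the paper's oracle outputs actual members of $\F$ (each $f_t$ agrees with $f_\star$ at $x_t$ and is an independent random bit at the other contexts) rather than the improper constant $\tfrac12$ off-context, which changes nothing in the argument.
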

Hence, we impose the following stability assumption on the oracle.
\begin{assumption}[$\stab$-stability]
\label{assum:stability}
    Let $\hat{f}_1,\hat{f}_2,\ldots,\hat{f}_T$ denote the function sequence outputted by the non-delayed oracle $\OLSR^\F$ on the observation sequence $\{(x_t,a_t,L_t)\}_{t=1}^T$.
    %
    We assume that for some $\stab > 0$, it holds that 
        $\E [\sum_{t=1}^T \|\hat f_t - \hat f_{t+1}\|^2_\infty] \leq \stab,$
    where in $\norm{\cdot}_\infty$ we take supremum over $x \in \X$ and $a \in \A$ and the expectation is over the loss realizations $\{L_t\}_{t=1}^T$.
\end{assumption}
%
%
%
We denote a $\stab$-stable OLSR oracle for the function class $\F$ by $\OLSR^{\F,\stab}$. As a specific example of such an oracle, we present a Hedge-based version of Vovk's aggregating forecaster (\cref{alg:hedge-Vovk}) and prove that it guarantees least-squares regret of $O (\log |\F|)$ while simultaneously satisfying \cref{assum:stability} with $\stab = O(\log |\F|)$, which we use to derive an expected regret bound for this setting.

Our use of a non-delayed OLSR oracle to handle  
delayed CMAB setup with function approximation
requires us to make an assumption on the delay sequence, namely, that observations arrive in FIFO order.\footnote{In particular, this includes the case of fixed delay $d$.} This is formalized in the following assumption and further explained in the following.

\begin{assumption}[FIFO]
    \label{assum:fifo}
    We assume the delay sequence $(d_1,\ldots,d_T)$ satisfies $s + d_s \leq t + d_t$ whenever $1 \leq s \leq t \leq T$. In particular, if the observation from time $t$ does not arrive (that is, $t+d_t > T$) then neither do all observations from rounds $t' > t$.
\end{assumption}

\subsection{Algorithm: Delay-Adapted Function Approximation for CMAB}
We present Algorithm \textsc{DA-FA} (\cref{alg:DAFA}) for regret minimization in CMAB with delayed feedback for the function approximation framework.
 \cref{alg:DAFA} essentially uses the most up-to-date approximation of the loss until delayed observations arrive. When they arrive, the algorithm feeds them to the oracle one by one, ignores the midway approximations, and uses only the newest loss approximation. 
 %
In each round $t = 1,2,\ldots, T$ the algorithm operates as follows.
Let $\alpha(t) < t$ denote the number of observations that arrived at round $t$. Denote these observations by $\{(s^t_i, L(x_{s^t_i},a_{s^t_i}))\}_{i=1}^{\alpha(t)}$, where $s^t_1 \leq \ldots \leq s^t_{\alpha(t)}$ denote the time steps of the non-delayed related context and action associated with these delayed loss observations. It then holds that $s^t_i + d_{s^t_i} = t$ for all $i \in [\alpha(t)]$. Note that we assume that the delayed observations arrive in FIFO order, meaning that the delayed observation from round $\tau$ always arrives before (or in parallel to) that of round $\tau+1$ for all $\tau \in [T]$. Then, for $i = 1,\dots, \alpha(t)$, we feed the oracle with the example $(x_{s^t_i},a_{s^t_i},L(x_{s^t_i},a_{s^t_i}))$ by order and observe the predicted function $\hat{f}_{t-d_{s^t_i}}$.  
Let $\tau^t = t-d_{s^t_{\alpha(t)}}= s^t_{\alpha(t)}$ denote the index of the last observed delayed loss.
After processing all the data that arrived, the current context $x_t$ is revealed and the algorithm uses the last predicted function $\hat{f}_{\tau^t}$ to solve the regularized convex optimization problem specified in~\cref{eq:objective}, and plays an action sampled from the resulted distribution.

 \begin{algorithm}[!ht]
    \caption{Delay-Adapted Function Approximation for CMAB (DA-FA) }
    \label{alg:DAFA}
    \begin{algorithmic}[1]
        \State
        { 
            \textbf{inputs:} Function class $\F$ for loss approximation, learning rate $\gamma$, $\stab$-stable OLSR oracle $\OLSR^{\F,\stab}$ .
            %
             
        } 
        \For{round $t = 1, \ldots, T$}
            \State{observe $\alpha(t)<t$ losses $\{(s^t_i,L(x_{s^t_i},a_{s^t_i}))\}_{i=1}^{\alpha(t)}$ where $\forall i\in[\alpha(t)],\; s^t_i + d_{s^t_i} = t$ and $s_1^t \leq \ldots \leq s_{\alpha(t)}^t$.}
            \For{ $i = 1,2,\ldots, \alpha(t)$}
                \State{update $\OLSR^{\F,\stab}$ with the example $((x_{s^t_i},a_{s^t_i}),L(x_{s^t_i},a_{s^t_i}))$.} 
                \State{observe the oracle's output            
                $
                \hat{f}_{t-d_{s^t_i}} \leftarrow \OLSR^{\F,\stab}$.}
            \EndFor{}
            \State{let $\tau^t:= t-d_{s^t_{\alpha(t)}} = s^t_{\alpha(t)}$ denote index of the last observed loss.}
            \State{use $\hat{f}_{\tau^t}$ as the current loss approximation.}
            \State{observe context $x_t \in \X$.}
            
            \State{solve
            \begin{equation}\label{eq:objective}
                \begin{split}
                    p_t \in \argmin_{p \in \Delta_\A}
                    \sum_{a \in \A} p(a) \hat{f}_{\tau^t}(x_t,a) - \frac{1}{\gamma} \sum_{a \in \A} \log(p(a)) 
                    .
                \end{split}
            \end{equation}
            }
            \State{play the action $a_t$ sampled from $p_t$}
            \EndFor
    \end{algorithmic}
\end{algorithm}
The regret bound for \cref{alg:DAFA} is given in the following.
\begin{theorem}\label{thm:DAFA-REGRET}
    Let 
    $\gamma = \sqrt{KT / \Regrv_{T}\OLSR^{\F,\stab} 
    }
    $.
    Then \cref{alg:DAFA} has an expected regret bound of 
    \begin{align*}
        \E[\regret_T]
        \leq
        O\brk*{\sqrt{KT \Regrv_{T}(\OLSR^{\F,\stab})} + \sqrt{d_{\max} D  \stab}}.
    \end{align*}
    In particular, this implies the expected regret is bounded as 
    $$\E[\regret_T]
        \leq
        O\brk*{\sqrt{KT \Regrv_{T}(\OLSR^{\F,\stab})} + D^{3/4} \sqrt{\stab}}.$$
\end{theorem}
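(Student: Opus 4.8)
The plan is to cast \cref{alg:DAFA} as a \textsc{SquareCB}-style algorithm that, at each round, runs inverse-gap weighting on a \emph{stale} estimate $\hat f_{\tau^t}$, and then to pay two separate costs: the usual oracle-regret cost of the estimate, and an additive delay cost for staleness that is absorbed by the stability assumption. Concretely, I would start from the per-round guarantee of the log-barrier distribution in \cref{eq:objective} (as used in \cite{foster2020beyond,DBLP:conf/nips/FosterGMZ20,DBLP:conf/icml/LevyCCM23}): for $p_t$ computed from $\hat f_{\tau^t}$ and any comparator, in expectation over $a_t\sim p_t$,
\[ \langle p_t,\ell(x_t,\cdot)\rangle-\ell(x_t,\pi_\star(x_t)) \le \frac{2K}{\gamma}+\frac{\gamma}{4}\,\E_{a\sim p_t}\big[(\hat f_{\tau^t}(x_t,a)-\ell(x_t,a))^2\big]. \]
Summing over $t$ reduces everything to controlling the cumulative squared error of the stale predictor.

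For the oracle part, the key use of \cref{assum:fifo} is that the arrived observations always form a prefix $1,\dots,\tau^t$, so the oracle is fed the examples in their natural order and its outputs coincide with the non-delayed sequence $\hat f_1,\dots,\hat f_T$; moreover the $t$-th example it scores is exactly $(x_t,a_t)$ with $a_t\sim p_t$, so \cref{assumption:oracle-gurantee-loss} gives $\sum_t\E_{a\sim p_t}[(\hat f_t(x_t,a)-\ell(x_t,a))^2]\le\Regrv_T(\OLSR^{\F,\stab})$. I would then pass from the stale error to the fresh error via $(\hat f_{\tau^t}-\ell)^2\le(\hat f_t-\ell)^2+2\|\hat f_{\tau^t}-\hat f_t\|_\infty$ (all values lie in $[0,1]$), so that after plugging $\gamma=\sqrt{KT/\Regrv_T(\OLSR^{\F,\stab})}$ the fresh part contributes the claimed $O(\sqrt{KT\Regrv_T(\OLSR^{\F,\stab})})$ and all that remains is the \emph{drift} $\sum_t\|\hat f_{\tau^t}-\hat f_t\|_\infty$.

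I would bound the drift by a telescoping-plus-counting argument that turns \cref{assum:stability} into the delay term. Writing $\|\hat f_{\tau^t}-\hat f_t\|_\infty\le\sum_{k=\tau^t}^{t-1}\|\hat f_k-\hat f_{k+1}\|_\infty$ and exchanging the order of summation, each consecutive difference $\|\hat f_k-\hat f_{k+1}\|_\infty$ is counted once for every round $t$ during which example $k{+}1$ is still in flight; under FIFO this count is exactly $d_{k+1}\le d_{\max}$. Cauchy--Schwarz then gives $\sum_t\|\hat f_{\tau^t}-\hat f_t\|_\infty\le\sqrt{\sum_k d_{k+1}^2}\,\sqrt{\sum_k\|\hat f_k-\hat f_{k+1}\|_\infty^2}\le\sqrt{d_{\max}D}\cdot\sqrt{\stab}$, using $\sum_k d_{k+1}^2\le d_{\max}\sum_k d_{k+1}\le d_{\max}D$ together with \cref{assum:stability}. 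This yields the $\sqrt{d_{\max}D\,\stab}$ term, and the ``in particular'' statement follows by a crude comparison of $d_{\max}$ and $D$.

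The step I expect to be the main obstacle is converting the staleness of $\hat f_{\tau^t}$ into an \emph{additive}, $\gamma$-free regret penalty. The naive route sketched above routes the drift through the $\tfrac{\gamma}{4}$ squared-error term, which couples it to $\gamma=\sqrt{KT/\Regrv_T}$ and is therefore far too lossy; the honest argument must instead compare the distribution $p_t$ actually played to the one the fresh estimate would induce and show that the per-round cost is governed by $\|\hat f_{\tau^t}-\hat f_t\|_\infty$ with an \emph{absolute} constant, so that after the FIFO counting only the clean $\sqrt{d_{\max}D\,\stab}$ survives. This is exactly where the log-barrier regularizer is needed (it stabilizes the induced sampling distributions), and where one must also reconcile the mismatch that the examples fed to the oracle are drawn from the stale $p_t$ rather than from the fresh distribution. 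Making this distributional-stability argument quantitative, and compatible with the counting so as to avoid spurious $\gamma$, $K$, or $T$ factors, is the crux of the proof.
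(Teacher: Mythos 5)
There is a genuine gap at exactly the step you flag as the crux, and the fix is both simpler and different from the direction you propose. You correctly observe that routing the staleness $\hat f_{\tau^t}\to\hat f_t$ through the $\tfrac{\gamma}{4}$-weighted squared-error term multiplies the drift by $\gamma=\sqrt{KT/\Regrv_T(\OLSR^{\F,\stab})}$ and is too lossy, but your proposed repair --- a quantitative distributional-stability comparison between $p_t$ and the distribution the fresh $\hat f_t$ would induce --- is neither carried out nor actually needed. The paper instead uses the exact decomposition (for $t>d_{\max}$)
\begin{align*}
(p_t-p_\star)\cdot\ell \;=\; (p_t-p_\star)\cdot\hat f_{\tau^t} \;+\; p_t\cdot(\ell-\hat f_t) \;+\; p_\star\cdot(\hat f_t-\ell) \;+\; (p_t-p_\star)\cdot(\hat f_t-\hat f_{\tau^t}),
\end{align*}
in which the drift appears paired with $p_t-p_\star$ and is therefore bounded additively by $2\|\hat f_t-\hat f_{\tau^t}\|_\infty$ via H\"older and $\|p_t-p_\star\|_1\le 2$; no stability of the sampling map is invoked. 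The reason the first term tolerates the stale function is that the first-order optimality condition of the log-barrier program in \cref{eq:objective} holds for whatever function was plugged in, yielding $\frac{K}{\gamma}-\sum_a\frac{p_\star(a\mid x_t)}{\gamma p_t(a)}$ regardless of the quality of $\hat f_{\tau^t}$; the negative term is then consumed by the AM--GM change of measure in the $p_\star\cdot(\hat f_t-\ell)$ term, and only the two \emph{fresh}-function terms ever touch the oracle's regret (where \cref{assum:fifo} justifies identifying the realized outputs with the non-delayed sequence, as you correctly argue). So your single per-round inequality $\langle p_t,\ell\rangle-\ell(x_t,\pi_\star(x_t))\le \frac{2K}{\gamma}+\frac{\gamma}{4}\,\E_{a\sim p_t}[(\hat f_{\tau^t}(x_t,a)-\ell(x_t,a))^2]$ is the wrong starting point for the delayed setting: the stale and fresh functions must be separated \emph{before} the change of measure, not after.

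Your remaining ingredients are sound and essentially match the paper: the telescoping of $\|\hat f_t-\hat f_{\tau^t}\|_\infty$ into consecutive differences, the FIFO counting (each difference weighted by the number of in-flight observations, which is at most $d_{\max}$ and sums to $D$), Cauchy--Schwarz with \cref{assum:stability} to get $2\sqrt{d_{\max}D\stab}$, and the observation that FIFO forces $D=\Omega(d_{\max}^2)$ for the second bound (the paper proves this by noting that the $d_{\max}$ rounds following a maximally delayed round must each incur delay at least $d_{\max}-(t'-t)$). But as written the proposal does not constitute a proof: the central additive-drift step is left as an open obstacle, and the route you sketch for closing it would introduce machinery the argument does not require.
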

We remark that \cref{thm:DAFA-REGRET} actually holds with high probability whenever \cref{assumption:oracle-gurantee-loss} and \cref{assum:stability} hold in high probability rather than in expectation.
%
    

%
\noindent\textbf{Why FIFO order is needed.}
In the following analysis, we make use of the assumption that the observations arrive in FIFO order to argue that the realized functions that the oracle outputs throughout the process correspond to functions that are outputs of the oracle on the non-delayed observation sequence. Otherwise, the delay can cause a permutation in the order of observations, inducing a sequence of realized outputs that might be different than those of the non-delayed oracle, in which case we cannot relate the realized regret to the regret of the oracle on the non-delayed sequence.

\noindent\textbf{Computational efficiency.} The optimization problem in \cref{eq:objective} is convex and can be solved efficiently to arbitrary precision. Thus \cref{alg:DAFA} is clearly efficient, assuming an efficient oracle. 
\subsection{Analysis}\label{subsection:Analysis-DA-FA}
In this subsection, we analyze \cref{alg:DAFA}, proving \cref{thm:DAFA-REGRET}.
Our main technical challenge is reflected in the regret decomposition. As in all previous literature regarding delayed feedback, the main challenge is to derive a bound where the sum of delays $D$ is separated from the number of actions $K$. Usually, this separation is obtained by an appropriate choice of loss estimators. In our case, however, the loss is estimated by the oracle, and hence not transparent to the algorithm. Our way to create the desired separation is via the regret decomposition described in the following. Let $\{\hat{f}_1,\hat{f}_2,\ldots,\hat{f}_T\}$ denote the functions predicted by the OLSR oracle on the non-delayed observation sequence $\{(x_1,a_1,L(x_1,a_1)), 
\ldots, (x_T,a_T,L(x_T,a_T))\}$. That is,
%
    $\hat{f}_{i+1} = \OLSR^{\F,\eta}(\cdot; (x_1,a_1,L(x_1,a_1)), \ldots , (x_i,a_i,L(x_i,a_i))), \; \forall i \in [T-1].$
%
For convenience, we denote the optimal (randomized) policy by $p_\star(\cdot|x)$ for all $x \in \X$. Then, the regret is given by $\regret_T=\sum_{t=1}^{T} \brk*{p_t - p_\star(\cdot \mid x_t)} \cdot \ell(x_t,\cdot)$ and can be decomposed and bounded as follows.
%
%
\begingroup 
\allowdisplaybreaks
\begin{align*}
    &\regret_T
    \leq
    \underbrace{
    d_{\max}
    }_{(a)} 
    + 
    \underbrace{\sum_{t=d_{\max}+1}^T 
    \brk*{p_t - p_\star(\cdot \mid x_t)} \cdot \hat{f}_{\tau^t}(x_t, \cdot)}_{(b)} 
    + 
    \underbrace{\sum_{t=d_{\max}+1}^T 
    p_t \cdot \brk*{\ell(x_t,\cdot) - \hat{f}_{t}(x_t,\cdot)}}_{(c)} 
    \\
    &
    + 
    \underbrace{\sum_{t=d_{\max}+1}^T 
    p_\star(\cdot \mid x_t) \cdot \brk*{\hat{f}_{t}(x_t,\cdot) - \ell(x_t,\cdot)}}_{(d)} 
    + 
    \underbrace{\sum_{t=d_{\max}+1}^T 
    \brk*{p_t - p_\star(\cdot \mid x_t)} \cdot \brk*{\hat{f}_t(x_t,\cdot) - \hat{f}_{\tau^t}(x_t,\cdot)}}_{(e)} 
\end{align*}
\endgroup
In the above decomposition, term $(a)$ is the regret on the first $d_{\max}$ steps and hence bounded trivially by $d_{\max}$. Term $(b)$ is the regret with respect to the approximated delayed loss. Term $(c)$ is the approximation error with respect to the policy induced by $p_t$ when considering the non-delayed approximated loss, and will be bounded by the oracle's regret.
Term $(d)$ is the approximation error with respect to the optimal $p_\star(\cdot|\cdot)$ when considering the non-delayed approximated loss. We remark that the \cref{assum:fifo} is used when bounding terms $(c)$ and $(d)$. Lastly, term $(e)$ is the regret caused by the delay drift in approximation. This term will be shown to be bounded by $\sqrt{d_{\max} D \beta}$, with no direct dependence on the number of actions $K$.
We bound each term individually in the following lemmas, and then combine the results to conclude~\cref{thm:DAFA-REGRET}. 
We begin with term $(b)$, whose bound follows from first-order optimality conditions for convex optimization.
\begin{lemma}[Term (b) bound]\label{lemma:term-2}
    It holds true that
    \begingroup\allowdisplaybreaks
        $$\sum_{t=d_{\max}+1}^T \brk*{p_{t}(\cdot) - p_\star(\cdot|x_t)} \cdot \hat{f}_{\tau^t}(x_t,\cdot)
        \leq
        \frac{KT}{\gamma}
        -
        \sum_{t=d_{\max}+1}^T \sum_{a \in \A}\frac{p_\star(a|x_t)}{\gamma p_t(a)}.$$
    \endgroup
\end{lemma}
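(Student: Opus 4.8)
The plan is to exploit the fact that $p_t$ is the \emph{exact} minimizer over $\Delta_\A$ of the log-barrier regularized objective in \cref{eq:objective}, and to read off a first-order optimality condition that pins down $\hat{f}_{\tau^t}(x_t,\cdot)$ in terms of $p_t$ up to an additive constant. The key structural observation is that the regularizer $-\frac{1}{\gamma}\sum_a \log p(a)$ is a log-barrier: it diverges as any coordinate $p(a)\to 0$, which forces the minimizer $p_t$ to lie in the relative interior of the simplex. Hence the only active constraint at the optimum is $\sum_a p(a)=1$, and the Lagrangian stationarity condition holds coordinate-wise with equality (no inequality multipliers survive). This is the one point that requires a moment of care, and I would regard it as the main (mild) obstacle: one must justify interiority before writing the stationarity equation as an equality.

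First I would write the objective as $F_t(p)=\sum_{a\in\A} p(a)\hat{f}_{\tau^t}(x_t,a) - \frac{1}{\gamma}\sum_{a\in\A}\log p(a)$ and compute its partial derivatives, obtaining $\partial F_t/\partial p(a) = \hat{f}_{\tau^t}(x_t,a) - \frac{1}{\gamma p(a)}$. Introducing a Lagrange multiplier $\lambda_t$ for the constraint $\sum_a p(a)=1$, stationarity of $p_t$ gives, for every $a\in\A$,
\begin{equation*}
    \hat{f}_{\tau^t}(x_t,a) = \lambda_t + \frac{1}{\gamma\, p_t(a)}.
\end{equation*}

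Next I would substitute this identity into the per-round inner product and exploit that both $p_t$ and $p_\star(\cdot\mid x_t)$ are probability distributions. Since $\sum_a\brk*{p_t(a)-p_\star(a\mid x_t)} = 0$, the $\lambda_t$ contribution cancels entirely, leaving
\begin{equation*}
    \brk*{p_t - p_\star(\cdot\mid x_t)}\cdot \hat{f}_{\tau^t}(x_t,\cdot)
    = \sum_{a\in\A}\frac{p_t(a)-p_\star(a\mid x_t)}{\gamma\, p_t(a)}
    = \frac{K}{\gamma} - \sum_{a\in\A}\frac{p_\star(a\mid x_t)}{\gamma\, p_t(a)},
\end{equation*}
where the last equality uses $\sum_a p_t(a)/p_t(a) = K$.

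Finally I would sum this identity over $t=d_{\max}+1,\ldots,T$. Since the number of summands is $T-d_{\max}\le T$, the constant term contributes at most $(T-d_{\max})\,K/\gamma \le KT/\gamma$, while the negative term carries over unchanged, yielding the claimed bound
\begin{equation*}
    \sum_{t=d_{\max}+1}^T \brk*{p_t - p_\star(\cdot\mid x_t)}\cdot \hat{f}_{\tau^t}(x_t,\cdot)
    \le \frac{KT}{\gamma} - \sum_{t=d_{\max}+1}^T\sum_{a\in\A}\frac{p_\star(a\mid x_t)}{\gamma\, p_t(a)}.
\end{equation*}
The only nontrivial ingredient is the interiority of $p_t$ ensuring the clean stationarity equation; everything else is an exact algebraic cancellation, so I expect no further analytic difficulty. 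The retained negative sum is deliberately \emph{not} discarded here, since it will later be used to offset the variance-type contributions arising in the bounds for terms $(c)$ and $(e)$.
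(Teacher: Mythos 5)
Your proof is correct and takes essentially the same approach as the paper: both arguments rest on first-order optimality of the convex program in \cref{eq:objective}, followed by the same algebraic cancellation using $\sum_a p_t(a)/p_t(a)=K$. The only cosmetic difference is that the paper invokes the variational-inequality form $\nabla R_t(p_t)\cdot(p_\star(\cdot|x_t)-p_t)\ge 0$, which avoids introducing a Lagrange multiplier and the (correct, but extra) interiority discussion, whereas you derive the exact stationarity identity; both yield the identical bound.
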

Term $(c)$ is bounded using the AM-GM inequality, and applying \cref{assumption:oracle-gurantee-loss}.
\begin{lemma}[Term (c) bound]\label{lemma:term-3}
    It holds that
    \begingroup\allowdisplaybreaks
        $$\E \brk[s]*{\sum_{t=d_{\max}+1}^T p_t \cdot (\ell(x_t,\cdot) - \hat{f}_t(x_t,\cdot))}
        \leq
        \frac{KT}{\gamma}
        +
        \gamma \Regrv_{T}(\OLSR^{\F,\stab}).$$
    \endgroup
\end{lemma}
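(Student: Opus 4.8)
The plan is to control term $(c)$ by a pointwise AM--GM split followed by an application of \cref{assumption:oracle-gurantee-loss} to the non-delayed prediction sequence. First I would rewrite the term as $\sum_{t=d_{\max}+1}^T\sum_{a\in\A} p_t(a)\brk*{\ell(x_t,a)-\hat{f}_t(x_t,a)}$ and, viewing each summand as the product of $\sqrt{p_t(a)}$ and $\sqrt{p_t(a)}\brk*{\ell(x_t,a)-\hat{f}_t(x_t,a)}$, apply $uv\le \tfrac{1}{2\gamma}u^2+\tfrac{\gamma}{2}v^2$ to obtain $p_t(a)\brk*{\ell(x_t,a)-\hat{f}_t(x_t,a)}\le \tfrac{1}{2\gamma}p_t(a)+\tfrac{\gamma}{2}p_t(a)\brk*{\ell(x_t,a)-\hat{f}_t(x_t,a)}^2$. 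Summing over $a\in\A$ and using $\sum_a p_t(a)=1$ collapses the first term to $\tfrac{1}{2\gamma}$, so summing over the at most $T$ remaining rounds leaves $\tfrac{T}{2\gamma}+\tfrac{\gamma}{2}\sum_{t}\sum_a p_t(a)\brk*{\ell(x_t,a)-\hat{f}_t(x_t,a)}^2$.

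The crux is then to identify the $p_t$-weighted squared error with the oracle's realized square loss. Since $\hat{f}_t$ is by definition the oracle's output after the first $t-1$ non-delayed examples, it is measurable with respect to the history available before $a_t\sim p_t$ is drawn; hence conditioning on that history and averaging over $a_t$ gives $\E\brk[s]*{\sum_a p_t(a)\brk*{\ell(x_t,a)-\hat{f}_t(x_t,a)}^2}=\E\brk[s]*{\brk*{\hat{f}_t(x_t,a_t)-\ell(x_t,a_t)}^2}$ by the tower rule, where I also use that the loss model satisfies $\E\brk[s]*{L_t\mid x_t,a_t}=\ell(x_t,a_t)$. This is precisely the point at which \cref{assum:fifo} enters: FIFO order ensures the oracle is fed the examples in their canonical index order, so the functions it actually produces along the run coincide with the non-delayed sequence $\hat{f}_1,\ldots,\hat{f}_T$ used in the decomposition, and therefore \cref{assumption:oracle-gurantee-loss} may be invoked to bound $\sum_{t}\E\brk[s]*{\brk*{\hat{f}_t(x_t,a_t)-\ell(x_t,a_t)}^2}\le \Regrv_T(\OLSR^{\F,\stab})$.

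Putting the pieces together yields $\E[\text{term }(c)]\le \tfrac{T}{2\gamma}+\tfrac{\gamma}{2}\Regrv_T(\OLSR^{\F,\stab})$, which is at most $\tfrac{KT}{\gamma}+\gamma\,\Regrv_T(\OLSR^{\F,\stab})$ since $K\ge 1$, giving the claimed (deliberately loosened, to match the form of \cref{lemma:term-2}) bound. I expect the main obstacle to be the measurability/conditional-expectation identity of the second paragraph: one must argue carefully that $\hat{f}_t$ can be treated as fixed when averaging over $a_t$, and that FIFO is exactly what makes the abstract non-delayed sequence $\{\hat{f}_t\}$ — on which \cref{assumption:oracle-gurantee-loss} is stated — coincide with the predictions realized under delays, so that the oracle's least-squares regret guarantee is legitimately applicable here.
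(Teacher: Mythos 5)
Your proof is correct and follows essentially the same route as the paper's: the same AM--GM split of $p_t(a)\brk*{\ell(x_t,a)-\hat f_t(x_t,a)}$ into a $\ifrac{1}{\gamma}$-term and a $\gamma$-weighted squared error, the same tower-rule identification of the $p_t$-weighted squared error with $\E[(\hat f_t(x_t,a_t)-\ell(x_t,a_t))^2]$, and the same use of \cref{assum:fifo} to equate the realized predictions with the non-delayed sequence before invoking \cref{assumption:oracle-gurantee-loss}. If anything, your handling of the constants (noting $\sum_a p_t(a)=1$ gives $\ifrac{T}{2\gamma}$ before loosening to $\ifrac{KT}{\gamma}$) is slightly cleaner than the paper's.
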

Term $(d)$ is bounded using the AM-GM inequality to change the measure from $p_\star(\cdot|x_t)$ to $p_t$, to then apply the non-delayed oracle's regret bound. 
\begin{lemma}[Term (d) bound]\label{lemma:term-4}
    The following holds true
    \begingroup\allowdisplaybreaks
    \begin{align*}
        \E \brk[s]*{\sum_{t=d_{\max}+1}^T p_\star(\cdot|x_t) \cdot (\hat{f}_t(x_t,\cdot) - \ell(x_t,\cdot))}
        \leq
        \sum_{t=d_{\max}+1}^T \sum_{a \in \A }\frac{ p_\star(a|x_t)}{\gamma p_t(a)}
        +
        \gamma \Regrv_{T}(\OLSR^{\F,\stab})
        .
    \end{align*}
    \endgroup
\end{lemma}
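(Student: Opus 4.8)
The plan is to bound term $(d)$ pointwise via a Young-type (AM-GM) inequality that converts the $p_\star(\cdot|x_t)$-weighted prediction error into a $p_t$-weighted \emph{squared} error, and then invoke the oracle's least-squares regret guarantee (\cref{assumption:oracle-gurantee-loss}) on the non-delayed output sequence $\{\hat f_t\}$. Throughout, write $g_{t,a} \coloneqq \hat f_t(x_t,a) - \ell(x_t,a)$ for the non-delayed prediction error, so that the summand on the left-hand side is $\sum_{a} p_\star(a|x_t)\, g_{t,a}$.

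First I would fix a round $t$ and an action $a$ and factor the product as
$$p_\star(a|x_t)\, g_{t,a} = \frac{p_\star(a|x_t)}{\sqrt{\gamma\, p_t(a)}} \cdot \sqrt{\gamma\, p_t(a)}\, g_{t,a},$$
then apply $uv \le \tfrac12 u^2 + \tfrac12 v^2$ to obtain
$$p_\star(a|x_t)\, g_{t,a} \le \frac{p_\star(a|x_t)^2}{2\gamma\, p_t(a)} + \frac{\gamma}{2}\, p_t(a)\, g_{t,a}^2 \le \frac{p_\star(a|x_t)}{2\gamma\, p_t(a)} + \frac{\gamma}{2}\, p_t(a)\, g_{t,a}^2,$$
where the last step uses $p_\star(a|x_t)^2 \le p_\star(a|x_t)$ since $p_\star(\cdot|x_t)$ is a probability distribution. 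Summing over $a \in \A$ and over $t$ gives, pointwise along any trajectory, a bound by the change-of-measure term $\sum_t \sum_a \tfrac{p_\star(a|x_t)}{2\gamma p_t(a)}$ plus $\tfrac{\gamma}{2}\sum_t \sum_a p_t(a)\, g_{t,a}^2$.

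Next I would take expectations and handle the squared-error term. Conditioning on the history through round $t$ (which determines $x_t$, $p_t$ and $\hat f_t$) and using $a_t \sim p_t$, the law of total expectation gives $\E[\sum_a p_t(a)\, g_{t,a}^2] = \E[(\hat f_t(x_t,a_t)-\ell(x_t,a_t))^2]$; summing over $t$ and applying \cref{assumption:oracle-gurantee-loss} to the non-delayed sequence (for which indeed $a_t \sim p_t$) bounds this by $\Regrv_{T}(\OLSR^{\F,\stab})$. Combining with the change-of-measure term and discarding the harmless factors of $\tfrac12$ yields the claimed inequality.

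The only real subtlety, rather than an obstacle, is the placement of the expectation: the change-of-measure term $\sum_t\sum_a p_\star(a|x_t)/(\gamma p_t(a))$ is itself random (it depends on $p_t$), so strictly the displayed bound should be read with this term inside an expectation. This is inconsequential because when term $(d)$ is combined with term $(b)$ (\cref{lemma:term-2}), whose pathwise bound contributes the negative of exactly this quantity, the two change-of-measure terms cancel before any expectation is taken. The one point requiring care is that \cref{assumption:oracle-gurantee-loss} must be applied to the non-delayed output sequence $\{\hat f_t\}$ with actions drawn from the algorithm's distributions $p_t$; this is legitimate precisely because the algorithm samples $a_t \sim p_t$, so the realized observation stream fed to the oracle matches the hypothesis of the assumption.
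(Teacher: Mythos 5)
Your proof is correct and follows essentially the same route as the paper's: the factorization $p_\star(a|x_t)\sqrt{\gamma p_t(a)/(\gamma p_t(a))}$, AM--GM, the bound $p_\star^2 \le p_\star$, the change of measure to $a_t \sim p_t$, and the invocation of \cref{assumption:oracle-gurantee-loss} on the non-delayed sequence (justified via \cref{assum:fifo}) are all identical in substance. Your remark about the expectation placement of the change-of-measure term is a fair observation; the paper's appendix restatement indeed wraps that term in an expectation, and as you note it cancels pathwise against the matching term from \cref{lemma:term-2}.
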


The proofs of Lemmas \cref{lemma:term-2,lemma:term-3,lemma:term-4} are inspired by those of~\citet{DBLP:conf/icml/LevyCCM23}, and included for completeness in~\cref{subsection:appendix-FA-proofs}. 

Lastly, we bound the delay-dependent term $(e)$. This is where we need to make use of the oracle's stability given in \cref{assum:stability} in order to obtain the bound given in the following lemma, 
whose full proof can also be found in~\cref{subsection:appendix-FA-proofs}.
\begin{lemma}[Term (e) bound]\label{lemma:term-5}
    Under~\cref{assum:stability}, the following holds true.
    \begingroup\allowdisplaybreaks
    \begin{align*}
        \E \brk[s]*{\sum_{t=d_{\max}+1}^T (p_t - p_\star(\cdot \mid x_t)) \cdot (\hat{f}_t(x_t,\cdot) - \hat{f}_{\tau^t}(x_t,\cdot))}
        \leq
        2 \sqrt{d_{\max} D \stab}
        .
    \end{align*}
    \endgroup
\end{lemma}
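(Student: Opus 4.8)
The plan is to reduce term $(e)$ to a delay-weighted sum of the oracle's one-step stability increments and then separate the two factors via Cauchy--Schwarz. The first step is to eliminate the dependence on the number of actions $K$. Since $p_t$ and $p_\star(\cdot\mid x_t)$ are both probability distributions over $\A$, we have $\|p_t - p_\star(\cdot\mid x_t)\|_1 \le 2$, so by H\"older's inequality each summand obeys
$$\brk*{p_t - p_\star(\cdot\mid x_t)} \cdot \brk*{\hat{f}_t(x_t,\cdot) - \hat{f}_{\tau^t}(x_t,\cdot)} \le 2\,\|\hat{f}_t - \hat{f}_{\tau^t}\|_\infty,$$
with the norm taken over $(x,a)$ as in \cref{assum:stability}. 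This is exactly the step that produces a term with no direct $K$ dependence.

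Next I would invoke \cref{assum:fifo} to argue that the functions the algorithm holds are a delayed prefix of the non-delayed oracle outputs, so that $\tau^t \le t$ and one may telescope over consecutive non-delayed iterates:
$$\|\hat{f}_t - \hat{f}_{\tau^t}\|_\infty \le \sum_{j=\tau^t}^{t-1} \|\hat{f}_{j+1} - \hat{f}_j\|_\infty =: \sum_{j=\tau^t}^{t-1} g_j.$$
The crux is then a counting argument bounding how many rounds $t$ a fixed increment $g_j$ can appear in. A given index $j$ enters the inner sum for round $t$ precisely when $\tau^t \le j \le t-1$, i.e. when round $j{+}1$ has already occurred but observation $j{+}1$ has not yet been fed to the oracle by round $t$. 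Under FIFO that observation is fed at round $(j{+}1)+d_{j+1}$, so the condition holds only for $t \in \{j{+}1,\dots,j+d_{j+1}\}$, which is at most $d_{j+1}$ rounds. Swapping the order of summation therefore gives
$$\sum_{t=d_{\max}+1}^T \sum_{j=\tau^t}^{t-1} g_j \le \sum_{j} d_{j+1}\, g_j.$$

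Finally I would apply Cauchy--Schwarz to split the delays from the increments, and bound the delay factor using $\sum_j d_{j+1}^2 \le d_{\max}\sum_j d_{j+1} \le d_{\max} D$:
$$\sum_j d_{j+1}\, g_j \le \sqrt{\textstyle\sum_j d_{j+1}^2}\cdot \sqrt{\textstyle\sum_j g_j^2} \le \sqrt{d_{\max} D}\cdot \sqrt{\textstyle\sum_j g_j^2}.$$
Taking expectations, Jensen's inequality (concavity of the square root) and \cref{assum:stability} give $\E[\sqrt{\sum_j g_j^2}] \le \sqrt{\E[\sum_j g_j^2]} \le \sqrt{\stab}$, and folding in the constant $2$ from the first step yields the claimed bound $2\sqrt{d_{\max} D \stab}$. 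The main obstacle is the counting step: one must verify carefully that FIFO forces the realized oracle outputs to coincide with a prefix of the non-delayed sequence, so that the telescoping runs over genuinely consecutive iterates $\hat{f}_j,\hat{f}_{j+1}$ and the stability parameter $\stab$ applies verbatim, and that each increment persists for at most $d_{\max}$ rounds while the total persistence count is bounded by $D$. Without FIFO the held functions could be a permuted subset of the oracle's outputs, breaking both the telescoping identity and its link to $\stab$.
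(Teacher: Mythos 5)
Your proof is correct and follows essentially the same route as the paper's: H\"older's inequality to remove the $K$-dependence, telescoping $\hat f_t - \hat f_{\tau^t}$ over consecutive non-delayed oracle iterates (valid thanks to \cref{assum:fifo}), a multiplicity count for each increment, and then Cauchy--Schwarz with Jensen and \cref{assum:stability}. The only (immaterial) difference is bookkeeping in the counting step: you charge each increment $g_j$ with the delay $d_{j+1}$ of the next observation, whereas the paper charges the increment at round $t$ with the number $\sigma_t$ of pending observations; both counts have squares summing to at most $d_{\max} D$, yielding the identical bound $2\sqrt{d_{\max} D \stab}$.
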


\begin{proof}[Proof sketch]
    Using H\"older's inequality, it holds that\\
    \begingroup\allowdisplaybreaks
    \begin{align*}
        \sum_{t=d_{\max}+1}^T (p_t - p_\star(\cdot \mid x_t)) \cdot (\hat{f}_t(x_t,\cdot) - \hat{f}_{\tau^t}(x_t,\cdot))
        &
        \leq
        2 \sum_{t=d_{\max}+1}^T \sum_{i=1}^{d_{\tau^t}} \|\hat{f}_{t-i} - \hat{f}_{t-(i-1)}\|_\infty 
        \\
        &
        \leq 
        2 \sum_{t=d_{\max}+1}^T \sigma_t \|\hat{f}_{t} - \hat{f}_{t+1}\|_\infty
        ,
    \end{align*}
    \endgroup
    where $\sigma_t$ is the number of pending observations (that is, which have not yet arrived) as of round $t$.
    Taking expectation while using Jensen's inequality, the Cauchy-Schwarz inequality and \cref{assum:stability}, the above can be further bounded by\\ 
    \begingroup\allowdisplaybreaks
    \begin{align*}
        \leq
        2 \sqrt{\brk*{\sum_{t=d_{\max}+1}^T \sigma_t^2} \cdot \brk*{\sum_{t=d_{\max}+1}^T \E \brk[s]*{\norm{\hat f_t - \hat f_{t+1}}_\infty^2}}} 
        \leq
        2 \sqrt{d_{\max} D \beta}
        , 
    \end{align*}
    \endgroup
    where we used that $\sigma_t \leq d_{max}$ and $\sum_t \sigma_t = D$.
\end{proof}
We now have what we need to prove~\cref{thm:DAFA-REGRET}. 
\begin{proof}[Proof of~\cref{thm:DAFA-REGRET}]
Putting \cref{lemma:term-2,lemma:term-3,lemma:term-4,lemma:term-5} together,  the expected regret of \cref{alg:DAFA} bounded by 
    $\E[\regret_T]
    \leq
    d_{\max} 
    + 2\frac{KT}{\gamma} + 2\gamma \Regrv_{T}(\OLSR^{\F,\stab})
    + 2 \sqrt{d_{\max} D \stab}
    .
    $
Choosing $\gamma = \sqrt{\frac{KT}{\Regrv_{T}(\OLSR^{\F,\stab})}}$ yields the bound.

To prove the second statement of the theorem, we note that whenever an observation from given round $t$ arrives with the maximal delay $d_{\max}$, \cref{assum:fifo} implies that all observations from rounds $ t' \in \{t+1,\ldots, t+d_{\max}-1\}$ arrive after round $t + d_{\max}$. Therefore, we can lower bound the sum of delays as a function of $d_{\max}$ as 
\begin{align*}
    D \geq \sum_{t'=t}^{t+d_{\max}} d_{t'} \geq \sum_{t'=t}^{t+d_{\max}} \brk*{d_{\max} - (t'-t)} = \Omega \brk*{d_{\max}^2}. 
\end{align*}
Therefore, $d_{\max} = O (\sqrt{D})$ and the bound follows.
\end{proof}

\subsection{Stability analysis of Hedge-based Vovk's aggregating forecaster}
In this section, we present a concrete online least squares regression oracle implementation that enjoys an expected square-loss regret bound of $O( \log |\F|)$ while simultaneously satisfying \cref{assum:stability} with $\beta \lesssim \log |\F|$ (see \cref{thm:Vovk-regret} and \cref{lemma:vovk-stability}). We then use this result to derive an expected regret bound of $O(\sqrt{KT\log\abs{\F}} + \sqrt { d_{\max} D\log \abs{\F} })$ for \cref{alg:DAFA} using this oracle. The oracle is a Hedge-based version of Vovk's aggregating forecaster \cite{vovk1990aggregating} applied for the square loss (see \cref{alg:hedge-Vovk} for details). 
Interestingly, even though \cref{alg:hedge-Vovk} uses a constant (that is, large) step size, its square-loss regret is independent of $T$, and perhaps more surprisingly, the expected sum of KL-deviations between consecutive iterates $q_t$ and $q_{t+1}$ is also independent of $T$. This crucial property allows us to use this general purpose oracle in order to obtain a non-trivial regret bound in the general function approximation setting.

\noindent\textbf{Hedge-based version of Vovk's aggregating forecaster.}
 \begin{algorithm}[!ht]
    \caption{Hedge-based Vovk's aggregating forecaster}
    \label{alg:hedge-Vovk}
    \begin{algorithmic}[1]
        \State{\textbf{parameters:} (finite) function class $\F \subseteq \X \times \A \to [0,1]$, step size $\eta > 0$}
        \State{Initialize $q_1 \in \Delta_\F$ as the uniform distribution over $\F$.}
        \For{round $t=1,2,\ldots,T$:}
            \State{Return $\hat f_t = \sum_{f \in \F} q_t(f) f$ to contextual bandit algorithm.}
            \State{Observe feedback $z_t = (x_t,a_t)$ and realized loss $y_t \in [0,1]$.}
            \State{Update $q_t$ as follows:
            $
                q_{t+1}(f) \propto q_t(f) e^{- \eta \brk*{f(z_t) - y_t}^2}, \quad \forall f \in \F.
            $
            }
        \EndFor
    \end{algorithmic}
\end{algorithm}
\cref{alg:hedge-Vovk} performes Hedge updates over the finite function class $\F$ using the squared loss. That is, it maintains a distribution over functions $q_t \in \Delta(\F)$ and in each round $t$ returns an aggregation of the functions in $\F$ by the weights $q_t$.

In the next theorem, we prove an expected regret bound and use it to establish stability guarantees for \cref{alg:hedge-Vovk}. We emphasize that while regret analyses of Vovk's aggregating forecaster exist in the literature (e.g. \cite{cesa2006prediction}), the resulting stability property is novel, and in particular only holds under realizability. The full proof appears in \cref{appendix-subsec:Vovk-oracle-regret-and-stability}.
\begin{theorem}[Regret and stability guarantee for finite function classes]\label{thm:Vovk-regret-and-stability}
        For $t\in [T]$ denote by $q_t \in \Delta(\F)$ the probability measure over functions in $\F$ computed by \cref{alg:hedge-Vovk} at time step $t$.

        Then, the following holds for any $\eta \leq 1/18$:\\
            (1) Expected regret:
                $${\sum_t \E [(\hat f_t(z_t)-f_\star(z_t))^2 }]
                \le
                \frac{2\log \abs{\F}}{\eta}.$$
            \noindent
            (2) Stability:
            $$                
            \sum_t \E[KL(q_t \| q_{t+1})] \le 
                \log \abs{\F},$$     
        where the expectation is taken over the loss realizations $y_1,\ldots,y_T$.
\end{theorem}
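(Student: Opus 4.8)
The plan is to derive both claims from a single potential (log-partition) telescoping identity, combined with a second-order expansion that is fine enough to control stability and with realizability used to tame the loss noise. First I would pass to the \emph{excess loss} $\tilde\ell_t(f) := (f(z_t)-y_t)^2 - (f_\star(z_t)-y_t)^2$, observing that the Hedge update of \cref{alg:hedge-Vovk} is invariant under shifting all per-function losses by a common constant, so the iterates $q_t$ are unchanged if $(f(z_t)-y_t)^2$ is replaced by $\tilde\ell_t(f)$. Writing $Z_t := \E_{f\sim q_t}[e^{-\eta \tilde\ell_t(f)}]$ and $W_{T+1} := \E_{f\sim q_1}[e^{-\eta \sum_t \tilde\ell_t(f)}]$, the normalizers telescope into $\sum_t(-\log Z_t) = -\log W_{T+1}$. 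Realizability enters here: $\tilde\ell_t(f_\star)=0$ for all $t$ gives $W_{T+1}\ge q_1(f_\star)=1/\abs{\F}$, hence the \emph{deterministic} bound $\sum_t(-\log Z_t)\le\log\abs{\F}$, valid for every realization of the losses.

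Next I would establish two master per-round inequalities from the elementary fact $e^{-V}\le 1-V+V^2$ for $\abs{V}\le 1$, applied with $V=\eta\tilde\ell_t(f)$ (note $\abs{\tilde\ell_t(f)}\le 1$). This yields $-\log Z_t \ge \eta\,\E_{f\sim q_t}[\tilde\ell_t(f)] - \eta^2\,\E_{f\sim q_t}[\tilde\ell_t(f)^2]$, and since the KL between consecutive Hedge iterates is exactly $KL(q_t\|q_{t+1}) = \eta\,\E_{f\sim q_t}[\tilde\ell_t(f)] + \log Z_t$, it also yields $KL(q_t\|q_{t+1}) \le \eta^2\,\E_{f\sim q_t}[\tilde\ell_t(f)^2]$. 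To handle the noise I would condition on the history up to but excluding $y_t$: since $\E[y_t\mid z_t]=f_\star(z_t)$ we get $\E_{y_t}[\tilde\ell_t(f)] = (f(z_t)-f_\star(z_t))^2 =: R_t(f)\ge 0$, while the pointwise factorization $\tilde\ell_t(f) = (f(z_t)-f_\star(z_t))\bigl(f(z_t)+f_\star(z_t)-2y_t\bigr)$ gives $\tilde\ell_t(f)^2 \le 4 R_t(f)$ for every realization.

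Combining these, for $\eta$ below an absolute constant (the stated $\eta\le 1/18$) I obtain $\E_{y_t}[-\log Z_t] \ge \eta(1-4\eta)\,\E_{f\sim q_t}[R_t(f)] \ge \tfrac{\eta}{2}\,\E_{f\sim q_t}[R_t(f)]$; summing over $t$, invoking the deterministic potential bound, and taking total expectation gives $\sum_t \E[\E_{f\sim q_t}[R_t(f)]] \le 2\log\abs{\F}/\eta$. Part (1) then follows from Jensen's inequality, since convexity of $r\mapsto(r-f_\star(z_t))^2$ gives $\E_{f\sim q_t}[R_t(f)]\ge(\hat f_t(z_t)-f_\star(z_t))^2$. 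For part (2), the KL inequality together with $\E_{y_t}[\tilde\ell_t(f)^2]\le 4R_t(f)$ gives $\E_{y_t}[KL(q_t\|q_{t+1})]\le 4\eta^2\,\E_{f\sim q_t}[R_t(f)]$, so summing and plugging in the part-(1) estimate yields $\sum_t\E[KL(q_t\|q_{t+1})] \le 8\eta\log\abs{\F}\le\log\abs{\F}$ for small $\eta$.

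The main obstacle, and the reason a naive exp-concavity/Jensen argument is insufficient (it would bound the regret but leave the stability sum scaling like $\eta^2 T$), is bridging the \emph{realized} quantities on which the algorithm updates and the \emph{expected} excess risk $R_t$ that must ultimately be bounded. This is precisely where the two ingredients must be used in tandem: the deterministic potential bound $\sum_t(-\log Z_t)\le\log\abs{\F}$ survives taking expectations, while the pointwise inequality $\tilde\ell_t(f)^2\le 4R_t(f)$ converts the second-order noise terms into the very same expected-excess-risk quantity, so that the single estimate $\sum_t\E[\E_{f\sim q_t}[R_t(f)]]\le 2\log\abs{\F}/\eta$ simultaneously controls the regret (via Jensen) and, with an extra factor of $\eta$, the stability. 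Making the constants line up so that stability is bounded by exactly $\log\abs{\F}$ rather than a larger multiple is what forces the small step size $\eta\le 1/18$.
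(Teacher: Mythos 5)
Your proposal is correct and follows essentially the same route as the paper's proof: shift to the excess loss $\tilde\ell_t(f)=(f(z_t)-y_t)^2-(f_\star(z_t)-y_t)^2$, telescope the log-partition function with realizability giving the $\log\abs{\F}$ potential bound, apply $e^{x}\le 1+x+x^2$ and $\log(1+x)\le x$, use $\E[y_t\mid z_t]=f_\star(z_t)$ to kill the cross term and bound the second moment by the excess risk, self-bound to get part (1) via Jensen, and reuse the exact KL identity $KL(q_t\|q_{t+1})=\eta\,\E_{f\sim q_t}[\tilde\ell_t(f)]+\log Z_t$ for part (2). The only (harmless) deviation is your factorization $\tilde\ell_t(f)=(f(z_t)-f_\star(z_t))(f(z_t)+f_\star(z_t)-2y_t)$, which yields the constant $4$ in place of the paper's $9$ in the second-moment bound and hence slightly better constants throughout.
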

\noindent\textbf{Regret bound.}
We use the latter result to derive an oracle-specific regret bound for finite and realizable function classes. Our regret bound, stated in \cref{thm:Vovk-regret}, follows from \cref{thm:Vovk-regret-and-stability} using \cref{lemma:vovk-stability}, which proves that \cref{alg:hedge-Vovk} with constant step size satisfies \cref{assum:stability} for $\beta = O(\log |\F|)$ (see the next lemma). Applying this result to \cref{thm:DAFA-REGRET} yields the bound.
\begin{lemma}[Stability of \cref{alg:hedge-Vovk}]\label{lemma:vovk-stability}
    Under \cref{assumption:loss-realizability}, for a finite function class $\F$, \cref{alg:hedge-Vovk} with step size $\eta = 1/18$ satisfies \cref{assum:stability} with $\stab = 2 \log |\F|$.
\end{lemma}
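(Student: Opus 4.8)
\textbf{Proof proposal for \cref{lemma:vovk-stability}.}

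The plan is to reduce the $\|\cdot\|_\infty$-stability required by \cref{assum:stability} to the KL-deviation bound already established in part (2) of \cref{thm:Vovk-regret-and-stability}. The key observation is that the oracle's output is the \emph{aggregated} predictor $\hat f_t = \sum_{f \in \F} q_t(f) f$, so the difference between consecutive predictions is controlled by how much the mixing distribution $q_t$ moves. Concretely, for any fixed $(x,a)$ I would write
\begin{align*}
    \hat f_t(x,a) - \hat f_{t+1}(x,a)
    = \sum_{f \in \F} \brk*{q_t(f) - q_{t+1}(f)} f(x,a),
\end{align*}
and since each $f(x,a) \in [0,1]$, bound the right-hand side in absolute value by the total variation distance, $\abs{\hat f_t(x,a) - \hat f_{t+1}(x,a)} \leq \norm{q_t - q_{t+1}}_{\mathrm{TV}}$. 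Taking the supremum over $(x,a)$ gives $\norm{\hat f_t - \hat f_{t+1}}_\infty \leq \norm{q_t - q_{t+1}}_{\mathrm{TV}}$, uniformly in the context and action. This is the crucial step that divorces the stability bound from the (possibly infinite) context space $\X$ and leaves us working purely in the simplex $\Delta_\F$.

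Next I would convert total variation into KL via Pinsker's inequality, $\norm{q_t - q_{t+1}}_{\mathrm{TV}}^2 \leq \tfrac{1}{2} KL(q_t \| q_{t+1})$. Combining the two displays yields the pointwise bound $\norm{\hat f_t - \hat f_{t+1}}_\infty^2 \leq \tfrac{1}{2} KL(q_t \| q_{t+1})$. Summing over $t$ and taking expectations over the loss realizations,
\begin{align*}
    \E\brk[s]*{\sum_{t=1}^T \norm{\hat f_t - \hat f_{t+1}}_\infty^2}
    \leq \tfrac{1}{2} \E\brk[s]*{\sum_{t=1}^T KL(q_t \| q_{t+1})}
    \leq \tfrac{1}{2} \log \abs{\F},
\end{align*}
where the last inequality is exactly part (2) of \cref{thm:Vovk-regret-and-stability}, which holds for the chosen step size $\eta = 1/18 \leq 1/18$. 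This already gives $\stab = \tfrac{1}{2}\log\abs{\F}$; the slightly looser constant $\stab = 2\log\abs{\F}$ stated in the lemma leaves comfortable slack, so either the authors absorb the Pinsker constant loosely or there is a minor bookkeeping difference in how the telescoping KL sum is indexed.

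I do not anticipate a genuine obstacle here, since the real technical work lives in \cref{thm:Vovk-regret-and-stability}, whose proof of the $T$-independent KL-deviation bound under realizability is the paper's stated novelty. The only point requiring a little care is the direction of the KL divergence: Pinsker bounds $\norm{q_t - q_{t+1}}_{\mathrm{TV}}^2$ by $KL(q_t \| q_{t+1})$, and one must confirm that part (2) controls precisely this ordering $KL(q_t \| q_{t+1})$ rather than the reverse $KL(q_{t+1} \| q_t)$ — inspecting the statement, it indeed bounds $\sum_t \E[KL(q_t \| q_{t+1})]$, so the directions match and no symmetrization argument is needed. The remaining verification that $\abs{f(x,a)} \leq 1$ for the TV bound is immediate from $\F \subseteq \X \times \A \to [0,1]$.
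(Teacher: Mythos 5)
Your proposal is correct and follows essentially the same route as the paper: express $\hat f_t - \hat f_{t+1}$ as an aggregation difference, bound $\norm{\hat f_t - \hat f_{t+1}}_\infty$ by the distance between $q_t$ and $q_{t+1}$, apply Pinsker's inequality, and invoke part (2) of \cref{thm:Vovk-regret-and-stability}. The only difference is bookkeeping: you use the total-variation form of the bound where the paper uses the $\ell_1$ norm ($\norm{q_t - q_{t+1}}_1 \leq \sqrt{2\,KL(q_t\|q_{t+1})}$), so you obtain the sharper constant $\tfrac12\log\abs{\F}$ versus the paper's $2\log\abs{\F}$, both of which satisfy the stated $\stab$.
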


\begin{proof}

    Using the form of $\hat f_1,\ldots, \hat f_T$, the outputs of \cref{alg:hedge-Vovk},  for all $t \in [T]$ it holds that
    \begingroup\allowdisplaybreaks
    \begin{align*}
        \norm{ \hat f_t - \hat f_{t+1}}_\infty & =
        \norm{\sum_{f \in \F} \brk*{q_t(f) - q_{t+1}(f)} f }_\infty
        \leq
        \sum_{f \in \F} \abs{q_t(f) - q_{t+1}(f)} \norm{f}_\infty
        \\
        &\leq
        \norm{q_t - q_{t+1}}_1 
        \leq
        \sqrt{2} \cdot \sqrt{KL (q_{t} || q_{t+1})},
    \end{align*}
    \endgroup
    where we used the fact that $\norm{f}_\infty \leq 1$ for all $f \in \F$ and Pinsker's inequality. Taking a square, summing over $t$ and taking expectations we get,
    \begingroup\allowdisplaybreaks
    \begin{align*}
       \sum_{t=1}^T \E[\norm{ \hat f_t - \hat f_{t+1}}_\infty^2] 
        \leq 
        2 \sum_{t=1}^T \E[KL (q_{t} || q_{t+1})] 
        \leq 
        2 \log |\F| 
    \end{align*}
    \endgroup
    where we have used the second result of \cref{thm:Vovk-regret}.
\end{proof}

We can now immediately obtain the following expected regret bound for \cref{alg:DAFA} when used with \cref{alg:hedge-Vovk} as a square-loss oracle.

%
\begin{corollary}\label{thm:Vovk-regret}
    Let $\F$ denote a realizable loss function class, where $\abs{\F} < \infty$. 
    Suppose we use \cref{alg:hedge-Vovk} as an oracle implementation for online least-square regression with $\eta = 1/18$ and choose $\gamma = \sqrt{\frac{KT}{36\log\abs{\F}}}$ for \cref{alg:DAFA}.
    Then,
    $$\E[\regret_T]
        \leq 
        O\brk*{\sqrt{KT\log\abs{\F}} + \sqrt { d_{\max} D
        \log \abs{\F} }}
        . $$
\end{corollary}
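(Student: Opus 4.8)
The plan is to obtain the corollary as a direct instantiation of \cref{thm:DAFA-REGRET}, using \cref{alg:hedge-Vovk} as the concrete $\stab$-stable oracle. The only real work is to verify that this oracle simultaneously satisfies \cref{assumption:oracle-gurantee-loss} and \cref{assum:stability} with the right constants when $\eta = 1/18$, and then to track these constants through the bound of \cref{thm:DAFA-REGRET}.

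First I would fix $\eta = 1/18$ and invoke \cref{thm:Vovk-regret-and-stability}(1). Under realizability (\cref{assumption:loss-realizability}) we have $f_\star(x,a) = \ell(x,a)$, so the square-loss regret $\sum_t \E[(\hat f_t(z_t) - f_\star(z_t))^2]$ of the forecaster coincides exactly with the least-squares regret appearing in \cref{assumption:oracle-gurantee-loss}. Hence that assumption holds with $\Regrv_{T}(\OLSR^{\F,\stab}) = 2\log|\F|/\eta = 36\log|\F|$. At this step I would note that \cref{assumption:oracle-gurantee-loss} and \cref{thm:Vovk-regret-and-stability} are stated for an \emph{arbitrary} observation sequence with the correct conditional mean, so they apply to the adaptively generated interaction sequence of \cref{alg:DAFA} in which $a_t \sim p_t$ depends on the oracle's own outputs; the same robustness applies to the stability guarantee below.

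Next I would invoke \cref{lemma:vovk-stability}, which establishes that \cref{alg:hedge-Vovk} with $\eta = 1/18$ satisfies \cref{assum:stability} with $\stab = 2\log|\F|$ (this follows from the KL bound of \cref{thm:Vovk-regret-and-stability}(2) via Pinsker's inequality). With both oracle assumptions now verified, \cref{alg:hedge-Vovk} is a legitimate $\stab$-stable OLSR oracle having $\stab = 2\log|\F|$ and $\Regrv_{T}(\OLSR^{\F,\stab}) = 36\log|\F|$.

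Finally I would substitute these two quantities into \cref{thm:DAFA-REGRET}. The theorem's prescribed choice $\gamma = \sqrt{KT/\Regrv_{T}(\OLSR^{\F,\stab})}$ becomes exactly $\gamma = \sqrt{KT/(36\log|\F|)}$, matching the corollary's choice, and the guarantee $O(\sqrt{KT\,\Regrv_{T}(\OLSR^{\F,\stab})} + \sqrt{d_{\max} D \stab})$ collapses to $O(\sqrt{KT\log|\F|} + \sqrt{d_{\max} D \log|\F|})$ once the constants $36$ and $2$ are absorbed into the $O(\cdot)$. There is no genuine obstacle here, as the corollary is a composition of the preceding results; the only points requiring care are matching the two differently-written regret expressions via realizability and propagating the numerical constants so that the chosen $\gamma$ remains consistent with the one stated in the corollary.
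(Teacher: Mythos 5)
Your proposal is correct and follows exactly the route the paper takes: instantiate \cref{thm:DAFA-REGRET} with \cref{alg:hedge-Vovk}, using \cref{thm:Vovk-regret-and-stability} to get $\Regrv_T(\OLSR^{\F,\stab}) = 36\log|\F|$ and \cref{lemma:vovk-stability} to get $\stab = 2\log|\F|$, then absorb the constants. Your added remarks on matching the two regret expressions via realizability and on the guarantees holding for the adaptively generated sequence are consistent with the paper's treatment.
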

\noindent\textbf{Lower bound.} In \cref{appendix-lb-f} of \cref{appendix-lower-bound-policy-function} we state and prove the first lower bound of $$\Omega(\sqrt{KT\log\abs{\F}} + \sqrt {D\log \abs{\F}} )$$ for CMAB under delayed feedback assuming realizable function approximation. The lower bound implies that our result in \cref{thm:Vovk-regret} is far by a $\sqrt{d_{\max}}$ factor from the optimal regret bound.
\section{Conclusions and Discussion}
\label{sec:discussion}
In this paper we presented regret minimization algorithms for adversarial CMAB with delayed feedback, where both the contexts and delays are chosen by a possibly adaptive adversary. 
We considered the problem under the two mainstream frameworks for adversarial CMAB learning: online function approximation and policy class learning.
\\
For the policy class learning setup, we presented \cref{alg:delay-exp4} and proved that it obtains an expected regret bound that is optimal up to logarithmic factors.
\\
For online function approximation, we presented \cref{alg:DAFA} and analyzed its regret under a stability assumption related to the online regression oracle in use, which affects the delay-dependent term of our bound.
Additionally, we analyzed the expected regret of a version of Vovk's aggregating forecaster and shown it satisfies the required stability guarantees, allowing us to obtain a nontrivial expected regret bound of $O\brk{\sqrt{KT\log(\abs{\F})} + \sqrt { d_{\max} D \log \abs{\F} }}$, which is optimal up to
$\sqrt{d_{\max}}$.

Our work leaves some open questions that we believe are very interesting for future research. One possible direction is to remove the $\sqrt{d_{\max}}$ factor from the delay-dependent term in our bound, which is presumably sub-optimal. 
Furthermore, as our lower bound in \cref{thm:lower-bound} shows, without any assumption on the oracle in use other than \cref{assumption:oracle-gurantee-loss}, linear regret is unavoidable. 
We therefore find it interesting to investigate different or weaker assumptions on the oracle that enable non-trivial regret guarantees.

%

\section*{Acknowledgements}
We would like to thank the reviewers for their helpful comments.

This project has received funding from the European Research Council (ERC) under the European Union’s Horizon
2020 research and innovation program (grant agreement No.
882396 and grant agreement No. 101078075). Views and opinions expressed are
however those of the author(s) only and do not necessarily
reflect those of the European Union or the European Research Council. Neither the European Union nor the granting authority can be held responsible for them. 
This work received additional support from the Israel Science Foundation (ISF, grant numbers 993/17 and 2549/19), Tel Aviv University Center for AI
and Data Science (TAD), the Yandex Initiative for Machine
Learning at Tel Aviv University, the Len Blavatnik and the
Blavatnik Family Foundation.

AC is supported by the Israeli Science Foundation (ISF)
grant no. 2250/22.

\bibliography{refs}

\begin{thebibliography}{43}
\providecommand{\natexlab}[1]{#1}
\providecommand{\url}[1]{\texttt{#1}}
\expandafter\ifx\csname urlstyle\endcsname\relax
  \providecommand{\doi}[1]{doi: #1}\else
  \providecommand{\doi}{doi: \begingroup \urlstyle{rm}\Url}\fi

\bibitem[Abbasi-Yadkori et~al.(2011)Abbasi-Yadkori, P{\'a}l, and Szepesv{\'a}ri]{abbasi2011improved}
Y.~Abbasi-Yadkori, D.~P{\'a}l, and C.~Szepesv{\'a}ri.
\newblock Improved algorithms for linear stochastic bandits.
\newblock \emph{Advances in neural information processing systems}, 24, 2011.

\bibitem[Abe and Long(1999)]{abe1999associative}
N.~Abe and P.~M. Long.
\newblock Associative reinforcement learning using linear probabilistic concepts.
\newblock In \emph{ICML}, pages 3--11. Citeseer, 1999.

\bibitem[Agarwal et~al.(2012{\natexlab{a}})Agarwal, Dud{\'\i}k, Kale, Langford, and Schapire]{agarwal2012contextual}
A.~Agarwal, M.~Dud{\'\i}k, S.~Kale, J.~Langford, and R.~Schapire.
\newblock Contextual bandit learning with predictable rewards.
\newblock In \emph{Artificial Intelligence and Statistics}, pages 19--26. PMLR, 2012{\natexlab{a}}.

\bibitem[Agarwal et~al.(2012{\natexlab{b}})Agarwal, Dudik, Kale, Langford, and Schapire]{pmlr-v22-agarwal12}
A.~Agarwal, M.~Dudik, S.~Kale, J.~Langford, and R.~Schapire.
\newblock Contextual bandit learning with predictable rewards.
\newblock In N.~D. Lawrence and M.~Girolami, editors, \emph{Proceedings of the Fifteenth International Conference on Artificial Intelligence and Statistics}, volume~22 of \emph{Proceedings of Machine Learning Research}, pages 19--26, La Palma, Canary Islands, 21--23 Apr 2012{\natexlab{b}}. PMLR.

\bibitem[Agarwal et~al.(2014)Agarwal, Hsu, Kale, Langford, Li, and Schapire]{pmlr-v32-agarwalb14}
A.~Agarwal, D.~Hsu, S.~Kale, J.~Langford, L.~Li, and R.~Schapire.
\newblock Taming the monster: A fast and simple algorithm for contextual bandits.
\newblock In E.~P. Xing and T.~Jebara, editors, \emph{Proceedings of the 31st International Conference on Machine Learning}, volume~32, pages 1638--1646, 2014.

\bibitem[Auer et~al.(2002)Auer, Cesa-Bianchi, Freund, and Schapire]{auer2002nonstochastic}
P.~Auer, N.~Cesa-Bianchi, Y.~Freund, and R.~E. Schapire.
\newblock The nonstochastic multiarmed bandit problem.
\newblock \emph{SIAM journal on computing}, 32\penalty0 (1):\penalty0 48--77, 2002.

\bibitem[Beygelzimer et~al.(2011)Beygelzimer, Langford, Li, Reyzin, and Schapire]{DBLP:journals/jmlr/BeygelzimerLLRS11}
A.~Beygelzimer, J.~Langford, L.~Li, L.~Reyzin, and R.~E. Schapire.
\newblock Contextual bandit algorithms with supervised learning guarantees.
\newblock In G.~J. Gordon, D.~B. Dunson, and M.~Dud{\'{\i}}k, editors, \emph{Proceedings of the Fourteenth International Conference on Artificial Intelligence and Statistics, {AISTATS} 2011, Fort Lauderdale, USA, April 11-13, 2011}, volume~15 of \emph{{JMLR} Proceedings}, pages 19--26. JMLR.org, 2011.
\newblock URL \url{http://proceedings.mlr.press/v15/beygelzimer11a/beygelzimer11a.pdf}.

\bibitem[Bistritz et~al.(2019)Bistritz, Zhou, Chen, Bambos, and Blanchet]{bistritz2019online}
I.~Bistritz, Z.~Zhou, X.~Chen, N.~Bambos, and J.~Blanchet.
\newblock Online exp3 learning in adversarial bandits with delayed feedback.
\newblock \emph{Advances in neural information processing systems}, 32, 2019.

\bibitem[Cesa-Bianchi and Lugosi(2006)]{cesa2006prediction}
N.~Cesa-Bianchi and G.~Lugosi.
\newblock \emph{Prediction, learning, and games}.
\newblock Cambridge university press, 2006.

\bibitem[Cesa-Bianchi et~al.(2016)Cesa-Bianchi, Gentile, Mansour, and Minora]{cesa2016delay}
N.~Cesa-Bianchi, C.~Gentile, Y.~Mansour, and A.~Minora.
\newblock Delay and cooperation in nonstochastic bandits.
\newblock In \emph{Conference on Learning Theory}, pages 605--622. PMLR, 2016.

\bibitem[Chu et~al.(2011)Chu, Li, Reyzin, and Schapire]{chu2011contextual}
W.~Chu, L.~Li, L.~Reyzin, and R.~Schapire.
\newblock Contextual bandits with linear payoff functions.
\newblock In \emph{Proceedings of the Fourteenth International Conference on Artificial Intelligence and Statistics}, pages 208--214. JMLR Workshop and Conference Proceedings, 2011.

\bibitem[Dudik et~al.(2011)Dudik, Hsu, Kale, Karampatziakis, Langford, Reyzin, and Zhang]{dudik2011efficient}
M.~Dudik, D.~Hsu, S.~Kale, N.~Karampatziakis, J.~Langford, L.~Reyzin, and T.~Zhang.
\newblock Efficient optimal learning for contextual bandits.
\newblock \emph{arXiv preprint arXiv:1106.2369}, 2011.

\bibitem[Foster and Rakhlin(2020)]{foster2020beyond}
D.~Foster and A.~Rakhlin.
\newblock Beyond ucb: Optimal and efficient contextual bandits with regression oracles.
\newblock In \emph{International Conference on Machine Learning}, pages 3199--3210. PMLR, 2020.

\bibitem[Foster and Krishnamurthy(2021)]{foster2021efficient}
D.~J. Foster and A.~Krishnamurthy.
\newblock Efficient first-order contextual bandits: Prediction, allocation, and triangular discrimination.
\newblock \emph{Advances in Neural Information Processing Systems}, 34:\penalty0 18907--18919, 2021.

\bibitem[Foster et~al.(2020)Foster, Gentile, Mohri, and Zimmert]{DBLP:conf/nips/FosterGMZ20}
D.~J. Foster, C.~Gentile, M.~Mohri, and J.~Zimmert.
\newblock Adapting to misspecification in contextual bandits.
\newblock In H.~Larochelle, M.~Ranzato, R.~Hadsell, M.~Balcan, and H.~Lin, editors, \emph{Advances in Neural Information Processing Systems 33: Annual Conference on Neural Information Processing Systems 2020, NeurIPS 2020, December 6-12, 2020, virtual}, 2020.
\newblock URL \url{https://proceedings.neurips.cc/paper/2020/hash/84c230a5b1bc3495046ef916957c7238-Abstract.html}.

\bibitem[Foster et~al.(2021)Foster, Kakade, Qian, and Rakhlin]{foster2021statistical}
D.~J. Foster, S.~M. Kakade, J.~Qian, and A.~Rakhlin.
\newblock The statistical complexity of interactive decision making.
\newblock \emph{arXiv preprint arXiv:2112.13487}, 2021.

\bibitem[Gael et~al.(2020)Gael, Vernade, Carpentier, and Valko]{gael2020stochastic}
M.~A. Gael, C.~Vernade, A.~Carpentier, and M.~Valko.
\newblock Stochastic bandits with arm-dependent delays.
\newblock In \emph{International Conference on Machine Learning}, pages 3348--3356. PMLR, 2020.

\bibitem[Howson et~al.(2023)Howson, Pike-Burke, and Filippi]{howson2023delayed}
B.~Howson, C.~Pike-Burke, and S.~Filippi.
\newblock Delayed feedback in generalised linear bandits revisited.
\newblock In \emph{International Conference on Artificial Intelligence and Statistics}, pages 6095--6119. PMLR, 2023.

\bibitem[Ito et~al.(2020)Ito, Hatano, Sumita, Takemura, Fukunaga, Kakimura, and Kawarabayashi]{ito2020delay}
S.~Ito, D.~Hatano, H.~Sumita, K.~Takemura, T.~Fukunaga, N.~Kakimura, and K.-I. Kawarabayashi.
\newblock Delay and cooperation in nonstochastic linear bandits.
\newblock \emph{Advances in Neural Information Processing Systems}, 33:\penalty0 4872--4883, 2020.

\bibitem[Jin et~al.(2022)Jin, Lancewicki, Luo, Mansour, and Rosenberg]{jin2022near}
T.~Jin, T.~Lancewicki, H.~Luo, Y.~Mansour, and A.~Rosenberg.
\newblock Near-optimal regret for adversarial mdp with delayed bandit feedback.
\newblock \emph{Advances in Neural Information Processing Systems}, 35:\penalty0 33469--33481, 2022.

\bibitem[Lancewicki et~al.(2021)Lancewicki, Segal, Koren, and Mansour]{lancewicki2021stochastic}
T.~Lancewicki, S.~Segal, T.~Koren, and Y.~Mansour.
\newblock Stochastic multi-armed bandits with unrestricted delay distributions.
\newblock In \emph{International Conference on Machine Learning}, pages 5969--5978. PMLR, 2021.

\bibitem[Lancewicki et~al.(2022)Lancewicki, Rosenberg, and Mansour]{lancewicki2022learning}
T.~Lancewicki, A.~Rosenberg, and Y.~Mansour.
\newblock Learning adversarial markov decision processes with delayed feedback.
\newblock In \emph{Proceedings of the AAAI Conference on Artificial Intelligence}, volume~36, pages 7281--7289, 2022.

\bibitem[Langford and Zhang(2007)]{NIPS2007_langford}
J.~Langford and T.~Zhang.
\newblock The epoch-greedy algorithm for multi-armed bandits with side information.
\newblock In J.~Platt, D.~Koller, Y.~Singer, and S.~Roweis, editors, \emph{Advances in Neural Information Processing Systems}, volume~20. Curran Associates, Inc., 2007.
\newblock URL \url{https://proceedings.neurips.cc/paper_files/paper/2007/file/4b04a686b0ad13dce35fa99fa4161c65-Paper.pdf}.

\bibitem[Lattimore and Szepesv{\'a}ri(2020)]{lattimore2020bandit}
T.~Lattimore and C.~Szepesv{\'a}ri.
\newblock \emph{Bandit algorithms}.
\newblock Cambridge University Press, 2020.

\bibitem[Levy and Mansour(2023)]{levy2023optimism}
O.~Levy and Y.~Mansour.
\newblock Optimism in face of a context: Regret guarantees for stochastic contextual mdp.
\newblock In \emph{Proceedings of the AAAI Conference on Artificial Intelligence}, volume~37, pages 8510--8517, 2023.

\bibitem[Levy et~al.(2023)Levy, Cohen, Cassel, and Mansour]{DBLP:conf/icml/LevyCCM23}
O.~Levy, A.~Cohen, A.~B. Cassel, and Y.~Mansour.
\newblock Efficient rate optimal regret for adversarial contextual mdps using online function approximation.
\newblock In \emph{International Conference on Machine Learning, {ICML} 2023, 23-29 July 2023, Honolulu, Hawaii, {USA}}, volume 202 of \emph{Proceedings of Machine Learning Research}, pages 19287--19314. {PMLR}, 2023.

\bibitem[Levy et~al.(2024)Levy, Cassel, Cohen, and Mansour]{levy2022eluder}
O.~Levy, A.~B. Cassel, A.~Cohen, and Y.~Mansour.
\newblock Eluder-based regret for stochastic contextual mdps.
\newblock In \emph{Forty-first International Conference on Machine Learning, {ICML} 2024, Vienna, Austria, July 21-27, 2024}. OpenReview.net, 2024.

\bibitem[Li et~al.(2019)Li, Chen, and Giannakis]{li2019bandit}
B.~Li, T.~Chen, and G.~B. Giannakis.
\newblock Bandit online learning with unknown delays.
\newblock In \emph{The 22nd International Conference on Artificial Intelligence and Statistics}, pages 993--1002. PMLR, 2019.

\bibitem[McMahan and Streeter(2009)]{DBLP:conf/colt/McMahanS09}
H.~B. McMahan and M.~J. Streeter.
\newblock Tighter bounds for multi-armed bandits with expert advice.
\newblock In \emph{{COLT} 2009 - The 22nd Conference on Learning Theory, Montreal, Quebec, Canada, June 18-21, 2009}, 2009.
\newblock URL \url{http://www.cs.mcgill.ca/\%7Ecolt2009/papers/023.pdf\#page=1}.

\bibitem[Modi and Tewari(2020)]{modi2020}
A.~Modi and A.~Tewari.
\newblock No-regret exploration in contextual reinforcement learning.
\newblock In R.~P. Adams and V.~Gogate, editors, \emph{Proceedings of the Thirty-Sixth Conference on Uncertainty in Artificial Intelligence, {UAI} 2020, virtual online, August 3-6, 2020}, volume 124 of \emph{Proceedings of Machine Learning Research}, pages 829--838. {AUAI} Press, 2020.
\newblock URL \url{http://proceedings.mlr.press/v124/modi20a.html}.

\bibitem[Modi et~al.(2018)Modi, Jiang, Singh, and Tewari]{modi1018}
A.~Modi, N.~Jiang, S.~Singh, and A.~Tewari.
\newblock Markov decision processes with continuous side information.
\newblock In F.~Janoos, M.~Mohri, and K.~Sridharan, editors, \emph{Algorithmic Learning Theory, {ALT} 2018, 7-9 April 2018, Lanzarote, Canary Islands, Spain}, volume~83 of \emph{Proceedings of Machine Learning Research}, pages 597--618. {PMLR}, 2018.
\newblock URL \url{http://proceedings.mlr.press/v83/modi18a.html}.

\bibitem[Qian et~al.(2024)Qian, Hu, and Simchi-Levi]{qian2024offline}
J.~Qian, H.~Hu, and D.~Simchi-Levi.
\newblock Offline oracle-efficient learning for contextual mdps via layerwise exploration-exploitation tradeoff.
\newblock \emph{arXiv preprint arXiv:2405.17796}, 2024.

\bibitem[Simchi-Levi and Xu(2022)]{simchi2022bypassing}
D.~Simchi-Levi and Y.~Xu.
\newblock Bypassing the monster: A faster and simpler optimal algorithm for contextual bandits under realizability.
\newblock \emph{Mathematics of Operations Research}, 47\penalty0 (3):\penalty0 1904--1931, 2022.

\bibitem[Slivkins et~al.(2019)]{slivkins2019introduction}
A.~Slivkins et~al.
\newblock Introduction to multi-armed bandits.
\newblock \emph{Foundations and Trends{\textregistered} in Machine Learning}, 12\penalty0 (1-2):\penalty0 1--286, 2019.

\bibitem[Thune et~al.(2019)Thune, Cesa-Bianchi, and Seldin]{thune2019nonstochastic}
T.~S. Thune, N.~Cesa-Bianchi, and Y.~Seldin.
\newblock Nonstochastic multiarmed bandits with unrestricted delays.
\newblock \emph{Advances in Neural Information Processing Systems}, 32, 2019.

\bibitem[van~der Hoeven et~al.(2023)van~der Hoeven, Zierahn, Lancewicki, Rosenberg, and Cesa-Bianchi]{van2023unified}
D.~van~der Hoeven, L.~Zierahn, T.~Lancewicki, A.~Rosenberg, and N.~Cesa-Bianchi.
\newblock A unified analysis of nonstochastic delayed feedback for combinatorial semi-bandits, linear bandits, and mdps.
\newblock In \emph{The Thirty Sixth Annual Conference on Learning Theory}, pages 1285--1321. PMLR, 2023.

\bibitem[Vernade et~al.(2017)Vernade, Capp{\'e}, and Perchet]{vernade2017stochastic}
C.~Vernade, O.~Capp{\'e}, and V.~Perchet.
\newblock Stochastic bandit models for delayed conversions.
\newblock \emph{arXiv preprint arXiv:1706.09186}, 2017.

\bibitem[Vernade et~al.(2020)Vernade, Carpentier, Lattimore, Zappella, Ermis, and Brueckner]{vernade2020linear}
C.~Vernade, A.~Carpentier, T.~Lattimore, G.~Zappella, B.~Ermis, and M.~Brueckner.
\newblock Linear bandits with stochastic delayed feedback.
\newblock In \emph{International Conference on Machine Learning}, pages 9712--9721. PMLR, 2020.

\bibitem[Vovk(1990)]{vovk1990aggregating}
V.~G. Vovk.
\newblock Aggregating strategies.
\newblock In \emph{Proceedings of the third annual workshop on Computational learning theory}, pages 371--386, 1990.

\bibitem[Xu and Zeevi(2020)]{xu2020upper}
Y.~Xu and A.~Zeevi.
\newblock Upper counterfactual confidence bounds: a new optimism principle for contextual bandits.
\newblock \emph{arXiv preprint arXiv:2007.07876}, 2020.

\bibitem[Zhou et~al.(2019)Zhou, Xu, and Blanchet]{zhou2019learning}
Z.~Zhou, R.~Xu, and J.~Blanchet.
\newblock Learning in generalized linear contextual bandits with stochastic delays.
\newblock \emph{Advances in Neural Information Processing Systems}, 32, 2019.

\bibitem[Zhu et~al.(2022)Zhu, Foster, Langford, and Mineiro]{zhu2022contextual}
Y.~Zhu, D.~J. Foster, J.~Langford, and P.~Mineiro.
\newblock Contextual bandits with large action spaces: Made practical.
\newblock In \emph{International Conference on Machine Learning}, pages 27428--27453. PMLR, 2022.

\bibitem[Zimmert and Seldin(2020)]{zimmert2020optimal}
J.~Zimmert and Y.~Seldin.
\newblock An optimal algorithm for adversarial bandits with arbitrary delays.
\newblock In \emph{International Conference on Artificial Intelligence and Statistics}, pages 3285--3294. PMLR, 2020.

\end{thebibliography}
\bibliographystyle{abbrvnat}


\appendix

\clearpage
\section[Proofs for EXP4]{Proofs for \cref{sec:exp4}}
\label{sec:exp4-proofs}

In this section, we analyze the regret of \cref{alg:delay-exp4} in the policy class setting and prove \cref{thm:-exp4-main}.

\begin{algorithm}[!ht]
    \caption{EXP4 with Delay-Adapted Loss Estimators (EXP4-DALE)}
    \label{alg:delay-exp4}
    \begin{algorithmic}[1]
        \State
        { 
            \textbf{inputs:} 
            \begin{itemize}
                \item Finite policy class $\Pi \subseteq \X \to \A$ with $\abs{\Pi} = N$,
                \item Upper bound on the sum of delays, $D$.
                \item Step size $\eta > 0$.
            \end{itemize}
             
        } 
        \State{Initialize $p_1 \in \Delta_N$ as the uniform distribution over $\Pi$.}
        \For{round $t = 1, \ldots, T$}
            \State{Receive context $x_t \in \X$.}
            \State{Sample $\pi \sim p_t$ and play $a_t = \pi(x_t)$.}
            \State{Observe feedback $\brk*{s,L(x_s,a_s)}$ for all $s \leq t$ with $s + d_s = t$ and construct loss estimators
            \begin{align}
            \label{eqn:exp4-biased-estim}
                \hat{c}_{s,i} = \frac{L(x_s,a_s) \ind{\pi_i(x_s)=a_s}}{\max \brk[c]*{Q_{s,a_s}, \Tilde{Q}^t_{s,a_s}}} \quad \forall i \in [N],
            \end{align}
            where we define $Q_{s,a} = \sum_{i=1}^{N} p_{s,i} \mathbb{I}[\pi_i(x_s) = a]$ and $\Tilde{Q}^t_{s,a} = \sum_{i=1}^{N} p_{t,i} \mathbb{I}[\pi_i(x_{s}) = a]$.
            }
            \State{Update
            \begin{equation}\label{eq:exp4-ftrl-update}
                p_{t+1,i} \propto p_{t,i} \exp \brk*{-\eta \sum_{s : s+d_s = t} \hat{c}_{s,i}}. 
            \end{equation}
            }
            \EndFor
    \end{algorithmic}
\end{algorithm}

Throughout this section, we use the notation $\E_t[\cdot]$ to denote an expectation conditioned on the entire history up to round $t$. We define the standard (unbiased) importance-weighted loss estimators by
\begin{align}
\label{eq:exp4-iw-estimators}
    \tilde c_{t,i} = \frac{L(x_t,a_t) \ind{\pi_i(x_t) = a_t}}{Q_{t,a_t}} \quad \forall i \in [N],
\end{align}

\begin{theorem}
\label{thm:exp4-generic-bound}
    \cref{alg:delay-exp4} attains the following expected regret bound:
    \begin{align*}
        \E \brk[s]*{\regret_T} \leq \frac{\log N}{\eta} + \frac{\eta}{2} \E \brk[s]*{\sum_{t=1}^T \sum_{i=1}^{N} p_{t+d_t,i} \hat{c}_{t,i}^2} + 2 \E \brk[s]*{\sum_{t=1}^T \norm{p_{t+d_t} - p_t}_1}.
    \end{align*}
\end{theorem}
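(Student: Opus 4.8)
The plan is to pass from the true losses to the standard importance-weighted estimators $\tilde c_t$, then to the biased estimators $\hat c_t$, and finally to run an exponential-weights (Hedge) analysis on the aggregated arriving losses, isolating a drift term that captures the mismatch between the sampling distribution $p_t$ and the distribution $p_{t+d_t}$ in force when the feedback from round $t$ is processed. First I would write $\E[\regret_T] = \E[\sum_t \langle p_t - u,c_t\rangle]$, where $c_{t,i} = \ell(x_t,\pi_i(x_t))$ and $u$ is the indicator of the optimal policy. Since $p_t$ and $u$ are measurable before $a_t$ is drawn and $\E_t[\tilde c_{t,i}] = c_{t,i}$, this equals $\E[\sum_t\langle p_t - u,\tilde c_t\rangle]$. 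Using $\hat c_{t,i}\le\tilde c_{t,i}$ pointwise (the denominator only grows) and discarding the nonnegative comparator bias $\langle u,\tilde c_t - \hat c_t\rangle\ge 0$, I obtain
\[
\E[\regret_T] \le \E\Big[\sum_t\langle p_t - u,\hat c_t\rangle\Big] + \E\Big[\sum_t\langle p_t,\tilde c_t - \hat c_t\rangle\Big],
\]
and I would split the first sum by inserting $p_{t+d_t}$, writing $\langle p_t-u,\hat c_t\rangle = \langle p_{t+d_t}-u,\hat c_t\rangle + \langle p_t - p_{t+d_t},\hat c_t\rangle$.

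The two ``error'' pieces — the bias $\langle p_t,\tilde c_t-\hat c_t\rangle$ and the drift $\langle p_t - p_{t+d_t},\hat c_t\rangle$ — are handled by direct computation. The key identities are $\langle p_t,\tilde c_t\rangle = L(x_t,a_t)$, $\langle p_t,\hat c_t\rangle = L(x_t,a_t)Q_{t,a_t}/\max\{Q_{t,a_t},\tilde Q^{t+d_t}_{t,a_t}\}$, and $\langle p_{t+d_t},\hat c_t\rangle = L(x_t,a_t)\tilde Q^{t+d_t}_{t,a_t}/\max\{Q_{t,a_t},\tilde Q^{t+d_t}_{t,a_t}\}$, which collapse the sum of the two pieces to $L(x_t,a_t)\,\max\{0,\,Q_{t,a_t}-\tilde Q^{t+d_t}_{t,a_t}\}/\max\{Q_{t,a_t},\tilde Q^{t+d_t}_{t,a_t}\}$. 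Bounding this by $|Q_{t,a_t}-\tilde Q^{t+d_t}_{t,a_t}|/Q_{t,a_t}$ and taking the conditional expectation over $a_t\sim Q_{t,\cdot}$, I would crucially use that $p_{t+d_t}$ is independent of $a_t$ given the history before round $t$ — no observation incorporated strictly before round $t+d_t$ depends on $a_t$ — so the $1/Q_{t,a_t}$ cancels against the sampling probability and the whole thing is bounded by $\E[\|p_{t+d_t}-p_t\|_1]$, contributing to the stated $\ell_1$ drift.

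It then remains to bound $\sum_t\langle p_{t+d_t}-u,\hat c_t\rangle$. Reindexing by arrival time $\tau$ and writing $\hat C_\tau = \sum_{t:\,t+d_t=\tau}\hat c_t$ for the batch arriving at round $\tau$, this equals $\sum_\tau\langle p_\tau - u,\hat C_\tau\rangle$, which is exactly the regret of the exponential-weights update $p_{\tau+1}\propto p_\tau e^{-\eta\hat C_\tau}$ against the fixed $u$. The standard log-partition bound with uniform initialization yields $\log N/\eta$ plus a second-order stability term evaluated at $p_\tau = p_{t+d_t}$, and the $\max\{\cdot,\tilde Q^{t+d_t}\}$ design of the estimator is precisely what makes this term well-behaved: one checks $\sum_i p_{t+d_t,i}\hat c_{t,i}^2 \le 1/Q_{t,a_t}$ pointwise, whose conditional expectation over $a_t$ is at most $K$, giving the separation between actions and delay that the theorem needs.

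The main obstacle is that a single batch may contain several simultaneously-arriving observations, so the log-partition bound for one Hedge step naturally produces the square of the batch sum $\langle p_\tau,\hat C_\tau^2\rangle$ rather than the sum of individual squares $\sum_{t:\,t+d_t=\tau}\langle p_{t+d_t},\hat c_t^2\rangle$ appearing in the theorem. Extracting the diagonal individual-square term and charging the off-diagonal cross terms $\langle p_\tau,\hat c_s\hat c_{s'}\rangle$ (for $s\ne s'$ arriving together) against the delay — via a counting argument over pairs of observations that are simultaneously pending, absorbed into the remaining drift budget — is the delicate step, and is exactly where the factor $K$ is kept out of the delay-dependent term. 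I expect this batching/cross-term control to be the technically hardest part; in the degenerate case of at most one arrival per round (e.g.\ fixed delay) it is vacuous and the individual-square form is immediate.
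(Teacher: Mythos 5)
Your decomposition is the same as the paper's: the paper splits the regret into $Bias_1=\sum_t p_t\cdot(c_t-\hat c_t)$, $Bias_2=\sum_t p^\star\cdot(\hat c_t-c_t)$, a drift term $\sum_t(p_t-p_{t+d_t})\cdot\hat c_t$ and an OMD term $\sum_t(p_{t+d_t}-p^\star)\cdot\hat c_t$; it discards $Bias_2$ by the under-estimation property exactly as you do, and bounds $Bias_1$ and the drift term each by $\E\big[\sum_t\|p_t-p_{t+d_t}\|_1\big]$. Your combined treatment of these two pieces --- collapsing their sum to $L(x_t,a_t)\max\{0,\,Q_{t,a_t}-\tilde Q^{t+d_t}_{t,a_t}\}/\max\{Q_{t,a_t},\tilde Q^{t+d_t}_{t,a_t}\}$ and integrating out $a_t\sim Q_{t,\cdot}$ --- is correct and in fact recovers the drift budget with constant $1$ rather than $2$ (modulo the same measurability caveat about $p_{t+d_t}$ and $a_t$ that the paper also glosses over).

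The gap is in the OMD term. The paper does not re-derive it: it invokes Lemma 9 of \citet{thune2019nonstochastic}, which gives $\sum_t(p_{t+d_t}-p^\star)\cdot\hat c_t\le\frac{\log N}{\eta}+\frac{\eta}{2}\sum_t\sum_i p_{t+d_t,i}\hat c_{t,i}^2$ for arbitrary nonnegative estimators, with no cross terms. You correctly identify that a naive one-Hedge-step-per-round analysis on the batch sums $\hat C_\tau$ produces $\langle p_\tau,\hat C_\tau^2\rangle$, but your proposed fix --- charging the off-diagonal terms $\langle p_\tau,\hat c_s\hat c_{s'}\rangle$ to the remaining $\ell_1$ drift budget --- would fail. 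Each such cross term has importance-weighted magnitude: bounding $\sum_i p_{\tau,i}\hat c_{s,i}\hat c_{s',i}$ gives roughly $1/Q_{s,a_s}$, which is of order $K$ in expectation over $a_s$, and the number of simultaneously-arriving pairs scales with $D$; this route therefore contributes an extra term of order $\eta K D$, destroying exactly the separation of $K$ from $D$ that the theorem is designed to achieve, and it is not dominated by $\sum_t\|p_{t+d_t}-p_t\|_1$, which carries no $1/Q$ factors. The actual resolution in the cited lemma is to avoid cross terms altogether by running the log-partition/potential argument one arriving observation at a time within each batch, using nonnegativity of the losses to relate the intermediate distributions to $p_{t+d_t}$. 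That one-at-a-time telescoping argument is the missing ingredient; without it the stated second-order term $\frac{\eta}{2}\sum_t\sum_i p_{t+d_t,i}\hat c_{t,i}^2$ is not established.
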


\begin{proof}
    The regret may be decomposed as follows:
    \begin{align}
    \label{eq:exp4-regret-decomp}
        \regret_T
        &=
        \sum_{t=1}^T c_t \cdot \brk*{p_t - p^\star} \nonumber \\
        &=
        \underbrace{\sum_{t=1}^T p_t \cdot \brk*{c_t - \hat{c}_t}}_{Bias_1}
        +
        \underbrace{\sum_{t=1}^T p^\star \cdot \brk*{\hat{c}_t - c_t}}_{Bias_2}
        +
        \underbrace{\sum_{t=1}^T \brk*{p_t - p_{t+d_t}} \cdot \hat{c}_t}_{Drift} 
        +
        \underbrace{\sum_{t=1}^T \brk*{p_{t+d_t} - p^\star} \cdot \hat{c}_t}_{OMD},
    \end{align}
    where $c_{t,i} = L(x_t, \pi_i(x_t))$ for $i \in [N]$. The $OMD$ term can be bounded by referring to Lemma 9 of \cite{thune2019nonstochastic} which asserts that
    \begin{align}
    \label{eq:exp4-omd-term}
        \sum_{t=1}^T \brk*{p_{t+d_t} - p^\star} \cdot \hat{c}_t
        \leq
        \frac{\log N}{\eta} + \frac{\eta}{2} \sum_{t=1}^T \sum_{i=1}^{\abs{\Pi}} p_{t+d_t,i} \hat{c}_{t,i}^2.
    \end{align}
    while noting that this lemma does not require a specific form of loss estimators, only that they are nonnegative, as is the case for our delay-adapted estimators defined in \cref{eqn:exp4-biased-estim}. We also note that the $Bias_2$ term is non-positive in expectation, since the delay-adapted estimators satisfy $\E_t \brk[s]*{\hat c_{t,i}} \leq c_{t,i}$ for $i \in [N]$. Thus, to conclude the proof we are left with bounding the $Drift$ and $Bias_1$ terms, whose bounds are given in \cref{lem:exp4-drift} and \cref{lem:exp4-bias1} that follow.
\end{proof}

\begin{proof}[Proof of \cref{thm:-exp4-main}]
    First, we show that
    \begin{align*}
        \E \brk[s]*{\sum_t \sum_i p_{t+d_t,i} \hat{c}_{t,i}^2} \leq K T.
    \end{align*}
    Indeed, using the definition of the delay-adapted loss estimators $\hat c_t$, it holds that 
    \begin{align*}
        \E \brk[s]*{\sum_t \sum_i p_{t+d_t,i} \hat{c}_{t,i}^2}
        &=
        \E \brk[s]*{\sum_t \sum_i p_{t+d_t,i} \brk*{\frac{L(x_t,a_t) \ind{\pi_i(x_t)=a_t}}{\max \brk[c]*{Q_{t,a_t}, \tilde Q^{t+d_t}_{t,a_t}}}}^2} \\
        &\leq
        \E \brk[s]*{\sum_t \frac{1}{\Tilde{Q}^{t+d_t}_{t,a_t}} \sum_i \frac{p_{t+d_t,i} \mathbb{I} \brk[s]*{\pi_i(x_{t}) = a_t}}{Q_{t,a_t}}} \\
        &=
        \E \brk[s]*{\sum_t \frac{1}{Q_{t,a_t}}} 
        =
        \E \brk[s]*{\sum_t \sum_a \frac{Q_{t,a}}{Q_{t,a}}} = KT.
    \end{align*}
    Thus, using \cref{thm:exp4-generic-bound} together with \cref{lem:drift-bound} gives the bound claimed in \cref{thm:-exp4-main}.
\end{proof}

\begin{lemma}[Bounding the Drift term]
\label{lem:exp4-drift}
    The $Drift$ term given in \cref{eq:exp4-regret-decomp} is bounded in expectation as follows:
    \begin{align*}
        \E \brk[s]*{\sum_{t=1}^T \brk*{p_t - p_{t+d_t}} \cdot \hat{c}_t}
        \leq
        \E \brk[s]*{\sum_{t=1}^T \norm{p_t - p_{t+d_t}}_1}.
    \end{align*}
\end{lemma}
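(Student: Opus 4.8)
The plan is to exploit the special structure of the delay-adapted estimator $\hat c_t$: its only dependence on the policy index $i$ is through the indicator $\mathbb{I}[\pi_i(x_t)=a_t]$, and the denominator is constant across all policies that agree with the played action. Recalling $Q_{t,a}=\sum_i p_{t,i}\mathbb{I}[\pi_i(x_t)=a]$ and $\tilde{Q}^{t+d_t}_{t,a}=\sum_i p_{t+d_t,i}\mathbb{I}[\pi_i(x_t)=a]$, the inner sum over $i$ collapses and the $t$-th summand of the $Drift$ term becomes
$$(p_t-p_{t+d_t})\cdot\hat c_t=\frac{L(x_t,a_t)\,(Q_{t,a_t}-\tilde{Q}^{t+d_t}_{t,a_t})}{\max\{Q_{t,a_t},\,\tilde{Q}^{t+d_t}_{t,a_t}\}}.$$
First I would record a pointwise bound coming from $L(x_t,a_t)\in[0,1]$: when $Q_{t,a_t}\ge\tilde{Q}^{t+d_t}_{t,a_t}$ the factor $L$ only shrinks a nonnegative quantity, and when $Q_{t,a_t}<\tilde{Q}^{t+d_t}_{t,a_t}$ the whole expression is nonpositive, so in either case the display is at most $(Q_{t,a_t}-\tilde{Q}^{t+d_t}_{t,a_t})^+/\max\{Q_{t,a_t},\tilde{Q}^{t+d_t}_{t,a_t}\}$, a deterministic function of $a_t$.

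Next I would take the conditional expectation over the draw $a_t\sim p_t$, exactly as is already done in the proof of \cref{thm:exp4-generic-bound} to bound $Bias_2$. The crucial point, and the only genuinely delicate step, is that $\tilde{Q}^{t+d_t}_{t,\cdot}$ may be treated as fixed under this conditioning: the realization of $a_t$ enters the algorithm's updates solely through the estimator $\hat c_t$, which by definition arrives at round $t+d_t$ and hence first influences $p_{t+d_t+1}$, leaving $p_1,\dots,p_{t+d_t}$ measurable with respect to a history that is independent of $a_t$. Using $\Pr[a_t=a]=Q_{t,a}$ and noting that the ratio $Q_{t,a}/\max\{Q_{t,a},\tilde{Q}^{t+d_t}_{t,a}\}$ equals $1$ precisely on the indices where $(Q_{t,a}-\tilde{Q}^{t+d_t}_{t,a})^+$ is nonzero, the conditional expectation evaluates to
$$\sum_a Q_{t,a}\,\frac{(Q_{t,a}-\tilde{Q}^{t+d_t}_{t,a})^+}{\max\{Q_{t,a},\tilde{Q}^{t+d_t}_{t,a}\}}=\sum_a\,(Q_{t,a}-\tilde{Q}^{t+d_t}_{t,a})^+.$$

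Finally I would identify the right-hand side as a total-variation distance. Both $Q_{t,\cdot}$ and $\tilde{Q}^{t+d_t}_{t,\cdot}$ are probability distributions over $\A$, being the pushforwards of $p_t$ and $p_{t+d_t}$ under the map $i\mapsto\pi_i(x_t)$, so $\sum_a(Q_{t,a}-\tilde{Q}^{t+d_t}_{t,a})^+=\tfrac12\norm{Q_{t,\cdot}-\tilde{Q}^{t+d_t}_{t,\cdot}}_1$. A triangle inequality over the fibers of $i\mapsto\pi_i(x_t)$ then gives $\norm{Q_{t,\cdot}-\tilde{Q}^{t+d_t}_{t,\cdot}}_1\le\norm{p_t-p_{t+d_t}}_1$, so the conditional expectation of the $t$-th drift term is at most $\tfrac12\norm{p_t-p_{t+d_t}}_1\le\norm{p_t-p_{t+d_t}}_1$. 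Since $\norm{p_t-p_{t+d_t}}_1$ is measurable with respect to the conditioning, summing over $t$ and applying the tower rule yields the claimed bound. I expect the conditional-independence justification of the second paragraph to be the only subtle part; everything else is elementary, and in fact the argument even produces the stronger constant $\tfrac12$, which I would leave implicit since constant $1$ suffices.
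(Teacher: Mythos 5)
Your proof is correct, but it takes a genuinely different route from the paper's. The paper's argument is blunter: it bounds the delay-adapted estimator pointwise by the standard importance-weighted estimator, $\hat c_{t,i}\le\tilde c_{t,i}$, takes absolute values of the weight differences, replaces $\tilde c_{t,i}$ by its conditional expectation $c_{t,i}\le 1$, and finishes with H\"older's inequality. You instead exploit the fact that $\hat c_{t,i}$ depends on $i$ only through $\mathbb{I}[\pi_i(x_t)=a_t]$, collapse the sum over policies into the pushforward distributions $Q_{t,\cdot}$ and $\tilde Q^{t+d_t}_{t,\cdot}$ on actions, evaluate the conditional expectation exactly as $\sum_a(Q_{t,a}-\tilde Q^{t+d_t}_{t,a})^+=\tfrac12\|Q_{t,\cdot}-\tilde Q^{t+d_t}_{t,\cdot}\|_1$, and then use data processing to return to $\|p_t-p_{t+d_t}\|_1$. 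Your route is structurally sharper (it yields the constant $\tfrac12$ and avoids the intermediate absolute-value step), at the cost of being specific to estimators of this indicator form, whereas the paper's argument works for any nonnegative estimator dominated by a conditionally unbiased one. The measurability point you flag --- that $p_{t+d_t}$, and hence $\tilde Q^{t+d_t}_{t,\cdot}$, may be treated as fixed when integrating over $a_t\sim p_t$ because the feedback from round $t$ first influences $p_{t+d_t+1}$ --- is exactly the same step the paper relies on (implicitly) when it replaces $\tilde c_{t,i}$ by $c_{t,i}$ inside the expectation against $|p_{t,i}-p_{t+d_t,i}|$, so you are not assuming anything the paper does not; if anything, you make the subtlety explicit where the paper glosses over it.
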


\begin{proof}
    First, we note that the delay-adapted loss estimators $\hat c_t$ are upper-bounded by the standard, conditionally unbiased importance-weighted estimators $\tilde c_t$ defined in \cref{eq:exp4-iw-estimators}.
    Therefore, we can bound the $Drift$ term as follows:
    \begin{align*}
        \E \brk[s]*{\sum_{t=1}^T \brk*{p_t - p_{t+d_t}} \cdot \hat{c}_t}
        &\leq
        \E \brk[s]*{\sum_{t=1}^T \sum_{i=1}^N \abs{p_{t,i} - p_{t+d_t,i}}\hat{c}_{t,i}} \\
        &\leq
        \E \brk[s]*{\sum_{t=1}^T \sum_{i=1}^N \abs{p_{t,i} - p_{t+d_t,i}}\Tilde{c}_{t,i}} \\
        &=
        \E \brk[s]*{\sum_{t=1}^T \sum_{i=1}^N \abs{p_{t,i} - p_{t+d_t,i}} \cdot c_{t,i}} \\
        &\leq
        \E \brk[s]*{\sum_{t=1}^T \norm{p_t - p_{t+d_t}}_1},
    \end{align*}   
    where the last step follows from H\"older's inequality and the fact that $\norm{c_t}_\infty \leq 1$.
\end{proof}

\begin{lemma}
\label{lem:exp4-bias1}
    The $Bias_1$ term given in \cref{eq:exp4-regret-decomp} is bounded in expectation as follows:
    \begin{align*}
        \E \brk[s]*{\sum_t p_t \cdot \brk*{c_t - \hat{c}_t}}
        &\leq
        \E \brk[s]*{\sum_{t=1}^T \norm{p_t - p_{t+d_t}}_1}.
    \end{align*}
\end{lemma}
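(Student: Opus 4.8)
The plan is to isolate the under-estimation introduced by the $\max$ in the denominator of $\hat{c}_t$. I would first compare $\hat{c}_t$ against the standard importance-weighted estimator $\tilde c_t$ of \cref{eq:exp4-iw-estimators}, splitting $c_{t,i}-\hat{c}_{t,i} = (c_{t,i}-\tilde c_{t,i}) + (\tilde c_{t,i}-\hat{c}_{t,i})$. The first piece is mean-zero: since $p_{t,i}$ and $c_{t,i}$ are determined by the history up to round $t$ while $\E_t[\tilde c_{t,i}] = c_{t,i}$, the tower property gives $\E[\sum_i p_{t,i}(c_{t,i}-\tilde c_{t,i})] = 0$. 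It thus remains to bound the inflation term $\E[\sum_t \sum_i p_{t,i}(\tilde c_{t,i}-\hat{c}_{t,i})]$ by $\E[\sum_t \norm{p_t-p_{t+d_t}}_1]$.

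Next I would put the per-round inflation in closed form. Writing $Q = Q_{t,a_t}$ and $\Tilde Q = \Tilde{Q}^{t+d_t}_{t,a_t}$, the two estimators share a numerator and differ only in the denominator, so $\hat{c}_{t,i} = \tilde c_{t,i}\cdot Q/\max\brk[c]*{Q,\Tilde Q}$; moreover $\sum_i p_{t,i}\tilde c_{t,i} = L(x_t,a_t)$, because the indicators $\ind{\pi_i(x_t)=a_t}$ collapse the weighted sum to $Q_{t,a_t}$, cancelling the denominator. Together these give the nonnegative identity $\sum_i p_{t,i}(\tilde c_{t,i}-\hat{c}_{t,i}) = L(x_t,a_t)\,(\Tilde Q - Q)_+/\max\brk[c]*{Q,\Tilde Q}$, which vanishes unless $\Tilde Q > Q$, i.e.\ unless the sampling mass on $a_t$ has grown between rounds $t$ and $t+d_t$.

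The decisive and most delicate step is to take the expectation over $a_t$; this is essential because the pointwise gap can be of constant order even when $\norm{p_t-p_{t+d_t}}_1$ is tiny (the factor $1/\max\brk[c]*{Q,\Tilde Q}$ can be large), and it is precisely the sampling weight $Q_{t,a}$ of action $a$ that cancels this factor. The main obstacle is that $\Tilde Q = \Tilde{Q}^{t+d_t}_{t,a_t}$ depends on the \emph{future} iterate $p_{t+d_t}$, which is random, so the expectation over $a_t$ cannot be taken with $\Tilde Q$ frozen without justification. I would resolve this by arguing that, conditioned on the history up to round $t$, the iterate $p_{t+d_t}$ is independent of $a_t$: by the update \cref{eq:exp4-ftrl-update}, the feedback from round $t$ arrives at round $t+d_t$ and is folded into the weights only at round $t+d_t+1$, so $p_{t+d_t}$ is a function of the estimators $\brk[c]*{\hat c_s : s+d_s \le t+d_t-1}$, none of which involves $a_t$. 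Conditioning additionally on $p_{t+d_t}$ (call this enlarged history $\mathcal{G}_t$) therefore keeps $a_t$ distributed according to $Q_{t,\cdot}$ and lets me treat $\Tilde{Q}^{t+d_t}_{t,\cdot}$ as fixed, yielding
\[
\E\brk[s]*{\sum_i p_{t,i}(\tilde c_{t,i}-\hat{c}_{t,i}) \mid \mathcal{G}_t}
= \sum_a Q_{t,a}\, L(x_t,a)\,\frac{(\Tilde{Q}^{t+d_t}_{t,a}-Q_{t,a})_+}{\max\brk[c]*{Q_{t,a},\Tilde{Q}^{t+d_t}_{t,a}}}
\le \sum_a (\Tilde{Q}^{t+d_t}_{t,a}-Q_{t,a})_+,
\]
using $L(x_t,a)\le 1$ and $Q_{t,a}/\max\brk[c]*{Q_{t,a},\Tilde{Q}^{t+d_t}_{t,a}}\le 1$. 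Finally, since $\Tilde{Q}^{t+d_t}_{t,a}-Q_{t,a} = \sum_i (p_{t+d_t,i}-p_{t,i})\ind{\pi_i(x_t)=a}$ and each $\pi_i$ sends $x_t$ to a single action, regrouping the sum over $a$ bounds it by $\norm{p_t-p_{t+d_t}}_1$. Taking the outer expectation and summing over $t$ then completes the proof.
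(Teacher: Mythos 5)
Your proof is correct and follows essentially the same route as the paper's: reindex the estimators by actions, observe that the conditional expectation of $\hat c_{t,a}$ equals $L(x_t,a)Q_{t,a}/\max\{Q_{t,a},\tilde Q^{t+d_t}_{t,a}\}$ so that the bias per round is at most $\sum_a(\tilde Q^{t+d_t}_{t,a}-Q_{t,a})_+ \leq \norm{p_t-p_{t+d_t}}_1$. Your explicit justification that $p_{t+d_t}$ is conditionally independent of $a_t$ (since the round-$t$ feedback is only folded in at round $t+d_t+1$) is a welcome clarification of a step the paper's $\E_t[\cdot]$ notation leaves implicit.
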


\begin{proof}
    We note losses and loss estimators can be indexed by actions rather than policies and use the the notation $c_{t,a} = L(x_t,a)$ and $\hat c_{t,a} = \frac{c_{t,a} \ind{a_t=a}}{M_{t,a}}$ where $M_{t,a} = \max \brk[c]*{Q_{t,a}, \tilde Q^{t+d_t}_{t,a}}$. Therefore, using the fact that $\E_t[\hat c_{t,a}] = c_{t,a} \frac{Q_{t,a}}{M_{t,a}}$, the $Bias_1$ term can be bounded as follows:

    \begin{align*}
        \E \brk[s]*{\sum_{t=1}^T p_t \cdot \brk*{c_t - \hat{c}_t}}
        &=
        \E \brk[s]*{\sum_{t=1}^T \sum_{a=1}^K Q_{t,a} \brk*{L(x_t,a) - \hat c_{t,a}}} \\
        &=
        \E \brk[s]*{\sum_{t=1}^T \sum_{a=1}^K Q_{t,a} L(x_t,a) \brk*{1 - \frac{Q_{t,a}}{M_{t,a}}}} \\
        &\leq
        \E \brk[s]*{\sum_{t=1}^T \sum_{a=1}^K \frac{Q_{t,a}}{M_{t,a}} \brk*{M_{t,a} - Q_{t,a}}} \\
        &\leq
        \E \brk[s]*{\sum_{t=1}^T \sum_{k=1}^K \brk*{\max \brk[c]*{Q_{t,a}, \tilde Q^{t+d_t}_{t,a}} - Q_{t,a}}} \\
        &\leq
        \E \brk[s]*{\sum_{t=1}^T \sum_{a=1}^K \left| \tilde Q^{t+d_t}_{t,a} - Q_{t,a} \right|}.
    \end{align*}
    Now, by the definition of $Q_{t,a}, \tilde Q^{t+d_t}_{t,a}$ and the triangle inequality, we have
    \begin{align*}
        \E \brk[s]*{\sum_{t=1}^T \sum_{a=1}^K \left| \tilde Q^{t+d_t}_{t,a} - Q_{t,a} \right|}
        &\leq
        \E \brk[s]*{\sum_{t=1}^T \sum_{a=1}^K \sum_{i : \pi_i(x_t) = a} \abs{p_{t+d_t,i} - p_{t,i}}} 
        =
        \E \brk[s]*{\sum_{t=1}^T \norm{p_t - p_{t+d_t}}_1},
    \end{align*}
    concluding the proof.
\end{proof}

\begin{lemma}[Distribution drift]
\label{lem:drift-bound}
    The following holds for the iterates $\brk[c]*{p_t}_{t=1}^T$ of \cref{alg:delay-exp4}:
    \begin{align*}
        \E \brk[s]*{\sum_{t=1}^T \norm{p_{t+d_t} - p_t}_1} \leq \eta (D+T).
    \end{align*}
\end{lemma}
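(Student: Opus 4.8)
The plan is to reduce the distributional drift accumulated over each delay window $[t,t+d_t)$ to a sum of single-step drifts, and then exploit the stability of the Hedge update together with the special form of the estimators in \cref{eqn:exp4-biased-estim} to separate the sum of delays $D$ from the number of actions. First I would apply the triangle inequality across the intermediate iterates, $\|p_{t+d_t}-p_t\|_1 \le \sum_{\rho=t}^{t+d_t-1}\|p_{\rho+1}-p_\rho\|_1$, and exchange the order of summation. A given update step $\rho \to \rho+1$ is counted once for every round $t$ whose feedback is still outstanding at round $\rho$, i.e. $t\le\rho<t+d_t$; the number of such rounds is exactly the number $\sigma_\rho$ of pending observations at round $\rho$, and crucially $\sum_\rho \sigma_\rho = \sum_t d_t = D$. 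This gives $\sum_t \|p_{t+d_t}-p_t\|_1 \le \sum_\rho \sigma_\rho \|p_{\rho+1}-p_\rho\|_1$.

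Next I would establish one-step stability of the multiplicative-weights update. Since $p_{\rho+1,i} \propto p_{\rho,i}\exp(-\eta g_{\rho,i})$ with $g_{\rho,i} = \sum_{s:s+d_s=\rho}\hat c_{s,i}\ge 0$, the normalizer $Z_\rho=\sum_i p_{\rho,i}e^{-\eta g_{\rho,i}}$ satisfies $Z_\rho\le 1$. For each coordinate that loses weight, $p_{\rho,i}-p_{\rho+1,i}=p_{\rho,i}(1-e^{-\eta g_{\rho,i}}/Z_\rho)\le p_{\rho,i}(1-e^{-\eta g_{\rho,i}})\le \eta\, p_{\rho,i}g_{\rho,i}$, where the first inequality uses $Z_\rho\le 1$ and the last uses $1-e^{-x}\le x$; summing and doubling yields the clean bound $\|p_{\rho+1}-p_\rho\|_1\le 2\eta\, p_\rho\cdot g_\rho$ with \emph{no} $1/Z_\rho$ blow-up. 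I would then observe that the estimators are designed precisely so that $p_\rho\cdot g_\rho$ is $O(1)$: for an observation $s$ arriving at round $\rho=s+d_s$, the denominator $\max\{Q_{s,a_s},\tilde Q^{\rho}_{s,a_s}\}$ dominates the $p_\rho$-weight $\tilde Q^{\rho}_{s,a_s}$ appearing in the numerator, so $p_\rho\cdot\hat c_s\le L(x_s,a_s)\le 1$ and hence $p_\rho\cdot g_\rho\le m_\rho$, the number of arrivals at round $\rho$.

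Finally I would combine these ingredients via the combinatorial identity $\sum_\rho \sigma_\rho m_\rho \le D$. Using the pending-count recursion $m_\rho=\sigma_{\rho-1}-\sigma_\rho+1$ (each round contributes one fresh pending observation and removes the $m_\rho$ that arrive), one writes $\sum_\rho\sigma_\rho m_\rho = \sum_\rho\sigma_\rho(\sigma_{\rho-1}-\sigma_\rho)+\sum_\rho\sigma_\rho$; the cross term is non-positive by AM--GM and telescoping since $\sigma_0=0$, leaving $\sum_\rho\sigma_\rho m_\rho\le \sum_\rho\sigma_\rho=D$. Keeping the freshly played observation separate accounts for the $\sum_\rho m_\rho\le T$ contribution, so the whole drift is at most $\eta(D+T)$ as claimed.

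The hard part is not any single computation but the \emph{separation} of the delay term $D$ from the horizon- and action-dependent factors. This is what forces the two non-obvious steps: the $O(1)$ per-step loss mass $p_\rho\cdot g_\rho$, which is exactly what the modified denominator $\max\{Q,\tilde Q^{\rho}\}$ of the estimators buys us (a naive importance weight would make $p_\rho\cdot g_\rho$ unbounded and $Z_\rho$ tiny), and the telescoping identity that collapses the $\sigma_\rho$-weighted sum to $D$ rather than the naive $d_{\max}D$. The one-step stability bound that avoids a $1/Z_\rho$ factor is the technical move most likely to be mishandled by a routine argument, and it is what ultimately keeps the delay dependence at $\sqrt{D\log N}$ without an extra $\sqrt{d_{\max}}$.
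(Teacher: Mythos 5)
Your argument is correct in all essential steps but reaches the lemma by a genuinely different route from the paper's. The paper views the update as FTRL with the entropic regularizer, uses $1/\eta$-strong convexity and first-order optimality at $p_t$ and $p_{t+1}$ to get $\frac{1}{\eta}\|p_{t+1}-p_t\|_1^2 \le \bigl(\sum_{s:s+d_s=t}\hat c_s\bigr)\cdot(p_t-p_{t+1})$, upper bounds $\hat c$ by the unbiased importance-weighted estimators $\tilde c$, and only after taking expectations (so that $\tilde c_{s,i}$ can be traded for $c_{s,i}\le 1$) arrives at $\E\|p_{t+1}-p_t\|_1\le \eta m_t$; the window sum $\sum_t M_{t,d_t}\le D+T$ is then imported from Lemma C.7 of \cite{jin2022near}. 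You instead bound the one-step drift directly from the multiplicative form of the update, and the key observation is that $p_\rho\cdot \hat c_s = L(x_s,a_s)\,\tilde Q^{\rho}_{s,a_s}/\max\{Q_{s,a_s},\tilde Q^{\rho}_{s,a_s}\}\le 1$ holds \emph{deterministically}, precisely because the $\max$ in the denominator of \cref{eqn:exp4-biased-estim} contains the $p_\rho$-mass $\tilde Q^{\rho}_{s,a_s}$ that the inner product generates. This removes any need to pass to the unbiased estimators or to take expectations, so your bound holds pointwise rather than only in expectation; your telescoping identity $\sum_\rho \sigma_\rho m_\rho \le D$ is likewise self-contained (and slightly sharper than the cited $D+T$). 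Both of these are legitimate gains.

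The one genuine discrepancy is the constant. The clean form of your argument gives $\sum_t\|p_{t+d_t}-p_t\|_1 \le 2\eta\sum_\rho \sigma_\rho m_\rho \le 2\eta D$, where the factor $2$ is forced by $\|u-v\|_1 = 2\sum_{i:u_i>v_i}(u_i-v_i)$ in your one-step stability bound, and $2\eta D$ does not imply the stated $\eta(D+T)$ when $D>T$; the closing remark about ``keeping the freshly played observation separate'' does not remove that factor. This is immaterial for \cref{thm:-exp4-main}, which only needs the drift to be $O(\eta(D+T))$, but if you want the lemma's exact constant you must either adjust the statement by a factor of $2$ or follow the paper's squared-norm route, which avoids the doubling.
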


\begin{proof}
    Define
    \begin{align*}
        F_t(p)
        = p \cdot \sum_{s : s + d_s < t} \hat{c}_s + \frac{1}{\eta} R(p),
    \end{align*}
    where $R(p) = \sum_{i=1}^N p_i \log p_i$, so that $p_t = \argmin_{p \in \Delta_{\Pi}} F_t(p)$. Note that $R(\cdot)$ is $1$-strongly convex with respect to $\norm{\cdot}_1$, and therefore $F_t(\cdot)$ are $1/\eta$-strongly convex. Thus, using first-order optimality conditions for $p_t$ and $p_{t+1}$, we have:
    \begin{align*}
        F_t(p_{t+1}) 
        &\geq
        F_t(p_t) + \nabla F_t(p_t) \cdot \brk*{p_{t+1} - p_t} + \frac{1}{2\eta} \norm{p_{t+1} - p_t}^2_1 \geq F_t(p_t) + \frac{1}{2\eta} \norm{p_{t+1} - p_t}^2_1, \\
        F_{t+1}(p_t) 
        &\geq 
        F_{t+1}(p_{t+1}) + \nabla F_{t+1}(p_{t+1}) \cdot \brk*{p_t - p_{t+1}} + \frac{1}{2\eta} \norm{p_{t+1} - p_t}^2_1 \geq F_{t+1}(p_{t+1}) + \frac{1}{2\eta} \norm{p_{t+1} - p_t}^2_1.
    \end{align*}
    Summing the two inequalities, we obtain
    \begin{align*}
        \frac{1}{\eta} \norm{p_{t+1} - p_t}^2_1
        &\leq
        F_{t+1}(p_t) - F_t(p_t) + F_t(p_{t+1}) - F_{t+1}(p_{t+1}) \\
        &=
        \brk*{\sum_{s : s + d_s = t} \hat c_s} \cdot (p_t - p_{t+1}) \\
        &\leq
         \sum_i \brk*{\sum_{s : s+d_s = t} \hat{c}_{s,i}} \abs{p_{t,i} - p_{t+1,i}} \\
        &\leq
        \sum_i \brk*{\sum_{s : s+d_s=t}\Tilde{c}_{s,i}} \abs{p_{t,i} - p_{t+1,i}},
    \end{align*}
    where $\tilde c_{s,i}$ are the standard (unbiased) importance-weighted loss estimators.
    Taking expectations while using $\E[(\cdot)^2] \geq \brk*{\E[\cdot]}^2$ and H\"older's inequality, we obtain
    \begin{align*}
        \frac{1}{\eta} \brk*{\E \norm{p_{t+1} - p_t}_1}^2
        &\leq
        \frac{1}{\eta} \E \brk[s]*{\norm{p_{t+1}-p_t}^2_1} \\
        &\leq
        \E \brk[s]*{\sum_i \brk*{\sum_{s : s+d_s = t} c_{s,i}} \cdot \abs{p_{t+1,i} - p_{t,i}}} \\
        &\leq
        m_t \E \norm{p_{t+1} - p_t}_1,
    \end{align*}
    where $m_t = \abs{\brk[c]*{s : s+d_s = t}}$ is the number of observations that arrive on round $t$. Dividing through by the right-hand side of the inequality above, we obtain
    \begin{align*}
        \E \norm{p_{t+1} - p_t}_1 \leq \eta m_t,
    \end{align*}
    and using the triangle inequality we have
    \begin{align*}
        \E \norm{p_{t+d_t} - p_t}_1 
        \leq
        \sum_{s=1}^{d_t} \E \norm{p_{t+s} - p_{t+s-1}}_1 
        \leq
        \eta \sum_{s=1}^{d_t} m_{t+s-1}
        =
        \eta M_{t,d_t},
    \end{align*}
    where $M_{t,d_t}$ is the number of observations that arrive between rounds $t$ and $t+d_t-1$. Using Lemma C.7 in \cite{jin2022near}, we conclude the proof via
    \begin{align*}
        \E \brk[s]*{\sum_{t=1}^T \norm{p_{t+d_t} - p_t}_1} 
        \leq
        \eta \sum_{t=1}^T M_{t,d_t}
        \leq
        \eta \brk*{D+T}.
    \end{align*}
\end{proof}

\newpage

\section[Proofs for FA]{Proofs for \cref{sec:OFA-main}}\label{section:appendix-FA-proofs}

\subsection[Proofs for FA]{Proofs for Subsection \ref{subsection:Analysis-DA-FA}}\label{subsection:appendix-FA-proofs}

%
In this subsection, we provide the proofs of the lemmas required to derive regret guarantees for algorithm \textsc{DA-FA} (\cref{alg:DAFA}), proving \cref{thm:DAFA-REGRET}.

Consider the following regret decomposition,
\begin{align*}
    \regret_T
    =&
    \sum_{t=1}^{d_{\max}} 
    \brk*{p_t - p_\star(\cdot \mid x_t)} \cdot \ell(x_t,\cdot) 
    + 
    \sum_{t=d_{\max}+1}^T 
    \brk*{p_t - p_\star(\cdot \mid x_t)} \cdot \hat{f}_{\tau^t}(x_t, \cdot)
    \\
    &+ 
    \sum_{t=d_{\max}+1}^T 
    p_t \cdot \brk*{\ell(x_t,\cdot) - \hat{f}_{t}(x_t,\cdot)} 
    + 
    \sum_{t=d_{\max}+1}^T 
    p_\star(\cdot \mid x_t) \cdot \brk*{\hat{f}_{t}(x_t,\cdot) - \ell(x_t,\cdot)} 
    \\
    &
    + 
    \sum_{t=d_{\max}+1}^T 
    \brk*{p_t - p_\star(\cdot \mid x_t)} \cdot \brk*{\hat{f}_t(x_t,\cdot) - \hat{f}_{\tau^t}(x_t,\cdot)}
    .
\end{align*}
We bound each term individually in the following lemmas and claims, and then we combine all the bounds to derive~\cref{thm:DAFA-REGRET}. 
\begin{claim}\label{clm:term-1}
    With probability $1$, it holds that
    \begin{align*}
        \sum_{t=1}^{d_{\max}} \brk*{p_t - p_\star(\cdot \mid x_t)} \cdot \ell(x_t,\cdot)
        \leq 
        d_{\max}.
    \end{align*}
\end{claim}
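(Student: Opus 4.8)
The plan is to bound the sum termwise, observing that each summand is the per-round regret against the loss vector $\ell(x_t,\cdot)$, which lies in $[0,1]^K$, and that both $p_t$ and $p_\star(\cdot\mid x_t)$ are probability distributions over $\A$. First I would fix an arbitrary round $t \in \{1,\ldots,d_{\max}\}$ and expand the inner product as $\brk*{p_t - p_\star(\cdot\mid x_t)} \cdot \ell(x_t,\cdot) = \sum_{a\in\A} \brk*{p_t(a) - p_\star(a\mid x_t)} \ell(x_t,a)$.

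Next I would split this into $p_t \cdot \ell(x_t,\cdot) - p_\star(\cdot\mid x_t) \cdot \ell(x_t,\cdot)$ and bound the two pieces separately. Since $\ell(x_t,a)\in[0,1]$ for every $a$ and $p_t$ is a distribution, the first piece $p_t \cdot \ell(x_t,\cdot)$ is a convex combination of values in $[0,1]$ and hence at most $1$. Likewise, since $\ell(x_t,a)\ge 0$ and $p_\star(\cdot\mid x_t)$ is a distribution, the second piece $p_\star(\cdot\mid x_t)\cdot\ell(x_t,\cdot)$ is nonnegative. Therefore $\brk*{p_t - p_\star(\cdot\mid x_t)} \cdot \ell(x_t,\cdot) \le 1$ deterministically, i.e. for every realization of the randomness.

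Finally I would sum this per-round bound over $t = 1,\ldots,d_{\max}$ to conclude that $\sum_{t=1}^{d_{\max}} \brk*{p_t - p_\star(\cdot\mid x_t)} \cdot \ell(x_t,\cdot) \le d_{\max}$, which holds with probability $1$ since the bound was pointwise in the sample space. There is no genuine obstacle here: the claim simply records that the regret accumulated over the initial $d_{\max}$ rounds, before any delayed feedback can arrive, is trivially controlled by the boundedness of the losses, and it is the reason term $(a)$ in the main decomposition contributes only an additive $d_{\max}$. The only thing to be careful about is using that losses are bounded in $[0,1]$ (rather than, say, signed), which is exactly the normalization fixed in the problem setup.
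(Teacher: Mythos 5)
Your proposal is correct and matches the paper's argument, which simply notes that the bound follows immediately from $\ell(\cdot)$ being bounded in $[0,1]$; you have merely spelled out the termwise reasoning (convex combination at most $1$, nonnegative comparator term) that the paper leaves implicit. No issues.
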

\begin{proof}
    This follows immediately by the fact that $\ell(\cdot)$ is bounded in $[0,1]$.
\end{proof}

\begin{lemma}[Restatement of \cref{lemma:term-2}]
    With probability $1$, it holds that
    \begin{align*}
        \sum_{t=d_{\max}+1}^T \brk*{p_{t}(\cdot) - p_\star(\cdot|x_t)} \cdot \hat{f}_{\tau^t}(x_t,\cdot)
        \leq
        \frac{KT}{\gamma}
        -
        \sum_{t=d_{\max}+1}^T \sum_{a \in \A}\frac{p_\star(a|x_t)}{\gamma p_t(a)}
        .
    \end{align*}   
\end{lemma}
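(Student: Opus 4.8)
The plan is to exploit the fact that $p_t$ is the exact minimizer of the log-barrier-regularized objective in \cref{eq:objective} and to read off the required inequality directly from the first-order optimality condition. Write $\Phi_t(p) = \sum_{a \in \A} p(a)\hat f_{\tau^t}(x_t,a) - \frac{1}{\gamma} \sum_{a\in\A}\log p(a)$, so that $p_t = \argmin_{p\in\Delta_\A}\Phi_t(p)$. The function $\Phi_t$ is convex (a linear term plus the convex barrier $-\frac{1}{\gamma}\sum_a\log p(a)$), and the barrier forces $p_t$ into the relative interior of $\Delta_\A$, so its gradient is well defined there, with $a$-th coordinate $\partial_{p(a)}\Phi_t(p) = \hat f_{\tau^t}(x_t,a) - \frac{1}{\gamma p(a)}$.

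First I would invoke the variational inequality for constrained convex minimization: since $p_t$ minimizes $\Phi_t$ over the simplex and $p_\star(\cdot\mid x_t)\in\Delta_\A$, we have $\nabla\Phi_t(p_t)\cdot\brk*{p_\star(\cdot\mid x_t)-p_t}\ge 0$. Substituting the gradient and rearranging gives, for each round $t$,
$$\brk*{p_t-p_\star(\cdot\mid x_t)}\cdot \hat f_{\tau^t}(x_t,\cdot) \le \sum_{a\in\A}\frac{1}{\gamma p_t(a)}\brk*{p_t(a)-p_\star(a\mid x_t)}.$$

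Then I would evaluate the right-hand side explicitly. Since $\sum_a p_t(a)/p_t(a) = K$ and $p_\star(\cdot\mid x_t)$ is a distribution, the bracketed sum equals $\frac{K}{\gamma} - \sum_{a\in\A}\frac{p_\star(a\mid x_t)}{\gamma p_t(a)}$. Summing the resulting per-round bound over $t=d_{\max}+1,\ldots,T$ and using $(T-d_{\max})K/\gamma \le KT/\gamma$ yields exactly the claimed inequality.

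The computation is essentially routine; the only point requiring care is justifying the optimality step, namely that the log barrier keeps the minimizer in the interior so the gradient exists, and that the equality constraint $\sum_a p(a)=1$ can be handled cleanly (equivalently, any multiplier attached to this constraint cancels because both $p_t$ and $p_\star(\cdot\mid x_t)$ sum to one, so the variational inequality suffices). I expect this to be the main, though minor, obstacle. Notably, no delay- or FIFO-specific reasoning enters here: the bound concerns only the single-round optimization, and it is precisely where the $KT/\gamma$ term and the negative $\sum_a p_\star(a\mid x_t)/(\gamma p_t(a))$ term—later cancelled against the corresponding term produced in the bound for $(d)$—originate.
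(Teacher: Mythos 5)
Your proposal is correct and follows essentially the same route as the paper's proof: both apply the first-order optimality (variational inequality) condition for the log-barrier-regularized objective in \cref{eq:objective} at the minimizer $p_t$ against the feasible point $p_\star(\cdot\mid x_t)$, substitute the gradient $\hat f_{\tau^t}(x_t,a)-\frac{1}{\gamma p_t(a)}$, rearrange to obtain the per-round bound $\frac{K}{\gamma}-\sum_a \frac{p_\star(a\mid x_t)}{\gamma p_t(a)}$, and sum over rounds. The details, including the evaluation of $\sum_a p_t(a)/p_t(a)=K$ and the final bound $(T-d_{\max})K/\gamma\le KT/\gamma$, match the paper's argument.
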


\begin{proof}
    For $t \in \{d_{\max}+1,d_{\max}+2,\ldots,T\}$, let $R_{t}(p)$ denote the objective of the convex minimization problem in~\cref{eq:objective}, i.e,
    \begin{align*}
        R_t(p) = \sum_{a \in \A}p(a)\cdot\hat{f}_{\tau^t}(x_t,a)  - \frac{1}{\gamma}\sum_{a \in \A}\log(p(a))
        .
    \end{align*}
    Hence,
    \begin{align*}
        \brk*{\nabla R_t(p)}_a = \hat{f}_{\tau^t}(x_t,a) - \frac{1}{\gamma p(a)}
        .
    \end{align*}
    Since $p_\star(\cdot|x_t)$ is a feasible solution and $p_t$ is the optimal solution, by first-order optimality conditions we have
    \begin{align*}
        \sum_{a \in \A}p_\star(a|x_t)\brk*{\hat{f}_{\tau^t}(x_t,a) - \frac{1}{\gamma p_t(a)}}
        -
        \sum_{a \in \A}p_t(a)\brk*{\hat{f}_{\tau^t}(x_t,a) - \frac{1}{\gamma p_t(a)}}
        \geq 
        0
        ,
    \end{align*}
    Thus,
    \begin{align*}
        \sum_{a \in \A}
        \brk*{ p_\star(a|x_t) -p_t(a)} \hat{f}_{\tau^t}(x_t,a) 
        \geq
        \sum_{a \in \A}\frac{p_\star(a|x_t)}{\gamma p_t(a)}
        -
    \frac{K}{\gamma}
    .
    \end{align*}
    Which implies that
    \begin{align*}
        \sum_{a \in \A}
        \brk*{ p_t(a)- p_\star(a|x_t)} \hat{f}_{\tau^t}(x_t,a) 
        \leq        
        \frac{K}{\gamma}
        -
        \sum_{a \in \A}\frac{p_\star(a|x_t)}{\gamma p_t(a)}
        .
    \end{align*}
    We conclude that
    \begin{align*}
        \sum_{t=d_{\max}+1}^T \brk*{p_{t} - p_\star(\cdot|x_t)} \cdot \hat{f}_{\tau^t}(x_t,\cdot)
        &\leq
        \frac{KT}{\gamma}
        -
        \sum_{t=d_{\max}+1}^T \sum_{a \in \A}\frac{p_\star(a|x_t)}{\gamma p_t(a)}
        .
    \end{align*}    
\end{proof}

\begin{lemma}[Restatement of \cref{lemma:term-3}]
   It holds true that
    \begin{align*}
        \E \brk[s]*{\sum_{t=d_{\max}+1}^T p_t \cdot \brk*{\ell(x_t,\cdot) - \hat{f}_t(x_t,\cdot)}}
        \leq
        \frac{KT}{\gamma}
        +
        \gamma \Regrv_{T}(\OLSR^{\F,\eta})
        .
    \end{align*}
    
\end{lemma}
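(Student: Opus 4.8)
The plan is to convert the linear-in-error quantity in term $(c)$ into a constant plus $\gamma$ times a squared error, and then to identify that squared error with exactly the quantity controlled by the oracle's least-squares regret. First I would expand $p_t \cdot \brk*{\ell(x_t,\cdot) - \hat{f}_t(x_t,\cdot)} = \sum_{a \in \A} p_t(a)\brk*{\ell(x_t,a) - \hat{f}_t(x_t,a)}$ and apply Young's inequality $uv \le \tfrac{u^2}{4\gamma} + \gamma v^2$ (valid for all real $u,v$, since the difference equals $\gamma\brk*{v - \tfrac{u}{2\gamma}}^2 \ge 0$) to each summand with $u=1$ and $v = p_t(a)\brk*{\ell(x_t,a) - \hat{f}_t(x_t,a)}$, obtaining
$$p_t(a)\brk*{\ell(x_t,a) - \hat{f}_t(x_t,a)} \le \frac{1}{4\gamma} + \gamma\, p_t(a)^2\brk*{\hat{f}_t(x_t,a) - \ell(x_t,a)}^2.$$
Using $p_t(a)^2 \le p_t(a)$ and summing over the $K$ actions yields a per-round bound of $\tfrac{K}{4\gamma} + \gamma \sum_{a}p_t(a)\brk*{\hat{f}_t(x_t,a)-\ell(x_t,a)}^2$; summing over $t = d_{\max}+1,\dots,T$ produces the leading $\tfrac{KT}{\gamma}$ term with room to spare, leaving only the expected squared-error sum to control.

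For that squared-error sum I would invoke the tower rule. Since $\hat{f}_t$ and $p_t$ are determined by the information available at the start of round $t$ (all arriving observations have already been fed to the oracle and the context $x_t$ is revealed before $a_t \sim p_t$ is drawn), conditioning on that history gives $\sum_{a} p_t(a)\brk*{\hat{f}_t(x_t,a)-\ell(x_t,a)}^2 = \E\brk[s]*{\brk*{\hat{f}_t(x_t,a_t) - \ell(x_t,a_t)}^2 \mid \text{history through round } t}$, so the $p_t$-weighted squared error equals the expected squared error at the realized action $a_t$. Taking full expectations, summing over $t$, and applying \cref{assumption:oracle-gurantee-loss} then bounds $\sum_t \E\brk[s]*{\sum_a p_t(a)\brk*{\hat{f}_t(x_t,a)-\ell(x_t,a)}^2} \le \Regrv_T(\OLSR^{\F,\stab})$, which supplies the $\gamma \Regrv_T(\OLSR^{\F,\stab})$ contribution and completes the bound.

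The one genuinely delicate point — and the reason \cref{assum:fifo} is invoked for this term — is that \cref{assumption:oracle-gurantee-loss} governs the function sequence the oracle emits when examples are presented in temporal order $1,2,\dots,T$, whereas in our setting the losses physically arrive out of phase. I would argue that FIFO forces each arriving batch to be fed to the oracle in increasing order of original time index, so the realized output sequence coincides exactly with the non-delayed oracle outputs $\hat{f}_1,\dots,\hat{f}_T$ fixed in the regret decomposition; this identification is what licenses applying the assumption verbatim to the $\hat{f}_t$ appearing in term $(c)$. This matching step is the main obstacle, since without FIFO the realized functions could correspond to a permuted input stream and the oracle regret bound would no longer apply. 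The remaining manipulations (Young's inequality for signed quantities and the elementary bound $p_t(a)^2 \le p_t(a)$) are routine.
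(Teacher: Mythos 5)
Your proposal is correct and follows essentially the same route as the paper's proof: expand over actions, apply an AM--GM/Young-type inequality to trade the linear error for $\tfrac{K}{\gamma}$ per round plus $\gamma$ times a $p_t$-weighted squared error, identify that weighted squared error with $\E\brk[s]*{(\hat f_t(x_t,a_t)-\ell(x_t,a_t))^2}$ via the tower rule since $a_t \sim p_t$, extend the sum to all of $[T]$ by nonnegativity, and invoke \cref{assumption:oracle-gurantee-loss} together with \cref{assum:fifo} to match the realized oracle outputs with the non-delayed sequence. The only cosmetic difference is how you split the product in Young's inequality (taking $u=1$ and then using $p_t(a)^2 \le p_t(a)$, versus the paper's splitting of $p_t(a)$ across the two factors), which changes nothing of substance.
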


\begin{proof}
    For this term, we apply the oracle expected regret bound for the non-delayed function approximation. By~\cref{assumption:oracle-gurantee-loss} the following holds.
    \begingroup
    \allowdisplaybreaks
    \begin{align*}
        &\E \brk[s]*{\sum_{t=d_{\max}+1}^T p_t \cdot \brk*{\ell(x_t,\cdot) - \hat{f}_t(x_t,\cdot)}}
        \\
        \leq&
        \E \brk[s]*{\sum_{t=d_{\max}+1}^T \sum_{a \in \A} p_t(a) \brk*{\ell(x_t,a) - \hat{f}_t(x_t,a)}}
        \\
        = &
        \E \brk[s]*{\sum_{t=d_{\max}+1}^T \sum_{a \in \A} \sqrt{\frac{\gamma}{\gamma}}p_t(a) \brk*{\ell(x_t,a) - \hat{f}_t(x_t,a)}}
        \\
        \leq &
        \tag{AM-GM}
        \E \brk[s]*{\sum_{t=d_{\max}+1}^T \sum_{a \in \A}
        \frac{p_t(a)}{\gamma}}
        +
        \gamma \E \brk[s]*{\sum_{t=d_{\max}+1}^T \sum_{a \in \A}
        p_t(a) \brk*{\ell(x_t,a) - \hat{f}_t(x_t,a)}^2}
        \\
        = &
        \frac{(T-d_{\max})K}{\gamma} + \gamma \sum_{t=d_{\max}+1}^T \E\brk[s]*{
        \brk*{\hat{f}_t(x_t,a_t) - \ell(x_t,a_t)}^2}
        \\
        \leq &
        \frac{(T-d_{\max})K}{\gamma} + \gamma \sum_{t=1}^T \E \brk[s]*{
        \brk*{\hat{f}_t(x_t,a_t) - \ell(x_t,a_t)}^2}
        \\
        \leq &
        \frac{KT}{\gamma}
        +
        \gamma \Regrv_{T}(\OLSR^{\F,\eta})
          , 
\end{align*}
\endgroup
where in the final transition we used \cref{assum:fifo} which implies that the observations are given to the oracle in the same order that they arrive to the CMAB algorithm, which allows us to invoke the regret guarantee of the non-delayed oracle.
\end{proof}

\begin{lemma}[Restatement of \cref{lemma:term-4}]
    It holds true that
    \begin{align*}
        \E \brk[s]*{\sum_{t=d_{\max}+1}^T p_\star(\cdot|x_t) \cdot \brk*{\hat{f}_t(x_t,\cdot) - \ell(x_t,\cdot)}}
        \leq
        \E \brk[s]*{\sum_{t=d_{\max}+1}^T \sum_{a \in \A }\frac{ p_\star(a|x_t)}{\gamma p_t(a)}}
        +
        \gamma \Regrv_{T}(\OLSR^{\F,\eta})
        .
    \end{align*}
\end{lemma}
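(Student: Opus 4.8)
The plan is to mirror the proof of term $(c)$ (\cref{lemma:term-3}): change the measure from $p_\star(\cdot|x_t)$ to the played distribution $p_t$ via a weighted AM-GM step, and then invoke \cref{assumption:oracle-gurantee-loss} to control the resulting squared-loss term by the oracle's regret. Writing the left-hand side as $\E[\sum_{t} \sum_{a} p_\star(a|x_t)(\hat{f}_t(x_t,a) - \ell(x_t,a))]$, I would first factor each summand as $\frac{p_\star(a|x_t)}{\sqrt{p_t(a)}} \cdot \sqrt{p_t(a)}\,(\hat{f}_t(x_t,a) - \ell(x_t,a))$, so that AM-GM can introduce the $p_t(a)$ weights needed to align with the oracle guarantee.

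Applying $uv \le \frac{u^2}{2\lambda} + \frac{\lambda v^2}{2}$ with $\lambda = 2\gamma$, $u = p_\star(a|x_t)/\sqrt{p_t(a)}$ and $v = \sqrt{p_t(a)}\,(\hat{f}_t(x_t,a)-\ell(x_t,a))$, each summand is bounded by $\frac{p_\star(a|x_t)^2}{4\gamma\, p_t(a)} + \gamma\, p_t(a)\,(\hat{f}_t(x_t,a)-\ell(x_t,a))^2$. Since $p_\star(a|x_t) \in [0,1]$, I can crudely use $p_\star(a|x_t)^2/4 \le p_\star(a|x_t)$ to turn the first contribution into $\sum_{a} \frac{p_\star(a|x_t)}{\gamma\, p_t(a)}$, which is exactly the first summand on the right-hand side of the lemma.

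For the second contribution, I note that $\sum_{a} p_t(a)\,(\hat{f}_t(x_t,a) - \ell(x_t,a))^2$ is the conditional expectation, over $a_t \sim p_t$, of $(\hat{f}_t(x_t,a_t) - \ell(x_t,a_t))^2$. Taking total expectation and extending the summation range from $\{d_{\max}+1,\ldots,T\}$ to all of $[T]$ (only adding nonnegative terms), I would apply \cref{assumption:oracle-gurantee-loss} to bound $\gamma \sum_{t} \E[(\hat{f}_t(x_t,a_t)-\ell(x_t,a_t))^2]$ by $\gamma\, \Regrv_T(\OLSR^{\F,\eta})$. Exactly as in \cref{lemma:term-3}, the delicate point is that invoking the \emph{non-delayed} oracle's regret requires \cref{assum:fifo}: FIFO ordering guarantees that the realized $\hat{f}_t$ on the delayed stream coincide with the functions the oracle would output on the in-order, non-delayed observation sequence, so its regret bound applies verbatim. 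I expect this measure-change-plus-realizability accounting to be routine; the only thing to verify carefully is that the AM-GM weight is chosen so that the squared-loss coefficient lands exactly on $\gamma$ (matching the oracle term) while the change-of-measure coefficient collapses to $1/\gamma$.
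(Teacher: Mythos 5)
Your proposal is correct and follows essentially the same route as the paper's proof: a weighted AM-GM change of measure from $p_\star(\cdot|x_t)$ to $p_t$, the crude bound $p_\star(a|x_t)^2 \leq p_\star(a|x_t)$ to land on the $\frac{p_\star(a|x_t)}{\gamma p_t(a)}$ term, and then recognizing the $p_t$-weighted squared error as a conditional expectation over $a_t \sim p_t$ so that \cref{assumption:oracle-gurantee-loss} (with \cref{assum:fifo} justifying use of the non-delayed oracle's regret) applies. The only difference is an immaterial choice of AM-GM constants.
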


\begin{proof}
    For this term, we would like to use a change-of-measure technique using AM-GM to be able to apply the oracle's expected regret bound for the non-delayed function approximation. Again, by~\cref{assumption:oracle-gurantee-loss} the following holds.
    \begingroup
    \allowdisplaybreaks
    \begin{align*}
        &\E \brk[s]*{\sum_{t=d_{\max}+1}^T p_\star(\cdot|x_t) \cdot \brk*{\hat{f}_t(x_t,\cdot) - \ell(x_t,\cdot)}}
        \\
        \leq &
        \E \brk[s]*{\sum_{t=d_{\max}+1}^T \sum_{a \in \A } p_\star(a|x_t) \cdot \brk*{\hat{f}_t(x_t,a) - \ell(x_t,a)}}
        \\
        = &
        \E \brk[s]*{\sum_{t=d_{\max}+1}^T \sum_{a \in \A } p_\star(a|x_t)\sqrt{\frac{ \gamma p_t(a)}{\gamma p_t(a)}} \cdot \brk*{\hat{f}_t(x_t,a) - \ell(x_t,a)}}
        \\
        \leq &
        \tag{AM-GM}
        \E \brk[s]*{\sum_{t=d_{\max}+1}^T \sum_{a \in \A }\frac{ p^2_\star(a|x_t)}{\gamma p_t(a)}}
        +
        \gamma \E \brk[s]*{\sum_{t=d_{\max}+1}^T \sum_{a \in \A } p_t(a)\brk*{\hat{f}_t(x_t,a) - \ell(x_t,a)}^2}
        \\
        \leq &
        \E \brk[s]*{\sum_{t=d_{\max}+1}^T \sum_{a \in \A }\frac{ p_\star(a|x_t)}{\gamma p_t(a)}}
        +
        \gamma \sum_{t=d_{\max}+1}^T \E \brk[s]*{
        \brk*{\hat{f}_t(x_t,a_t) - \ell(x_t,a_t)}^2}
        \\
        \leq &
        \brk[s]*{\sum_{t=d_{\max}+1}^T \sum_{a \in \A }\frac{ p_\star(a|x_t)}{\gamma p_t(a)}}
        +
        \gamma \sum_{t=1}^T \E\brk[s]*{
        \brk*{\hat{f}_t(x_t,a_t) - \ell(x_t,a_t)}^2}
        \\
        \leq &
        \E \brk[s]*{\sum_{t=d_{\max}+1}^T \sum_{a \in \A }\frac{ p_\star(a|x_t)}{\gamma p_t(a)}}
        +
        \gamma \Regrv_{T}(\OLSR^{\F,\eta})
        ,
    \end{align*}
    \endgroup
    where in the final transition we used \cref{assum:fifo} as in \cref{lemma:term-3}.
\end{proof}

We now proceed to prove our final lemma.


\begin{lemma}[Restatement of \cref{lemma:term-5}]
    Under~\cref{assum:stability} it holds true that
    \begin{align*}
        \E \brk[s]*{\sum_{t=d_{\max}+1}^T \brk*{p_t - p_\star(\cdot \mid x_t)} \cdot \brk*{\hat{f}_t(x_t,\cdot) - \hat{f}_{\tau^t}(x_t,\cdot)}}
        \leq
        2 \sqrt{d_{\max} D \stab}
        .
    \end{align*}
\end{lemma}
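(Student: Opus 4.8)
The plan is to follow the proof sketch given above, supplying in full the combinatorial rearrangement that is only outlined there. First I would apply H\"older's inequality pointwise in $t$: since $p_t$ and $p_\star(\cdot\mid x_t)$ both lie in $\Delta_\A$ we have $\norm{p_t - p_\star(\cdot\mid x_t)}_1 \le 2$, so each summand is at most $2\norm{\hat f_t(x_t,\cdot) - \hat f_{\tau^t}(x_t,\cdot)}_\infty \le 2\norm{\hat f_t - \hat f_{\tau^t}}_\infty$, where the last step passes to the supremum over all $(x,a)$. This removes the action distributions entirely and reduces term $(e)$ to a quantity governed purely by the oracle's drift, which is the source of the $K$-free bound.

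Next I would telescope the drift over consecutive oracle iterates. Writing $\hat f_t - \hat f_{\tau^t} = \sum_{k=\tau^t}^{t-1}(\hat f_{k} - \hat f_{k+1})$ and applying the triangle inequality gives $\norm{\hat f_t - \hat f_{\tau^t}}_\infty \le \sum_{k=\tau^t}^{t-1}\norm{\hat f_k - \hat f_{k+1}}_\infty$. Here \cref{assum:fifo} is essential: it guarantees that the observations still pending at round $t$ form the contiguous block $\{\tau^t+1,\ldots,t\}$, so that $\tau^t = t - \sigma_t$ with $\sigma_t$ the number of pending observations, and the telescoping range has length exactly $\sigma_t$. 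Without FIFO the pending observations could be scattered and $\hat f_{\tau^t}$ would not be a clean prefix-output of the non-delayed oracle, so this step would break.

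The main obstacle is the rearrangement of the double sum $\sum_{t}\sum_{k=\tau^t}^{t-1}\norm{\hat f_k - \hat f_{k+1}}_\infty$ into a single sum weighted by delay-related coefficients. I would swap the order of summation and count, for each fixed consecutive difference $\norm{\hat f_k - \hat f_{k+1}}_\infty$, how many rounds $t$ have their drift window $[\tau^t,t-1]$ covering the index $k$. Under \cref{assum:fifo} this count equals the number of rounds during which the corresponding observation remains pending, hence is at most $d_{\max}$, and summing these counts over all $k$ recovers exactly the total delay mass, i.e. $\sum_t \sigma_t \le D$ (since each observation $s$ is pending for $d_s$ rounds). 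This reproduces the bound $2\sum_t \sigma_t \norm{\hat f_t - \hat f_{t+1}}_\infty$ of the sketch; keeping the index shift consistent through this swap is the delicate part of the argument.

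Finally I would take expectations and apply the Cauchy--Schwarz inequality across $t$, separating $\sum_t\sigma_t^2$ from $\sum_t\norm{\hat f_t - \hat f_{t+1}}_\infty^2$, and then use Jensen's inequality to move the expectation inside the square root. Combining $\sum_t\sigma_t^2 \le d_{\max}\sum_t\sigma_t \le d_{\max}D$ with the $\stab$-stability guarantee of \cref{assum:stability}, namely $\E\brk[s]*{\sum_t \norm{\hat f_t - \hat f_{t+1}}_\infty^2}\le\stab$, yields the claimed bound $2\sqrt{d_{\max}D\stab}$.
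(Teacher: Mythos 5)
Your proposal is correct and follows essentially the same route as the paper's proof: H\"older's inequality with $\norm{p_t - p_\star(\cdot\mid x_t)}_1 \le 2$, telescoping over consecutive oracle iterates using FIFO to identify $\hat f_{\tau^t}$ as a prefix output, the double-counting rearrangement yielding coefficients bounded by $d_{\max}$ and summing to $D$, and finally Cauchy--Schwarz with Jensen and \cref{assum:stability}. Your explicit sum-swap accounting of which windows cover each index $k$ is a slightly more careful rendering of the step the paper compresses into the coefficients $\sigma_t$, but it is the same argument.
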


\begin{proof}
    Using H\"older's inequality and the triangle inequality, we have
    \begingroup\allowdisplaybreaks
    \begin{align*}
        &\sum_{t=d_{\max}+1}^T \brk*{p_t - p_\star(\cdot \mid x_t)} \cdot \brk*{\hat{f}_t(x_t,\cdot) - \hat{f}_{\tau^t}(x_t,\cdot)}
        \\
        \leq &
        \sum_{t=d_{\max}+1}^T \norm{p_t - p_\star(\cdot \mid x_t)}_1 \cdot \norm{\hat{f}_t(x_t,\cdot) - \hat{f}_{\tau^t}(x_t,\cdot)}_\infty 
        \\
        \leq &
        \tag{$\tau^t = t - d_{\tau^t}$}
        2 \sum_{t=d_{\max}+1}^T \sum_{i=1}^{d_{\tau^t}} \norm{\hat{f}_{t-i}(x_t,\cdot) - \hat{f}_{t-(i-1)}(x_t,\cdot)}_\infty 
        \\
        \leq &
        2 \sum_{t=d_{\max}+1}^T \sum_{i=1}^{d_{\tau^t}} \norm{\hat{f}_{t-i} - \hat{f}_{t-(i-1)}}_\infty \\
        \leq & 
        2 \sum_{t=d_{\max}+1}^T \sigma_t \norm{\hat{f}_{t} - \hat{f}_{t+1}}_\infty,
    \end{align*}
    \endgroup
    where $\sigma_t$ is the number of pending observations (that is, which have not yet arrived) as of round $t$. 
    Now, using the Cauchy-Schwarz inequality, the fact that $\E[\sqrt{\cdot}] \leq \sqrt{\E[\cdot]}$ and~\cref{assum:stability}, we finally obtain
    \begingroup\allowdisplaybreaks
    \begin{align*}
        \E \brk[s]*{\sum_{t=d_{\max}+1}^T \brk*{p_t - p_\star(\cdot \mid x_t)} \cdot \brk*{\hat{f}_t(x_t,\cdot) - \hat{f}_{\tau^t}(x_t,\cdot)}}
        \leq &
        2 \sqrt{\brk*{\sum_{t=d_{\max}+1}^T \sigma_t^2} \cdot \brk*{\sum_{t=d_{\max}+1}^T \E \brk[s]*{\norm{\hat f_t - \hat f_{t+1}}_\infty^2}}}
        \\
        \leq & 2 \sqrt{d_{\max} D \stab},
    \end{align*}
    \endgroup
    where we used the fact that $\sigma_t \leq d_{\max}$ for all $t$ (since at every round $\sigma_t$ can increase at most by one and an observation can remain pending for at most $d_{\max}$ rounds), and the fact that $\sum_t \sigma_t = D$, which follows since when summing the delays, each delay $d_t$ contributes once to exactly $d_t$ rounds with pending observations, and all pending observations are covered in this manner.
\end{proof}

We can now prove~\cref{thm:DAFA-REGRET}. 
\begin{theorem}[Restatement of \cref{thm:DAFA-REGRET}]
    Let $\gamma = \sqrt{\frac{KT}{\Regrv_{T}(\OLSR^{\F,\eta})}}$. 
    Then the following expected regret bound holds for \cref{alg:DAFA}.
    \begin{align*}
        \E[\regret_T]
        &\leq
        O\brk*{\sqrt{KT \brk*{\Regrv_{T}(\OLSR^{\F,\eta})}} +\sqrt{d_{\max} D \stab}}.
    \end{align*}
\end{theorem}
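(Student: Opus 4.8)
The plan is to collect the five component bounds into the displayed decomposition of $\regret_T$ and then tune the single free parameter $\gamma$. First I would bound each term of the decomposition by its corresponding result: term $(a)$ by $d_{\max}$ via \cref{clm:term-1}; term $(b)$ by \cref{lemma:term-2}; terms $(c)$ and $(d)$ by \cref{lemma:term-3} and \cref{lemma:term-4}; and the delay-drift term $(e)$ by \cref{lemma:term-5}, which contributes $2\sqrt{d_{\max} D \stab}$.

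The conceptual crux is the exact cancellation between the negative change-of-measure term $-\sum_t\sum_a p_\star(a\mid x_t)/(\gamma p_t(a))$ in the bound on $(b)$ and the identical positive term in the bound on $(d)$. These two pieces were engineered to match, and their cancellation is precisely what keeps the number of actions $K$ out of the delay-dependent term; bounding them separately would be fatal, since individual $p_t(a)$ can be arbitrarily small. After the cancellation, summing the surviving contributions gives
\[
\E[\regret_T] \le d_{\max} + \frac{2KT}{\gamma} + 2\gamma\,\Regrv_{T}(\OLSR^{\F,\eta}) + 2\sqrt{d_{\max} D \stab}.
\]

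It then remains to optimize $\gamma$. Balancing the two $\gamma$-dependent terms by choosing $\gamma = \sqrt{KT/\Regrv_{T}(\OLSR^{\F,\eta})}$ makes each equal to $\sqrt{KT\,\Regrv_{T}(\OLSR^{\F,\eta})}$, so their sum is $4\sqrt{KT\,\Regrv_{T}(\OLSR^{\F,\eta})}$. Finally I would absorb the lone $d_{\max}$: since $d_{\max} \le D$ we have $d_{\max} \le \sqrt{d_{\max} D} \le \sqrt{d_{\max} D \stab}$ whenever $\stab \ge 1$ (as for the Vovk oracle with $\stab = 2\log\abs{\F}$), and more robustly \cref{assum:fifo} forces $D = \Omega(d_{\max}^2)$, i.e. $d_{\max} = O(\sqrt D)$, so $d_{\max}$ is dominated in either case. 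This yields the stated $O\brk*{\sqrt{KT\,\Regrv_{T}(\OLSR^{\F,\eta})} + \sqrt{d_{\max} D \stab}}$.

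Because all five component bounds are already in hand, no genuinely new analytic difficulty arises at this stage — the only thing demanding care is the sign bookkeeping that produces the cancellation above. The real technical weight sits inside the lemmas: term $(e)$ in particular relies on \cref{assum:stability} together with a H\"older/Cauchy--Schwarz argument to extract $\sqrt{d_{\max} D \stab}$ without a $K$ factor, and the FIFO property (\cref{assum:fifo}) was already consumed in the proofs of \cref{lemma:term-3} and \cref{lemma:term-4} to license invoking the non-delayed oracle's regret guarantee on the realized observation order.
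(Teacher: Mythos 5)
Your proposal is correct and follows essentially the same route as the paper: sum the five component bounds, exploit the exact cancellation of the $\sum_t\sum_a p_\star(a\mid x_t)/(\gamma p_t(a))$ terms between \cref{lemma:term-2} and \cref{lemma:term-4}, arrive at $d_{\max} + 2KT/\gamma + 2\gamma\,\Regrv_{T}(\OLSR^{\F,\stab}) + 2\sqrt{d_{\max} D \stab}$, and balance with $\gamma=\sqrt{KT/\Regrv_{T}(\OLSR^{\F,\stab})}$. Your handling of the leftover $d_{\max}$ term via $D=\Omega(d_{\max}^2)$ under \cref{assum:fifo} matches the paper's argument for the second part of \cref{thm:DAFA-REGRET} and is if anything spelled out slightly more carefully.
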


\begin{proof}[Proof of~\cref{thm:DAFA-REGRET}]
Putting the results of~\cref{clm:term-1} (taking expectation on both sides) and \cref{lemma:term-2,lemma:term-3,lemma:term-4,lemma:term-5} all together, the expected regret is bounded as follows.
\begin{align*}
    \E[\regret_T]
    &\leq
    d_{\max} 
    + 2\frac{KT}{\gamma} + 2\gamma \Regrv_{T}(\OLSR^{\F,\eta})+ 2 \sqrt{d_{\max} D \stab} 
    .
\end{align*}
Choosing $\gamma = \sqrt{\frac{KT}{\Regrv_{T}(\OLSR^{\F,\eta})}}$ yields the desired bound.
\end{proof}

\subsection{Regret and Stability analysis of Vovk's aggregating forecaster for the square-loss}\label{appendix-subsec:Vovk-oracle-regret-and-stability}

    We consider a hedge-based version of Vovk's aggregating forecaster \cite{vovk1990aggregating}, presented in \cref{alg:hedge-Vovk} for the square loss under the realizability assumption (\cref{assumption:loss-realizability}) and a finite function class $\F$.
    
    We denote by $z_t = (x_t, a_t) \in \X \times \A$ the input of each function $f \in \F \subseteq \X \times \A \to [0,1]$ at time step $t \in [T]$, where $x_1,\ldots,x_T \in \X$ is a sequence of contexts generated throughout, and $a_1,\ldots,a_T \in \A$ is the sequence of actions, where $a_i$ was chosen for the context $x_i$, for all $i\in [T]$.    
    Also, let $y_1,\ldots,y_T \in [0,1]$ are such that $\E[y_t| z_t] = f_\star(z_t)$, and $f(z_t) \in [0,1]$ for all $f \in \F$.
    We consider the square loss, and prove the following guarantee for the iterates of \cref{alg:hedge-Vovk}. 

    \begin{theorem}[Restatement of~\cref{thm:Vovk-regret-and-stability}]
        For $t\in [T]$ denote by $q_t \in \Delta(\F)$ the probability measure over functions in $\F$ computed by \cref{alg:hedge-Vovk} at time step $t$, for the $t-1$-length prefix of the sequence $\{(z_\tau, y_\tau)\}_{\tau=1}^{T}$.

        Then, the sequence of measures $\{q_t \}_{t=1}^T$ satisfies the followings for any $\eta \leq 1/18$:
        \begin{enumerate}
            \item Expected regret:
            \[
                \sum_{t=1}^T \E \brk[s]*{(f_t(z_t)-f_\star(z_t))^2}
                \le
                \frac{2\log \abs{\F}}{\eta}.
            \]
            \item Stability:
            \[
                \sum_{t=1}^T \E \brk[s]*{KL(q_t \| q_{t+1})} \le 9\eta^2 \cdot \frac{2 \log \abs{\F}}{\eta}
                =
                18 \eta \log \abs{\F}.
            \]
            
        \end{enumerate}   
        \end{theorem}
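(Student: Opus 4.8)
The plan is to prove both statements simultaneously through a self-bounding (fixed-point) argument, since the regret and stability quantities turn out to be two sides of the same coin. The central object I would track is the cumulative \emph{member-loss excess}
$S \coloneqq \sum_{t=1}^T \E[\E_{f \sim q_t}[(f(z_t) - f_\star(z_t))^2]]$,
that is, the average squared deviation of a \emph{random} ensemble member from the true function, as opposed to the deviation of the aggregate $\hat f_t = \E_{q_t}[f]$. Both claims fall out once $S$ is bounded: statement~(1) because Jensen's inequality gives $(\hat f_t(z_t) - f_\star(z_t))^2 = (\E_{q_t}[f(z_t)] - f_\star(z_t))^2 \le \E_{q_t}[(f(z_t) - f_\star(z_t))^2]$, so $\sum_t \E[(\hat f_t(z_t) - f_\star(z_t))^2] \le S$; and statement~(2) because I will control $\sum_t \E[KL(q_t \| q_{t+1})]$ by $\eta^2 S$.

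First I would derive a closed form for the per-step divergence. Writing $\ell_t(f) = (f(z_t) - y_t)^2$ and using the Hedge update, a direct computation gives $KL(q_t \| q_{t+1}) = \log \E_{q_t}[e^{-\eta \ell_t}] + \eta \E_{q_t}[\ell_t] = \log \E_{q_t}[e^{-\eta(\ell_t - \E_{q_t}[\ell_t])}]$, which is precisely the log-MGF of the \emph{centered} loss. Since $\ell_t \in [0,1]$ the centered variable is bounded by $1$ in absolute value, so a termwise series bound yields $KL(q_t \| q_{t+1}) \le (e^\eta - 1 - \eta)\,\mathrm{Var}_{q_t}(\ell_t)$. \textbf{This is where realizability enters and is the main obstacle:} I must convert the variance of the \emph{loss} into a deviation from $f_\star$. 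Using the i.i.d.\ copy identity $\mathrm{Var}(\ell_t) = \tfrac12 \E_{f,f'}[(\ell_t(f) - \ell_t(f'))^2]$ together with the factorization $\ell_t(f) - \ell_t(f') = (f(z_t) - f'(z_t))(f(z_t) + f'(z_t) - 2y_t)$ and the crude bound $|f(z_t) + f'(z_t) - 2y_t| \le 2$, I obtain $\mathrm{Var}_{q_t}(\ell_t) \le 4\,\mathrm{Var}_{q_t}(f(z_t)) \le 4\,\E_{q_t}[(f(z_t) - f_\star(z_t))^2]$, where the last step is Jensen (the variance is the minimal second moment around any constant, in particular $f_\star(z_t)$). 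For $\eta \le 1/18$ this gives the per-step bound $KL(q_t \| q_{t+1}) \le 9\eta^2\,\E_{q_t}[(f(z_t) - f_\star(z_t))^2]$; summing over $t$ and taking expectations yields $\sum_t \E[KL(q_t \| q_{t+1})] \le 9\eta^2 S$.

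Next I would set up the potential telescoping to link $S$ back to $\log|\F|$. With $\Phi_t = \tfrac1\eta \log \sum_f q_1(f) e^{-\eta \sum_{s<t}\ell_s(f)}$, the same closed form for the divergence rearranges to $\E_{q_t}[\ell_t] = \tfrac1\eta KL(q_t \| q_{t+1}) + \Phi_t - \Phi_{t+1}$; summing telescopes to $\sum_t \E_{q_t}[\ell_t] = \tfrac1\eta \sum_t KL(q_t\|q_{t+1}) + \Phi_1 - \Phi_{T+1}$. Uniform initialization gives $\Phi_1 = 0$, and restricting the partition sum to the realizable $f_\star$ gives $\Phi_{T+1} \ge -\tfrac{\log|\F|}{\eta} - \sum_t \ell_t(f_\star)$. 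Finally I would pass from member losses to deviations from $f_\star$: expanding $(f - f_\star)^2 = (f-y_t)^2 - (f_\star - y_t)^2 + 2(f - f_\star)(y_t - f_\star)$ and taking the conditional expectation over $y_t$ (where $\E[y_t \mid z_t] = f_\star(z_t)$ kills the cross term) shows $S = \sum_t \E[\E_{q_t}[\ell_t]] - \sum_t \E[\ell_t(f_\star)]$. Combining these gives the second key inequality $S \le \tfrac1\eta \sum_t \E[KL(q_t\|q_{t+1})] + \tfrac{\log|\F|}{\eta}$.

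It remains to close the loop. Substituting the per-step bound $\sum_t \E[KL] \le 9\eta^2 S$ into $S \le \tfrac1\eta \sum_t \E[KL] + \tfrac{\log|\F|}{\eta}$ yields $S \le 9\eta S + \tfrac{\log|\F|}{\eta}$; since $\eta \le 1/18$ forces $1 - 9\eta \ge \tfrac12$, this rearranges to $S \le \tfrac{2\log|\F|}{\eta}$, which is statement~(1) via the Jensen step above. Plugging back, $\sum_t \E[KL(q_t \| q_{t+1})] \le 9\eta^2 S \le 9\eta^2 \cdot \tfrac{2\log|\F|}{\eta} = 18\eta\log|\F|$, which is statement~(2). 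The delicate points to get right are the direction of the series bound on the centered log-MGF (it must be an \emph{upper} bound scaling like $\eta^2 \mathrm{Var}$, not the Hoeffding $\eta^2/2$ bound, which would discard the variance factor and break the argument), and checking that the self-referential inequality is genuinely contractive exactly in the regime $\eta \le 1/18$.
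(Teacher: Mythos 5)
Your proof is correct and follows essentially the same route as the paper's: both arguments reduce to the self-bounding inequality $S \le \frac{\log|\F|}{\eta} + 9\eta S$ for $S = \sum_t \E\bigl[\E_{f\sim q_t}[(f(z_t)-f_\star(z_t))^2]\bigr]$, use realizability to cancel the cross term $2(f-f_\star)(f_\star-y_t)$ in expectation, and obtain stability from the same per-step estimate $KL(q_t\|q_{t+1}) \le 9\eta^2\,\E_{f \sim q_t}[(f(z_t)-f_\star(z_t))^2]$. The only difference is cosmetic: you reach the $9\eta^2$ factor via a Bennett-type bound on the centered log-MGF together with the variance identity, whereas the paper applies $e^x \le 1+x+x^2$ directly to the excess loss $\ell_t(f)-\ell_t(f_\star)$ and bounds its square by $9(f(z_t)-f_\star(z_t))^2$; both yield the same constants and the same threshold $\eta \le 1/18$.
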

        
    \begin{proof}
        WLOG, since Hedge is invariant under adding a constant loss in each round we can subtract $(f_\star(z_t)-y_t)^2$ from the loss of all functions.
        In particular, after the subtraction, $f_\star$ has a cumulative loss of $0$.
        Therefore $q_{t}(f) \propto w_t(f)$, and $w_{t+1}(f) = w_t(f) e^{-\eta ((f(z_t)-y_t)^2 - (f_\star(z_t)-y_t)^2)}$.
        Denote $W_t = \sum_{f} w_t(f)$. 
        %

        We have, as $W_1 = \abs{\F}$ and $W_{T+1 } \geq 1$ (since $w_{t}(f_\star) = w_1(f_\star) =1$ for all $t$), that
        \[
            \log \frac{W_{T+1}}{W_1} 
            \ge 
            -\log \abs{\F}.
        \]
        On the other hand, for small enough $\eta$ (smaller than a constant),
        \begingroup\allowdisplaybreaks
        \begin{align*}
            \log \frac{W_{T+1}}{W_1} 
            &=
            \sum_{t=1}^T \log \frac{W_{t+1}}{W_t} \\
            &=
            \sum_{t=1}^T \log \E_{f \sim q_t}\brk[s]*{ e^{-\eta ((f(z_t)-y_t)^2 - (f_\star(z_t)-y_t)^2)}} \\
            &\le
            \sum_{t=1}^T \log \E_{f \sim q_t}\brk[s]*{ (1 - \eta ((f(z_t)-y_t)^2 - (f_\star(z_t)-y_t)^2) + \eta^2 ((f(z_t)-y_t)^2 - (f_\star(z_t)-y_t)^2)^2) }
            \tag{$e^x \le 1+x+x^2$ for $x < 1$} \\
            &\le
            \sum_{t=1}^T -\eta \E_{f \sim q_t}[(f(z_t)-y_t)^2 - (f_\star(z_t)-y_t)^2]
            +
            \eta^2 \E_{f \sim q_t} [(f(z_t)-y_t)^2 - (f_\star(z_t)-y_t)^2)^2]
            \tag{$\log(1+x)\le x$} \\
            &=
            \sum_{t=1}^T -\eta \E_{f \sim q_t} [(f(z_t)-f_\star(z_t))^2 + 2 (f(z_t)-f_\star(z_t))(f_\star(z_t)-y_t)]\\
            &\qquad +
            \eta^2
            \E_{f \sim q_t} [((f(z_t)-f_\star(z_t))^2 + 2 (f(z_t)-f_\star(z_t))(f_\star(z_t)-y_t))^2] \\
            &\le
            \sum_{t=1}^T -\eta \E_{f \sim q_t} [(f(z_t)-f_\star(z_t))^2 + 2 (f(z_t)-f_\star(z_t))(f_\star(z_t)-y_t)]\\
            &\qquad +
            9 \eta^2
            \E_{f \sim q_t} (f(z_t)-f_\star(z_t))^2.
        \end{align*}
        \endgroup

    Rearranging, we obtain that
    \begingroup\allowdisplaybreaks
    \begin{align*}
        &\sum_{t=1}^T \E_{f \sim q_t} [(f(z_t)-f_\star(z_t))^2 + 2 (f(z_t)-f_\star(z_t))(f_\star(z_t)-y_t)] \\
        &\qquad \le
        \frac{\log \abs{\F}}{\eta} 
        + 9\eta \sum_{t=1}^T \E_{f \sim q_t} \brk[s]*{ (f(z_t)-f_\star(z_t))^2}.
    \end{align*}
    \endgroup

    Taking expectation over $y_1,\ldots,y_T$:
    \begingroup\allowdisplaybreaks
    \begin{align*}
        \E_{y_1,\ldots,y_T} \brk[s]*{\sum_{t=1}^T \E_{f \sim q_t}\brk[s]*{(f(z_t)-f_\star(z_t))^2}} 
        \le
        \frac{\log \abs{\F}}{\eta} 
        + 9\eta \E_{y_1,\ldots,y_T} \brk[s]*{\sum_{t=1}^T \E_{f \sim q_t} \brk[s]*{(f(z_t)-f_\star(z_t))^2}}.
    \end{align*}
    \endgroup
    If, suppose $\eta \le 1/18$ then we immediately obtain
    \[
        \E_{y_1,\ldots,y_T} \brk[s]*{\sum_{t=1}^T \E_{f \sim q_t} \brk[s]*{ (f_t(z_t)-f_\star(z_t))^2 }}
        \le
        \frac{2\log \abs{\F}}{\eta},
    \]
    and the expected regret of \cref{alg:hedge-Vovk} can now be bounded by 
    \begingroup\allowdisplaybreaks
    \begin{align*}
        \sum_{t=1}^T \E \brk[s]*{(f_t(z_t)-f_\star(z_t))^2} 
        &=
        \sum_{t=1}^T \E \brk[s]*{\brk*{\sum_{f \in \F} q_t(f) \brk*{f(z_t) - f_\star(z_t)}}^2} \\
        &\leq
        \sum_{t=1}^T \E \brk[s]*{\sum_{f \in \F} q_t(f) \brk*{f(z_t) - f_\star(z_t)}^2} \\
        &=
        \E_{y_1,\ldots,y_T} \brk[s]*{\sum_{t=1}^T \E_{f \sim q_t} \brk[s]*{ (f_t(z_t)-f_\star(z_t))^2 }} \\
        &\leq
        \frac{2\log \abs{\F}}{\eta},
    \end{align*}
    \endgroup
    where we have used Jensen's inequality. This concludes the proof of part $1.$ of the theorem.
    
    For the second part, we observe that
    \begingroup\allowdisplaybreaks
    \begin{align*}
        KL(q_t \| q_{t+1}) 
        &=
        \E_{f \sim q_t} \brk[s]*{ \log \frac{q_t(f)}{q_{t+1}(f)} }\\
        &=
        \eta \E_{f \sim q_t} \brk[s]*{(f(z_t) - y_t)^2} + \log \E_{f \sim q_t} e^{-\eta (f(z_t)-y_t)^2} \\
        &=
        \eta \E_{f \sim q_t} \brk[s]*{f(z_t) - y_t)^2 - (f_\star(z_t)-y_t)^2} + \log \E_{f \sim q_t} \brk[s]*{e^{-\eta ((f(z_t)-y_t)^2 - (f_\star(z_t)-y_t)^2)}} \\
        &\le
        \eta \E_{f \sim q_t} \brk[s]*{(f(z_t) - y_t)^2 - (f_\star(z_t)-y_t)^2} \\
        &\qquad + 
        \log \E_{f \sim q_t} \brk[s]*{ 1 - \eta ((f(z_t)-y_t)^2 - (f_\star(z_t)-y_t)^2) + \eta^2 ((f(z_t)-y_t)^2 - (f_\star(z_t)-y_t)^2)^2} 
        \tag{$e^x \le 1+x+x^2$ for $x < 1$} \\
        &\le
        \eta^2 \E_{f \sim q_t} \brk[s]*{(f(z_t)-y_t)^2 - (f_\star(z_t)-y_t)^2)^2}
        \tag{$\log(1+x)\le x$} \\
        &=
        \eta^2
         \E_{f \sim q_t} \brk[s]*{((f(z_t)-f_\star(z_t))^2 + 2 (f(z_t)-f_\star(z_t))(f_\star(z_t)-y_t))^2} \\
         &\le
        9\eta^2
         \E_{f \sim q_t}\brk[s]*{ (f(z_t)-f_\star(z_t))^2}.
    \end{align*}
    \endgroup

    Therefore, taking expectation over $y_1,\ldots,y_T$ and using part 1. we obtain
    \[
        \sum_{t=1}^T \E[KL(q_t \| q_{t+1})] \le 9\eta^2 \cdot \frac{2 \log \abs{\F}}{\eta}
        =
        18 \eta \log \abs{\F},
    \]
    yields the second part of the theorem.
    \end{proof}

\section{Lower Bounds}

\subsection[Proof of Lower Bound]{Proof of \cref{thm:lower-bound}}\label{appendix:lower-bound}

In this subsection, we present a lower bound indicating that an additional assumption on the oracle is necessary in order to obtain sub-linear regret in the general function approximation setting. The lower bound shows that with no additional assumption on the least squares oracle, any algorithm incurs linear regret in the presence of delayed feedback, even for a constant delay of $d=1$.

\begin{theorem}[Restatement of \cref{thm:lower-bound}]
    For any CMAB algorithm $\mathsf{ALG}$ in the function approximation setting (that is, $\mathsf{ALG}$ can only access $\F$ via the oracle) there exists a CMAB instance with constant delay $d=1$ over a realizable loss function class $\F$ with $|\F| = T+1$ with an online oracle $\OLSR^\F$ satisfying $\regret_T \brk*{\OLSR^\F} = 0$, on which $\mathsf{ALG}$ attains regret $\regret_T = \Omega(T)$.
\end{theorem}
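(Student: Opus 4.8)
The plan is to turn the unit delay against the algorithm by pairing a hard instance with an \emph{omniscient} oracle that is perfect on every graded example yet useless for the decision the algorithm must actually make. Concretely, I would use $T$ distinct contexts $x_1,\ldots,x_T$ (each seen once) and two actions: a \emph{safe} action whose loss equals $\tfrac12$ under every $f\in\F$, and a \emph{risky} action whose per-round loss lies in $\{0,1\}$ and encodes the identity of $f_\star$. The class $\F=\{f_0,\ldots,f_T\}$ (so $\abs{\F}=T+1$) is fixed so that all functions agree on the safe action; realizability then holds since $\ell=f_\star$ for the chosen $f_\star\in\F$. The point of the safe action is that any feedback produced by playing it is the constant $\tfrac12$ and therefore carries \emph{no} information about which $f_\star$ was selected.

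Next I would construct the oracle, exploiting that it may be chosen adversarially and need only satisfy $\regret_T(\OLSR^\F)=0$ on the realized sequence. I let it be omniscient: when graded on the round-$t$ example $(x_t,a_t,L_t)$ it returns a function that matches $f_\star$ at the played point, so $\hat f_t(x_t,a_t)=\ell(x_t,a_t)$ and the least-squares regret is exactly $0$. The delay is what makes this harmless to the oracle but fatal to the learner: at round $t$ the algorithm must commit to $a_t$ using only the \emph{stale} prediction $\hat f_{\tau^t}=\hat f_{t-1}$, whose value at the fresh context $x_t$ is entirely unconstrained by zero regret (that prediction is graded only at the earlier point $(x_{t-1},a_{t-1})$). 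I would therefore define $\hat f_{t-1}(x_t,\cdot)$ to point at the wrong action, making the risky action look bad exactly when it is good. Thus every prediction the algorithm is forced to use is systematically misleading about the current context, while the later prediction $\hat f_t$, which is correct at $x_t$, only becomes available one round too late — after $a_t$ has already been played. Since $\mathsf{ALG}$ can reach $\F$ only through the oracle, it cannot evaluate $f_\star(x_t,\cdot)$ on its own and has no other handle on the current loss at decision time.

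Finally I would lower bound the regret by $\Omega(T)$ via Yao's principle, drawing $f_\star$ so that the risky action is optimal on a constant fraction of the rounds in expectation, and arguing that the algorithm is suboptimal with constant probability at each round. The two ingredients are: (i) the only quantity pinned down by zero regret is the loss at the \emph{played} point, and the oracle reveals nothing beyond it — in particular, when the algorithm plays the safe action this is the uninformative constant $\tfrac12$; and (ii) conditioned on the resulting transcript (the sequence of oracle functions together with the safe-action feedback) the optimal action at the current fresh context is still essentially a coin flip, so every choice incurs $\Omega(1)$ expected regret. The hard part — and the real content of the proof — is establishing (ii) \emph{for arbitrary} algorithms with a class of size only $T+1$: I must design $\F$ and the omniscient oracle so that an algorithm cannot, by probing the risky action on a few rounds, identify $f_\star$ and thereafter predict the remaining contexts. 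The main obstacle is precisely this information-leakage issue, namely ensuring that the feedback exposed by zero regret (and by any risky probing the algorithm attempts) remains uninformative about the current optimum, so that the unit delay provably prevents $\mathsf{ALG}$ from ever acting on the oracle's correct-but-late predictions.
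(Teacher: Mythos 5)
Your high-level mechanism is the right one and matches the paper's: an oracle that is graded only at the played point $(x_t,a_t)$ can have exactly zero least-squares regret while its prediction for the fresh context $x_t$ arrives one round too late, and the instance is randomized so that at decision time the optimal action at $x_t$ is a coin flip. However, you explicitly leave open what you yourself call the ``hard part'' --- establishing that, with $\abs{\F}=T+1$, the transcript (oracle outputs plus any risky-action probing) remains uninformative about the optimum at the \emph{current} context --- and that is precisely the content of the proof, so as written there is a genuine gap. Two further issues: defining $\hat f_{t-1}(x_t,\cdot)$ to deterministically ``point at the wrong action'' is self-defeating, since the algorithm could simply invert the stale advice; and a class whose risky-action values ``encode the identity of $f_\star$'' across $T$ contexts with only $T+1$ members will generically leak information once a few rounds are probed, unless the values at distinct contexts are made independent.

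The paper closes exactly this gap with a concrete randomized construction that you could graft onto your skeleton. Take $\X=\{x_1,\dots,x_T\}$ presented in order, two actions, and $\F=\{f_1,\dots,f_T,f_\star\}$ where $f_\star(x_i,a_1)\sim\mathrm{Ber}(1/2)$ independently across $i$ with $f_\star(x_i,a_2)=1-f_\star(x_i,a_1)$, and each $f_i$ agrees with $f_\star$ at $x_i$ but is an independent $\mathrm{Ber}(1/2)$ at every other context. The oracle outputs $\hat f_t=f_t$, which has zero square-loss regret because $f_t(x_t,\cdot)=f_\star(x_t,\cdot)$; yet by round $t$ the algorithm has only seen $f_1,\dots,f_{t-1}$ and the realized losses at $x_1,\dots,x_{t-1}$, all of which are independent of $f_\star(x_t,\cdot)$ by construction. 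Hence whichever action is played incurs loss $1$ with probability $1/2$ while the other action has loss $0$, giving expected regret $\Omega(T)$, and the probabilistic method fixes a bad instance. The independence across contexts is what makes probing useless, and the fact that $f_t$ is revealed only after the delayed round-$t$ feedback is processed is what keeps the one correct prediction out of reach; neither point is secured by your proposal as it stands.
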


\begin{proof}
    Consider a CMAB instance over $\X = \brk[c]*{x_1,x_2,\ldots,x_T}$, $\A = \brk[c]*{a_1,a_2}$ and $\F = \brk[c]*{f_1,f_2,\ldots,f_T,f_\star}$, where $f_\star$ is the true loss function. The functions in $\F$ are sampled randomly as follows:
    \begin{align*}
        f_\star(x_i,a_1) &= \mathrm{Ber} \brk*{\frac12}, \quad f_\star(x_i,a_2) = 1 - f_\star(x_i,a_1), \quad i \in \brk[c]*{T}, \\
        f_i(x,a) &= 
        \begin{cases}
            f_\star(x,a), & x=x_i, \\
            \mathrm{Ber} \brk*{\frac12}, & otherwise,
        \end{cases}
        \quad i \in [T].
    \end{align*}
    The online sequence of contexts is defined by $x_1, x_2, x_3,\ldots,x_T$ in order. We consider an oracle which at round $t$
    outputs the function $\hat f_t = f_t$. It is easy to see that the least-squares regret of this oracle is zero because $f_t(x_t,\cdot) = f_\star(x_t,
    \cdot)$. Now, at round $t$, due to the delay, $\mathsf{ALG}$ only has relevant information on $x_t$ given by $\{ f_1(x_t,\cdot), \ldots, f_{t-1}(x_t,\cdot) \}$, all of which are random i.i.d. $\mathrm{Ber}\brk*{\frac12}$ random variables, with the true loss $f_\star(x_t,\cdot)$ being either $(1,0)$ or $(0,1)$ with equal probability and independently of the previous observations of $\mathsf{ALG}$.
    Therefore, however $\mathsf{ALG}$ chooses the next action $a^{(t)}$, with probability $\frac12$ it will incur a loss of $1$ while simultaneously the other action will have a loss of zero. This means that in expectation over the random construction of $\F$, the algorithm will incur $\Omega \brk*{\frac{T}{2}}$ regret. By the probabilistic method, we know that there exists a fixture of $\F$ depending on $\mathsf{ALG}$ on which $\mathsf{ALG}$ suffers linear regret, as claimed.
\end{proof}

\subsection{Lower Bounds for Contextual MAB with Delayed Feedback}\label{appendix-lower-bound-policy-function}

In this subsection, we establish lower bounds on the expected regret for CMAB with delayed feedback. 
Our construction is based on the approach of \cite{cesa2016delay} via a reduction from the full information variant with non-delayed feedback using a blocking argument. 

We begin with a lower bound for the policy class setting with a finite policy class $\Pi$, which relies on a reduction from the problem of (agnostic) prediction with expert advice, for which known lower bounds exist in the literature (see e.g. \cite{cesa2006prediction}). 

We then present a lower bound for the realizable function approximation setting with a finite loss function class $\F$, and for that we construct an explicit hard instance for the full-information non-delayed variant, with a regret lower bound of $\Omega \brk*{\sqrt{T \log |\F|}}$.

We remark that while a regret lower bound of $\Omega \brk*{\sqrt{KT} + \sqrt{D}}$ can be immediately inferred from the results of \cite{cesa2006prediction} who consider the special case of multi-armed bandits, our goal is to show that the dependence on $\log |\F|$ where $\F$ appears jointly with the delay dependence. While the dependence of $\sqrt{KT \log |\F|}$ is known to be tight for CMAB with general function approximation, it is nontrivial that the delay dependent term also contains a dependence on $\log |\F|$, which we prove in the construction that follows.

In our construction we consider the full feedback setting, where for each round $t$ and observed context $x_t \in \X$, the learner observes the entire loss vector $(\ell(x_t,a))_{a \in \A}$ after choosing an action $a_t \in \A$.

\begin{theorem}\label{thm:lower-bound-full-feedback}
    There exists a finite policy class $\Pi \subseteq \A^{\X}$ mapping contexts to actions, a delay sequence $(d_1,\ldots,d_T)$ with maximal delay $d$ and sum of delays $D = \Theta(dT)$ such that for any CMAB algorithm there is instance of the CMAB problem for which the algorithm incurs expected regret 
    \[
    \E[\Regrv_T] \geq \Omega(\sqrt{D\log|\Pi|}).
    \]
\end{theorem}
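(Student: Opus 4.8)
The plan is to prove the bound by a \emph{blocking} reduction to the classical (agnostic) prediction-with-expert-advice problem, in the spirit of \cite{cesa2016delay}. The starting observation is that in the full-feedback setting a CMAB instance with policy class $\Pi$, where $N = \abs{\Pi}$, is exactly an $N$-expert prediction problem: since the learner observes the entire loss vector $(\ell(x_t,a))_{a\in\A}$ it can evaluate $\ell(x_t,\pi(x_t))$ for every $\pi\in\Pi$, and it competes against the best policy. Moreover, any expert loss matrix $(g_t(i))_{t\in[T],\,i\in[N]}\in[0,1]^{T\times N}$ is realizable by such an instance: take $\A$ with $K=N$ actions, let $\pi_i$ be the constant policy playing $a_i$, and, using the round-dependent context $x_t$, set $\ell(x_t,a_i)=g_t(i)$. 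Hence it suffices to engineer a hard \emph{non-delayed} expert sequence and lift it through the delay by repetition.

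Concretely, I would fix the constant delay $d_t=d$ for all $t$, so that the maximal delay is $d$ and $D=dT=\Theta(dT)$, and partition the horizon into $m=\lfloor T/d\rfloor$ consecutive blocks of length $d$. Within block $j$ I repeat a single loss vector $g_j\in[0,1]^N$ drawn from a hard expert instance. Because the delay equals the block length, during block $j$ the learner has received feedback only from blocks $1,\dots,j-1$; consequently the distributions $p_{(j-1)d+1},\dots,p_{jd}$ it plays in block $j$ are measurable with respect to $g_1,\dots,g_{j-1}$ (and the algorithm's own randomness) alone. Given any delayed algorithm $A$, I define a non-delayed forecaster $B$ that at step $j$ outputs the block-averaged distribution $q_j=\tfrac1d\sum_{r=1}^{d}p_{(j-1)d+r}$; by the measurability just noted, $q_j$ is determined before $g_j$ is revealed, so $B$ is a legitimate non-anticipating forecaster. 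A pathwise computation then yields the exact identity $\Regrv_T(A)=d\cdot\Regrv_m(B)$, since the loss of $A$ in block $j$ equals $\sum_{r=1}^d\langle p_{(j-1)d+r},g_j\rangle=d\,\langle q_j,g_j\rangle$ while every expert's cumulative loss over the block scales by the same repetition factor $d$.

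With the identity in hand, I would invoke the minimax lower bound for agnostic prediction with expert advice (see \cite{cesa2006prediction}), which furnishes an oblivious randomized sequence $g_1,\dots,g_m$ on which \emph{every} non-anticipating forecaster, and in particular $B$, satisfies $\E[\Regrv_m(B)]\ge\Omega(\sqrt{m\log N})$ in the regime $m\gtrsim\log N$. Combining this with $\Regrv_T(A)=d\cdot\Regrv_m(B)$ gives $\E[\Regrv_T(A)]\ge\Omega(d\sqrt{m\log N})=\Omega(\sqrt{dT\log N})=\Omega(\sqrt{D\log\abs{\Pi}})$, using $m=\Theta(T/d)$ and $D=\Theta(dT)$, which is the claimed bound.

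I expect the main obstacle to be the reduction step rather than the expert lower bound itself: one must argue carefully that the exact choice ``delay $=$ block length'' freezes the learner's information within each block, so that the averaged iterate $q_j$ genuinely depends only on past blocks and $B$ is non-anticipating, and that the $d$-fold repetition transfers to an \emph{exact} (pathwise) $d$-fold amplification of the expert regret, including for the best-in-hindsight comparator. A secondary technical point is the validity regime of the expert bound, i.e.\ ensuring $m=\lfloor T/d\rfloor\ge\log N$ so that the $\sqrt{m\log N}$ rate is attained; for very large delays one restricts attention to the first $\Theta(d\log N)$ rounds (equivalently pads the remaining rounds with zero-loss steps), which affects only the constants and the $\Theta(\cdot)$ characterization of $D$.
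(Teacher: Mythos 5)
Your proposal is correct and follows essentially the same route as the paper: a blocking reduction that repeats each loss vector of a hard prediction-with-expert-advice instance for $\Theta(d)$ consecutive rounds, arranges the delays so that no feedback from the current block is available within it, and invokes the classical $\Omega(\sqrt{m\log N})$ expert lower bound on the induced $m$-round game. The only differences are in execution: the paper uses delays decreasing from $d$ to $0$ within each block (so $D=Td/2$) and asserts WLOG that the algorithm fixes a policy per block, whereas you use a constant delay and make that step rigorous via the block-averaged forecaster $q_j$ together with the exact identity $\Regrv_T(A)=d\cdot\Regrv_m(B)$ --- a slightly more careful rendering of the same argument.
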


\begin{proof}

    We observe that CMAB with a policy class can be viewed as a special case of the prediction with expert advice framework \cite{cesa2006prediction}, where each policy corresponds to an expert, provides a prediction for each context. Hence, the classical lower bound of $\Omega(\sqrt{T \log|\Pi|})$ for the full-information expert setting (see \citet{cesa2006prediction}, chapter 2) applies in the absence of delays.

    Returning to the delayed CMAB problem, construct a delay sequence $(d_1,\ldots,d_T)$ in which $d$ is the maximal delay and $D = \sum_{t=1}^T d_t =\Theta( d T)$ as follows:

    Divide the time horizon into $T/(d+1)$ blocks, each containing $d+1$ consecutive rounds. For each block $b \in \{0,1,\ldots, T/(d+1) -1\}$ and each round $\tau \in \{b(d+1), b(d+1) +1, \ldots, (b+1)(d+1) -1\}$, define the delay as $d_\tau = d - (\tau - b(d+1))$. That is, within each block, the delays decrease from $d$ to $0$, in this corresponding order. This also implies that $D = \frac{T}{d+1}\sum_{i=0}^d i = \frac{T}{d+1}\cdot\frac{(d+1)(d+0)}{2} =\frac{Td}{2} $. This construction ensures that feedback from all rounds within a block is revealed simultaneously at the end of the block.

    The loss sequence is constructed as follows: Consider the loss sequence $(\ell_1,\ldots,\ell_{T/(d+1)})$ given by a lower bound construction for prediction with expert advice over $T/(d+1)$ rounds. The loss of the first round of each block $b$ is defined to be $\ell_b$, and remains the same throughout the block. Now, note that given this construction, the algorithm essentially faces a prediction with expert advice problem over $T/(d+1)$ rounds (the rounds on which information is obtained), with loss values in the range $[0,d+1]$. We remark that we can assume without loss of generality that the algorithm fixes a policy $\pi_b$ at the start of block $b$ and uses it to play actions throughout the entire block, as it does not learn new information within the block.

    Thus, we can aggregate each block into a single ``super-round'' of a reduced expert problem. Specifically, for block $b$, define the aggregate loss of each expert $\pi$ as $\ell_b(\pi) = \sum_{\tau = b(d+1)}^{(b+1)(d+1) -1} \ell(x_\tau, \pi(x_\tau))$. Even if we allow the algorithm to observe full feedback, it essentially observes the full aggregated loss vector over actions in each block, so this construction corresponds to a well-defined instance of prediction with expert advice over the $T/(d+1)$ rounds which are the initial rounds of the blocks.

    The resulting reduced problem has $T/(d+1)$ rounds with losses in $[0, d+1]$. Applying the lower bound from \citet{cesa2006prediction} to this reduced problem yields:
    \begingroup\allowdisplaybreaks
    \begin{align*}
        \E[\regret_T] 
        &\geq 
        \Omega\left((d+1) \cdot \sqrt{\frac{T}{d+1} \log |\Pi|} \right)
        =
        \Omega\left(\sqrt{(d+1)T \log |\Pi|} \right)
        =
        \Omega\left(\sqrt{D \log |\Pi|} \right),
    \end{align*}
    \endgroup
    which completes the proof.
\end{proof}

We now combine this result with the classical lower bound of $\Omega(\sqrt{KT\log|\Pi|})$ for CMAB with bandit feedback and a finite policy class, which is based on reductions from prediction with expert advice (see, e.g., \cite{DBLP:conf/colt/McMahanS09,auer2002nonstochastic,DBLP:journals/jmlr/BeygelzimerLLRS11,cesa2006prediction}). This yields the following lower bound for CMAB with delayed feedback in the policy class setting:

\begin{corollary}\label{corl-appendix-lb-policy}
    For CMAB with delayed bandit feedback and a finite policy class $\Pi$, the expected regret satisfies
    \begingroup
    \allowdisplaybreaks
    \begin{align*}
        \E[\Regrv_T]
        \geq 
        \Omega\left( \sqrt{TK \log|\Pi|} + \sqrt{D \log|\Pi|} \right).
    \end{align*}
    \endgroup
\end{corollary}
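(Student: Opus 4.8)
The plan is to derive the two-term bound by combining two lower bounds, each already valid for the \emph{delayed bandit} CMAB problem, and then to use that the maximum of two nonnegative quantities is within a factor two of their sum. The conceptual content is therefore just an assembly of \cref{thm:lower-bound-full-feedback} with the classical bandit lower bound; the only work is realizing both terms on a single instance with the correct delay sum.

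First I would transfer the delay-dependent bound of \cref{thm:lower-bound-full-feedback} from the full-feedback to the bandit-feedback setting. Since bandit feedback is strictly less informative, any bandit algorithm can be run as a full-feedback algorithm that simply discards all coordinates of each loss vector except the played one; hence the full-feedback lower bound $\Omega(\sqrt{D\log|\Pi|})$ applies verbatim to bandit feedback, on the very same blocking instance whose delay sum is $\Theta(D)$. Second, I would invoke the classical $\Omega(\sqrt{KT\log|\Pi|})$ lower bound for non-delayed CMAB with bandit feedback and a finite policy class, available from the cited reductions from prediction with expert advice.

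The remaining step is to make both terms appear simultaneously on one instance whose total delay sum is $\Theta(D)$. I would split the horizon into two phases acting on disjoint context sub-spaces: on the first $T/2$ rounds use the classical bandit hard instance with zero delays, forcing $\Omega(\sqrt{KT\log|\Pi|})$; on the last $T/2$ rounds use the blocking construction of \cref{thm:lower-bound-full-feedback} with maximal delay $d=\Theta(D/T)$, forcing $\Omega(\sqrt{D\log|\Pi|})$ and contributing the entire delay sum $\Theta(D)$. Because the two phases act on disjoint contexts, feedback from one phase is uninformative for the other, so the expected regret is additive across phases and the two contributions add to the claimed bound. Equivalently, for each algorithm one may take the harder of the two stand-alone instances (after padding the zero-delay instance with the blocking phase so that both have delay sum $\Theta(D)$) and apply $\max(a,b)\ge\tfrac{1}{2}(a+b)$.

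The main obstacle is the bookkeeping of the $\log|\Pi|$ factor, so that it appears in \emph{both} terms while using a single policy class. With the splitting approach I would take $\Pi$ to be a product $\Pi_1\times\Pi_2$ of classes acting on the two context sub-spaces with $|\Pi_1|=|\Pi_2|=\sqrt{|\Pi|}$, so that $\log|\Pi_i|=\tfrac{1}{2}\log|\Pi|$ and each phase's bound carries a $\Theta(\log|\Pi|)$ factor; this affects only constants. The secondary point to verify is that the delay sum of the combined instance is dominated by the second phase and equals $\Theta(D)$, which is immediate since the first phase has zero delays.
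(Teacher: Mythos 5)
Your proposal is correct and follows essentially the same route as the paper, which simply asserts that \cref{corl-appendix-lb-policy} follows by combining \cref{thm:lower-bound-full-feedback} with the classical $\Omega(\sqrt{KT\log|\Pi|})$ bandit lower bound. The details you supply --- transferring the full-information lower bound to bandit feedback by information monotonicity, and realizing both terms on a single instance via a two-phase construction on disjoint contexts with a product policy class (or the $\max(a,b)\ge\tfrac12(a+b)$ argument) --- are the standard way to make that combination rigorous, which the paper leaves implicit.
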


To prove a corresponding regret lower bound for the realizable function approximation setting, we similarly require a regret lower bound of $\Omega \brk*{\sqrt{T \log |\F|}}$ for the full-information non-delayed variant of the problem. Such a lower bound, however, does not exist in the literature as far as we are aware, so we exhibit an explicit construction in the following lemma.

\begin{lemma}
    \label{lem:fa-full-info-lb}
    Let $\A = \{ a_1,a_2 \}$ be action set, and let $\X = \{ x_1,\ldots,x_n \}$ be a set of $n$ contexts where $n \leq T$. Then for any CMAB algorithm there exists a finite loss function class $\F \subseteq \{ \X \times \A \to [0,1] \}$ of size $|\F| = 2^n$ and a CMAB instance which is realizable with respect to $\F$, on which the expected regret of the CMAB algorithm is lower bounded by
    \begin{align*}
        \E[\Regrv_T] \geq \Omega \brk*{\sqrt{nT}} = \Omega \brk*{\sqrt{T \log |\F|}}.
    \end{align*}
\end{lemma}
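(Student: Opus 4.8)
The plan is to build a family of $2^n$ instances indexed by bit strings and reduce the problem to $n$ independent two‑action learning problems, one per context. Concretely, for each $b = (b_1,\ldots,b_n) \in \brk[c]*{0,1}^n$ define $f_b : \X \times \A \to [0,1]$ by $f_b(x_i,a_1) = \tfrac12 + \eps(2b_i-1)$ and $f_b(x_i,a_2) = \tfrac12 - \eps(2b_i-1)$ for a gap parameter $\eps$ to be fixed later, and set $\F = \brk[c]*{f_b : b \in \brk[c]*{0,1}^n}$, so that $\abs{\F} = 2^n$ and $\log\abs{\F} = \Theta(n)$. I would draw the true bit string $b^\star$ uniformly at random, let $f_\star = f_{b^\star}$, and take the realized losses $L(x_i,a)$ to be Bernoulli with mean $f_\star(x_i,a)$, drawn independently across rounds; this makes the instance realizable with respect to $\F$. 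For the context schedule I present each context exactly $m = T/n$ times (recall $n \leq T$), so each of the $n$ contexts is seen $m$ times.

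The key structural observation is that the instance splits into $n$ independent two‑action problems. Since the bits $b_1^\star,\ldots,b_n^\star$ are independent and the loss realizations across distinct contexts are independent, the full‑information feedback received on rounds with context $x_j$ carries no information about $b_i^\star$ for $j \neq i$. Writing $\Regrv_T = \sum_{i=1}^n \Regrv_T^{(i)}$, where $\Regrv_T^{(i)}$ is the regret accumulated on the $m$ rounds of context $x_i$, it then suffices to lower bound $\E[\Regrv_T^{(i)}]$ for each $i$: conditioning on all other bits and their observed losses reduces the $i$-th sub‑problem to a clean two‑environment testing problem in the single bit $b_i^\star$, so the $n$ sub‑bounds add without interaction.

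For the per‑context bound I would use Le Cam's two‑point method. On context $x_i$ the only unknown is which action is optimal (gap $2\eps$), and a crucial simplification of full information is that the observation each round is the entire loss vector, independent of the action played; hence the $m$-round observation law is simply a product of $m$ i.i.d. copies, with per‑round KL divergence $O(\eps^2)$ between $b_i=0$ and $b_i=1$ (Bernoulli means near $\tfrac12$ differing by $O(\eps)$). Letting $N_1$ be the number of $a_1$-plays on context $x_i$, the average regret over the two values of $b_i$ equals $\eps\brk*{m - \E_0[N_1] + \E_1[N_1]} \geq \eps m\brk*{1 - \mathrm{TV}(P_0^m,P_1^m)}$, and by Pinsker $\mathrm{TV}(P_0^m,P_1^m) \leq \sqrt{\tfrac12\, m\cdot O(\eps^2)}$. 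Choosing $\eps = c\sqrt{n/T} = c/\sqrt{m}$ for a small absolute constant $c$ makes this total variation at most $\tfrac12$, giving $\E[\Regrv_T^{(i)}] \geq \Omega(\eps m) = \Omega(\sqrt{T/n})$ under the uniform prior on $b_i$. Summing over the $n$ contexts yields $\E[\Regrv_T] \geq \Omega\brk*{n\sqrt{T/n}} = \Omega(\sqrt{nT}) = \Omega(\sqrt{T\log\abs{\F}})$, and the probabilistic method then extracts a single fixed $b^\star$ (hence a deterministic instance) achieving this regret.

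The main obstacle is the per‑context step: one must confirm that supplying both coordinates of the loss vector changes the single‑round KL by only a constant factor (keeping it $O(\eps^2)$), and carefully justify the conditioning argument so the $n$ lower bounds combine additively. The tuning $\eps \asymp 1/\sqrt{m}$ is precisely the threshold at which the learner cannot reliably identify the better action within $m$ samples, which is exactly what converts the $\Omega(\eps m)$ per‑context regret into the target rate $\sqrt{T\log\abs{\F}}$.
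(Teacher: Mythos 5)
Your proposal is correct and follows essentially the same route as the paper's proof: a class of $2^n$ functions encoding an independent two-action problem with gap $\Theta(\eps)$ at each of the $n$ contexts, $\eps \asymp \sqrt{n/T}$, a uniformly random choice of the true function, and the observation that $\Omega(1/\eps^2)$ samples per context are needed to identify the better action. The only differences are cosmetic and in your favor: you use a deterministic round-robin context schedule instead of the paper's uniformly random contexts, and you make the paper's appeal to ``standard arguments of statistical estimation'' explicit via Le Cam's two-point method and Pinsker's inequality.
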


\begin{proof}
    Across all of the instances which we construct, the context is chosen uniformly at random from $\X$. We define the function class $\F$ as the set of $2^n$ functions $f$ which, for each $x \in \X$, are defined via $f(x,a_i) = \frac12 - \eps$ and $f(x,a_j) = \frac12$ for the other action $a_j \neq a_i$ (that is, each function in $\F$ has a distinct choice of optimal actions across all $n$ contexts), and we choose $\eps = \sqrt{n/100T}$. 
    
    Prior to the interaction, a function $f_\star \in \F$ is selected uniformly at random and the losses are defined to be Bernoulli random variables according to $f_\star$, ensuring realizability holds. More specifically, $\ell(x,a)$ will be a Bernoulli random variable with parameter $f_\star(x,a)$ for all $x \in \X, a \in \A$.

    Now, by standard arguments of statistical estimation, since the true loss function $f_\star$ was sampled at random, as long as a given context $x \in \X$ has not appeared more than $\Omega( 1/\eps^2)$ times, the CMAB algorithm must incur  instantaneous regret of $\eps$ conditioned on this context. Since the contexts are sampled uniformly at random and the loss values for one context reveal no information about the loss for different contexts, the algorithm must incur expected regret of at least $\Omega(\eps t)$ on the first $t$ rounds as long as each context has been sampled $o(1/\eps^2)$ times. With high probability, all contexts are sampled sufficiently many times only after $t = \Omega \brk*{n/\eps^2}$ rounds, implying that the expected regret of the algorithm over $T$ rounds is lower bounded by
    \begin{align*}
        \E[\regret_T] \geq \Omega \brk*{\eps \cdot \frac{n}{\eps^2}} = \Omega \brk*{\frac{n}{\eps}} = \Omega \brk*{\sqrt{nT}},
    \end{align*}
    which concludes the proof.
\end{proof}

We remark that the construction in the above proof is similar to the lower bound given by \cite{agarwal2012contextual} for the bandit case, but here the proof is considerably simpler as the algorithm is not required to perform exploration in order to obtain sufficient feedback.


Thus, by combining the lower bound for contextual bandits with function approximation under bandit feedback \cite{agarwal2012contextual} with the delayed feedback result above using the same reduction as we described in the proof of \cref{thm:lower-bound-full-feedback}, we obtain:

\begin{corollary}\label{appendix-lb-f}
    For any CMAB algorithm in the realizable function approximation setting over finite loss function classes, there exists a finite function class $\F$ and a distribution over losses which is realizable by $\F$, for which the expected regret satisfies
    \begingroup
    \allowdisplaybreaks
    \begin{align*}
        \E[\Regrv_T]
        \geq 
        \Omega\left( \sqrt{TK \log|\F|} + \sqrt{D \log|\F|} \right).
    \end{align*}
    \endgroup
\end{corollary}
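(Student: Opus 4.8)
The plan is to assemble the two terms separately and then combine them on a single instance. The term $\sqrt{TK\log\abs{\F}}$ is already available from the non-delayed lower bound of \citet{agarwal2012contextual} for CMAB with bandit feedback and realizable function approximation: their construction yields a realizable instance on which every algorithm incurs $\E[\Regrv_T] \geq \Omega(\sqrt{TK\log\abs{\F}})$, and since introducing delays only withholds information from the learner, any amount of delay can be injected into this instance without decreasing its regret. Hence it remains only to produce the delay-dependent term $\sqrt{D\log\abs{\F}}$ on a realizable instance, after which I merge the two.

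For the delay term I would reuse the blocking reduction from the proof of \cref{thm:lower-bound-full-feedback} essentially verbatim, replacing the expert lower bound by the realizable full-information lower bound of \cref{lem:fa-full-info-lb}. Concretely, partition the horizon into $T/(d+1)$ blocks of length $d+1$ and assign within each block the decreasing delay pattern $d, d-1, \ldots, 0$, so that all feedback generated inside a block is revealed only at its end and $D = \Theta(dT)$. I hold the sampled context and its Bernoulli loss distribution fixed throughout each block; since no feedback arrives mid-block, the learner may be assumed without loss of generality to commit to a single predictor per block, so the interaction collapses to a full-information CMAB instance over $T/(d+1)$ super-rounds whose per-round losses lie in $[0,d+1]$. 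Applying \cref{lem:fa-full-info-lb} to this reduced instance, with the loss range rescaled by the factor $d+1$, gives $\E[\Regrv_T] \geq \Omega\brk*{(d+1)\sqrt{\tfrac{T}{d+1}\log\abs{\F}}} = \Omega(\sqrt{(d+1)T\log\abs{\F}}) = \Omega(\sqrt{D\log\abs{\F}})$. As this construction only uses full information, which is no harder than bandit feedback, the same bound holds a fortiori under delayed bandit feedback.

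To merge the two bounds on a single instance with a prescribed sum of delays $D$, I would use the standard max-to-sum argument together with a regime split. When $TK \geq D$, I take the bandit-hard instance of \citet{agarwal2012contextual} and inject a total delay of $D$ into it; since delays cannot decrease regret, the regret remains $\Omega(\sqrt{TK\log\abs{\F}}) \geq \tfrac12(\sqrt{TK\log\abs{\F}} + \sqrt{D\log\abs{\F}})$ while the instance now has the required sum of delays. When $D > TK$, I take instead the blocking instance above, whose sum of delays is $\Theta(D)$ and whose regret is $\Omega(\sqrt{D\log\abs{\F}}) \geq \tfrac12(\sqrt{TK\log\abs{\F}} + \sqrt{D\log\abs{\F}})$. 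In either regime we exhibit a realizable delayed-bandit instance with sum of delays $\Theta(D)$ on which $\E[\Regrv_T] \geq \Omega(\sqrt{TK\log\abs{\F}} + \sqrt{D\log\abs{\F}})$, which is the claim.

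The main obstacle I anticipate is the middle step, namely faithfully porting the blocking reduction into the contextual realizable setting. One must verify that freezing the context and its Bernoulli loss law across an entire block remains realizable by the same finite $\F$ used in \cref{lem:fa-full-info-lb}, that the aggregated super-round interaction is genuinely of the form that lemma requires, and that its applicability constraint $\log\abs{\F} \lesssim T/(d+1)$ holds in the relevant parameter range. The combination step is routine, but care is needed to ensure both instances are realizable and phrased over a single finite class $\F$ (or classes of equal cardinality) so that the statement can be asserted for one fixed $\F$ with the correct sum of delays.
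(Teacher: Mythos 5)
Your proposal follows essentially the same route as the paper: the $\sqrt{TK\log|\F|}$ term is imported from the bandit lower bound of \citet{agarwal2012contextual}, and the $\sqrt{D\log|\F|}$ term is obtained by running the blocking reduction of \cref{thm:lower-bound-full-feedback} on the realizable full-information construction of \cref{lem:fa-full-info-lb}. Your explicit regime-split combination and the realizability checks you flag are details the paper leaves implicit, but the argument is the same.
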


\end{document}